\documentclass[11pt]{article}
\usepackage[margin=1in]{geometry}
\usepackage{amsmath,amssymb,amsthm,mathtools,bm}
\usepackage{dsfont}
\usepackage{graphicx,booktabs,multirow,array,xcolor,adjustbox}
\usepackage{natbib}
\usepackage{algorithm,algpseudocode}
\usepackage[colorlinks=true,linkcolor=blue!50!black,citecolor=blue!50!black,urlcolor=blue!50!black]{hyperref}
\usepackage{cleveref}
\hypersetup{pdftitle={SelectPFN: Amortized Bayesian Variable Selection for Sparse Nonlinear Regression with Prior-Data Fitted Networks},pdfauthor={Bin Luo}}
\setcitestyle{authoryear,round}
\newtheorem{theorem}{Theorem}
\newtheorem{proposition}{Proposition}
\newtheorem{lemma}{Lemma}
\newtheorem{corollary}{Corollary}
\theoremstyle{definition}
\newtheorem{assumption}{Assumption}

\theoremstyle{remark}
\newtheorem{remark}{Remark}
\crefname{assumption}{Assumption}{Assumptions}\Crefname{assumption}{Assumption}{Assumptions}
\crefname{definition}{Definition}{Definitions}\Crefname{definition}{Definition}{Definitions}
\crefname{remark}{Remark}{Remarks}\Crefname{remark}{Remark}{Remarks}
\newcommand{\E}{\mathbb{E}}
\newcommand{\Prob}{\mathbb{P}}
\newcommand{\KL}{\mathrm{KL}}
\newcommand{\Var}{\mathrm{Var}}
\newcommand{\Bern}{\mathrm{Bern}}
\newcommand{\ind}{\mathds{1}}
\newcommand{\MPM}{\mathrm{MPM}}
\newcommand{\FDR}{\mathrm{FDR}}
\newcommand{\indep}{\perp\!\!\!\perp}

\title{SelectPFN: Amortized Bayesian Variable Selection\\ for Sparse Nonlinear Regression\\ with Prior-Data Fitted Networks}
\author{Bin Luo\\ School of Data Science and Analytics\\ Kennesaw State University\\ \texttt{bluo@kennesaw.edu}}
\date{\today}

\begin{document}
\maketitle

\begin{abstract}
Tabular foundation models based on prior-data fitted networks (PFNs) predict on a new data set in a single forward pass, but they cannot report which inputs their predictions depend on. We propose SelectPFN, a PFN that performs Bayesian variable selection and prediction jointly for sparse nonlinear regression $Y=f(X_S)+\varepsilon$, in which the response depends on a small active set $S$ of the $p$ inputs, $|S|\ll p$. Given a training set, SelectPFN returns posterior inclusion probabilities for every variable and predictive draws, each computed from a sampled subset only, so that excluded variables provably cannot affect the prediction. We show that this selection-restricted prediction is exactly what the Bayesian model prescribes. The network is pretrained once on synthetic tasks from a meta-prior built around a group spike-and-slab sparse-input neural network and needs no data-set-specific fitting. The column-permutation equivariance of the architecture, combined with a random interleaving of original and knockoff columns, yields knockoff statistics with exact finite-sample false discovery rate control even when the meta-prior is wrong. We also develop theory for what amortization can and cannot guarantee: a Bayes-regret bound for the amortized median probability model; a two-sided characterization showing that the information index $n\kappa_j^2/\sigma^2$ (sample size times the squared signal strength of the variable, divided by the noise variance) needed to detect a variable grows like $(2/\lambda_j)\log(1/\omega)$, where $\omega$ is the prior weight of structures able to use the variable as the truth does and $\lambda_j\le1$ is an efficiency; and a negative result showing that average-case training does not transfer to frequentist guarantees at a fixed truth. On out-of-prior nonlinear designs with $p$ up to 500, SelectPFN improves the area under the precision--recall curve by 0.23--0.28 over the lasso, BART, DART, LassoNet and a group-penalized sparse neural network, and by 0.16 over TabPFN with permutation importance (all Holm-adjusted $p<10^{-100}$), at a false discovery proportion of about 0.07, in 0.04 seconds per data set (more than 300 times faster than DART). With real covariate matrices from four public data sets the advantage persists (0.17--0.26). With real responses, it selects about two variables, of which only 11\% are known nulls, against 28--60\% for competing methods. The theory is also borne out: giving interaction-only structures weight in the meta-prior, as the prior-weight law prescribes, raises detection of variables that act only through an interaction from 0.06 to 0.36, and to 0.52 with a larger network; in a controlled sweep of this prior weight alone, detection rises monotonically with it while the false-selection rate stays unchanged.

\end{abstract}

\noindent\textbf{Keywords:} amortized inference; Bayesian variable selection; high-dimensional regression; sparse nonlinear regression; prior-data fitted networks; tabular foundation models; model-X knockoffs; posterior inclusion probabilities.

\section{Introduction}\label{sec:intro}

Identifying which of many candidate variables drive a response is a basic task of modern data analysis. It arises whenever the number of candidate predictors $p$ is comparable to or exceeds the sample size $n$: in genomics, where thousands of gene-expression or genotype measurements are screened for association with a disease \citep{golub1999molecular,wu2009gwas,guan2011bvsr}; in medical imaging, where voxel-level or radiomic features are related to clinical outcomes \citep{grosenick2013graphnet,lambin2017radiomics}; in biomedical research more broadly, where high-dimensional covariates enter survival and risk models \citep{tibshirani1997cox}; and in economics, where a few relevant controls must be chosen from many \citep{belloni2014inference}. Given $D=\{(x_i,y_i)\}_{i=1}^n$ with $x_i\in\mathbb R^p$, the goal is both accurate prediction and the identification of a small subset of variables that carries the information about the response. The statistical theory of this regime is by now mature for linear models \citep{fan2008sis,fan2010overview,buhlmann2011book,hastie2015sparsity,wainwright2019hds}, and it has more recently been extended to nonlinear models: sparse additive models \citep{ravikumar2009spam,meier2009hdam}, nonparametric regression with variable selection \citep{lafferty2008rodeo,yang2015minimax,jiang2021gpselection}, and sparse-input neural networks \citep{liang2018bnn,sun2022consistent,luo2025gcsnn}. In applications, a selection method has to meet three needs at once. It must be flexible enough to capture nonlinear and interaction effects, because a variable that acts only through an interaction carries no linear signal. It must be fast and free of expert tuning, so that it can be applied routinely and to many data sets. And it must come with guarantees on false selections, so that the selected variables can be trusted.

Existing methods meet these needs only in part (\Cref{sec:related}). Penalized and Bayesian selection for linear models \citep{tibshirani1996lasso,fan2001scad,zhang2010mcp,george1993,rockova2018ssl} is fast and well understood, but a variable that acts only through a nonlinear or interaction effect carries no linear signal and is missed. Sparse additive models \citep{lin2006cosso,ravikumar2009spam}, Bayesian regression trees \citep{chipman2010bart,linero2018dart} and sparse neural networks \citep{lemhadri2021lassonet,luo2025gcsnn,sun2022consistent} capture such effects, but each solves a new, tuning-sensitive optimization or sampling problem for every data set, which is slow when $p$ is large and gives no finite-sample error control. Knockoffs \citep{barber2015knockoff,candes2018panning} supply such control, but their power rests on a feature statistic that must itself be fitted to each data set. Tabular foundation models built on prior-data fitted networks \citep[PFNs;][]{muller2022pfn}, such as TabPFN \citep{hollmann2023tabpfn,hollmann2025nature}, remove per-data-set fitting altogether and predict in a single forward pass, but they use every column and cannot say which variables a prediction depends on, and the first amortized selectors \citep{alrawkan2026linear,schiekiera2026linearpfn} are restricted to linear models, and LinearPFN to few variables. What is missing is a selector for sparse nonlinear regression that is amortized, integrated with prediction, and equipped with guarantees that do not depend on how well the network was trained.

We propose SelectPFN, a PFN that performs Bayesian variable selection and prediction jointly. Selection in SelectPFN is \emph{integrated} rather than post hoc: the predictor has the form $\hat y_*=g_\theta(x_{*,\hat S},D_{\hat S})$ for a test point $x_*$, where $D_{\hat S}$ keeps only the columns in $\hat S$, so a variable outside the selected set $\hat S$ cannot influence the prediction. The network is pretrained once on synthetic regression tasks drawn from a sparse nonlinear meta-prior, built around a group spike-and-slab sparse-input neural network. For a new data set, one forward pass returns posterior inclusion probabilities (PIPs) for all variables. Subsets of variables are then drawn from these PIPs, a second pass predicts from the columns of each subset only, and the predictions are averaged over the subsets, which is Bayesian model averaging (BMA). This restriction is exactly what the Bayesian model prescribes, because given the active set the posterior predictive depends on the data only through the active columns. A knockoff mode adds exact finite-sample control of the false discovery rate (FDR) that holds whatever the network has learned. Because a trained network is a fixed map from data sets to PIPs, its analysis becomes a question about amortization: what a small training loss guarantees for selection, which variables can be found at all, and what cannot be guaranteed for a fixed truth. We answer these questions and evaluate the method in extensive simulations and on real data.

\paragraph{Contributions.}
\begin{enumerate}
\item \emph{Method.} We formulate amortized Bayesian variable selection for sparse nonlinear regression. SelectPFN is a two-pass PFN: a selection pass returns PIPs, and a prediction pass sees only the selected columns, which, given the active set, is exactly the Bayesian predictive (\Cref{prop:restricted}). Both heads are trained with strictly proper scoring rules whose population minimizers are the PIPs and the subset-conditional posterior predictive (\Cref{prop:minimizers}). Sparsity enters through the meta-prior, a group spike-and-slab sparse-input network mixed with additive, interaction, tree, Gaussian-process and linear families, not through a penalty on network weights.
\item \emph{Exact error control.} With random interleaving of original and knockoff columns, the knockoff statistic $W_j=q_j-\tilde q_j$, where $q_j$ and $\tilde q_j$ are the network's inclusion probabilities for $x_j$ and its knockoff, yields exact finite-sample FDR control for any network weights and any meta-prior (\Cref{thm:knockoff}).
\item \emph{Theory of amortized selection.} The selection loss decomposes into a Bayes floor plus an amortization gap, which bounds the Bayes regret of the amortized median probability model. The information needed to detect a variable grows like $(2/\lambda_j)\log(1/\omega)$, where $\omega$ is the prior weight of structures that can use the variable as the truth does and $\lambda_j\le1$ is an efficiency, which explains why the meta-prior must give explicit weight to interaction-only structures, as an ablation confirms. Average-case training cannot by itself deliver frequentist selection consistency (\Cref{sec:theory}).
\item \emph{Numerical evidence.} In simulations from nonlinear designs that are not part of the meta-prior, SelectPFN selects variables more accurately than fifteen penalized, Bayesian, sparse-neural-network and foundation-model methods, at a fraction of their cost; its inclusion probabilities are close to an exactly enumerable posterior; and a controlled study confirms that the prior weight governs detection as the theory predicts (\Cref{sec:experiments}). The advantage carries over to real covariate matrices and real responses (\Cref{sec:exp-e7}).
\end{enumerate}

\Cref{sec:related} reviews related work. \Cref{sec:method} describes the method, \Cref{sec:theory} develops the theory, \Cref{sec:experiments} reports simulation studies, \Cref{sec:exp-e7} analyzes real data, and \Cref{sec:discussion} discusses extensions. Implementation details, proofs and additional results are in the appendices.

\section{Related work}\label{sec:related}

\subsection{Penalized and Bayesian selection in linear models}\label{sec:related-linear}
Variable selection in linear models has been studied for three decades, along two complementary lines: penalized regression, which selects variables by shrinking some coefficients exactly to zero, and Bayesian variable selection, which places a prior on which variables are active and reports posterior probabilities. In the first line, the lasso \citep{tibshirani1996lasso} estimates and selects simultaneously through an $\ell_1$ penalty, but it shrinks large coefficients and is selection-consistent only under restrictive irrepresentability conditions \citep{zhao2006model}. The adaptive lasso \citep{zou2006adaptive} reweights the penalty to recover the oracle property, and the nonconvex SCAD \citep{fan2001scad} and MCP \citep{zhang2010mcp} penalties remove the bias directly; the elastic net \citep{zou2005enet} stabilizes selection among correlated predictors, and the group lasso \citep{yuan2006group} selects predefined groups of variables. When $p\gg n$, sure independence screening \citep{fan2008sis} reduces the dimension before any of these is applied. The second line, Bayesian variable selection, places a prior on inclusion indicators $\gamma\in\{0,1\}^p$ \citep{mitchell1988,george1993} and reports posterior inclusion probabilities $\Prob(\gamma_j=1\mid D)$, which quantify uncertainty about each variable and lead to the median probability model \citep{barbieri2004}. Stochastic search over the $2^p$ models is slow and mixes poorly, and the spike-and-slab lasso \citep{rockova2018ssl} recovers fast posterior modes by optimization, at the price of the full posterior. All of these methods assume that the response is linear in the covariates: a variable that affects the response only through a nonlinear function or through an interaction with another variable can have no linear association with the response at all, and is then missed, and each analysis must be refitted and retuned for every new data set.

\subsection{Selection in nonlinear models}\label{sec:related-nonlinear}
To capture nonlinear and interaction effects, selection has been extended beyond linear models along three lines of increasing flexibility: sparse additive models, which allow each variable its own nonlinear effect; tree ensembles, which also capture interactions; and sparse neural networks, which can represent general nonlinear functions. Each line has penalized and Bayesian versions. In the first line, COSSO \citep{lin2006cosso} and sparse additive models \citep{ravikumar2009spam} fit a smooth function for each variable and set whole functions to zero, but by construction they cannot detect a variable that matters only through an interaction with another variable. Model-free screening by distance correlation \citep{li2012dcsis} or HSIC \citep{yamada2014hsic} relaxes the additive assumption and detects general dependence, but it only ranks variables and provides no fitted model or selected set. The second line, tree ensembles, captures interactions automatically. Random forests \citep{breiman2001rf} rank variables by importance scores computed after fitting, which have no probabilistic meaning; BART \citep{chipman2010bart} places a prior on a sum of trees, and DART \citep{linero2018dart} adds a sparsity-inducing prior on how often each variable is used for splitting, so that the posterior indicates which variables are included. The third line makes neural networks sparse in their inputs. On the penalized side, a group penalty on all outgoing weights of an input removes that input from the network: LassoNet \citep{lemhadri2021lassonet} does so through a hierarchical skip connection, and concave group penalties \citep{luo2025gcsnn} avoid the over-shrinkage that leads the group lasso to retain unimportant variables. On the Bayesian side, spike-and-slab or horseshoe priors on the weights or hidden units give sparse Bayesian neural networks \citep{ghosh2019horseshoe,bai2020efficient}, with selection-consistency guarantees for spike-and-slab priors on the input layer \citep{liang2018bnn,sun2022consistent}. All three lines are flexible, but every method in them solves a new nonconvex optimization or Markov chain Monte Carlo problem for each data set, with tuning parameters chosen by cross-validation; the cost grows quickly with $p$, samplers over inclusion indicators mix poorly when $p\gg n$, and none of these methods controls false selections in finite samples.

\subsection{Finite-sample error control with knockoffs}\label{sec:related-knockoffs}
Finite-sample control of false selections is provided by the knockoff framework. Fixed-design knockoffs \citep{barber2015knockoff} control the FDR in linear Gaussian models with $n\ge p$; model-X knockoffs \citep{candes2018panning} remove the model assumption by requiring knowledge of the covariate distribution instead, and then accept any feature statistic that is antisymmetric in swapping a variable with its knockoff. Deep knockoffs \citep{romano2020deepknockoffs} construct knockoffs for non-Gaussian covariates, DeepPINK \citep{lu2018deeppink} uses a neural network as the feature statistic, and derandomization \citep{ren2023derandomizing} reduces the variability that comes from a single knockoff draw. Validity in this framework does not depend on the model, but power depends entirely on the feature statistic. Flexible statistics such as DeepPINK must be trained for every data set, and no statistic computed in a single pass of a pretrained network has been available.

\subsection{Amortized inference and prior-data fitted networks}\label{sec:related-pfn}
Amortized inference replaces per-data-set computation by a network trained once on simulated data. In simulation-based inference \citep{cranmer2020sbi}, networks such as BayesFlow \citep{radev2022bayesflow} learn the posterior of the parameters of a fixed simulator. Prior-data fitted networks \citep{muller2022pfn} go further and train a transformer across data sets drawn from a meta-prior, so that one forward pass approximates the posterior predictive $p(y_*\mid x_*,D)$ for a new data set $D$ and test point $x_*$. TabPFN \citep{hollmann2023tabpfn,hollmann2025nature} showed that this approach is competitive with tuned gradient boosting on small and medium tabular problems. On the theory side, \citet{nagler2023pfn} analyzed the bias and variance of PFN predictions, and \citet{melnychuk2026pfn} showed that the prior need not be washed out by the data. Relevance can be extracted from a PFN only after the fact, by Shapley values \citep{lundberg2017shap} or by a second network trained to predict attributions zero-shot \citep{fonseca2026explainerpfn}, and \citet{zhao2026alignment} studied how transformers trained on sparse linear priors suppress irrelevant features internally. A PFN nevertheless attends to every column: post-hoc relevance does not restrict the prediction to the variables it reports, and it comes with neither inclusion probabilities nor error guarantees.

\subsection{Amortized variable selection}\label{sec:related-amortized}
Only very recently has selection itself been amortized. \citet{alrawkan2026linear} train a classifier on simulated linear models to label predictors as active from least-squares summaries, and \citet{maceda2024vanbayes} estimate PIPs for sparse linear regression with an amortized network. LinearPFN \citep{schiekiera2026linearpfn} is pretrained on a spike-and-slab prior over main effects and pairwise interactions and returns PIPs, coefficients and predictions in one pass, for up to $p=30$ variables. PACE \citep{qin2026pace} places a frozen tabular-foundation-model column encoder in front of an existing screening statistic, so the selection rule itself is not learned. Beyond regression, amortized causal discovery \citep{lorch2022avici} learns edge probabilities for nonlinear structural models with few variables, and amortized model comparison \citep{radev2023amortized} returns posterior probabilities over a small, fixed set of models. These methods are linear, limited to few variables or to a fixed set of models, or do not couple selection to prediction, and none comes with theory for the amortized posterior. SelectPFN is designed to close this gap for sparse nonlinear regression.

\section{Method}\label{sec:method}

\subsection{Problem setup}
Let $D=\{(x_i,y_i)\}_{i=1}^n$, $x_i\in\mathbb R^p$, be a training set and $x_*$ a test covariate. For $S\subseteq\{1,\dots,p\}$ write $x_S$ for the subvector indexed by $S$ and $D_S=\{(x_{i,S},y_i)\}_{i=1}^n$ for the data restricted to the columns in $S$. Let $\gamma\in\{0,1\}^p$ be inclusion indicators and $S(\gamma)=\{j:\gamma_j=1\}$. We consider the sparse nonlinear model
\begin{equation}\label{eq:model}
y_i=f\big(x_{i,S(\gamma)}\big)+\varepsilon_i,\qquad \varepsilon_i\overset{\text{iid}}{\sim}P_\varepsilon,\qquad |S(\gamma)|\ll p,
\end{equation}
where $f$ is an unknown, possibly highly nonlinear function of the active variables that may contain interactions of any order.

\subsection{Background: prior-data fitted networks}\label{sec:pfn-background}
\paragraph{Objective.} Let $\Pi$ be a prior over data-generating processes $\phi$ (for example $\phi=(f,P_X,P_\varepsilon)$, with $P_X$ the law of the covariates), and let a task consist of a training set $D=\{(x_i,y_i)\}_{i=1}^n$ and a query pair $(x_*,y_*)$ drawn independently from $P_\phi$ given $\phi\sim\Pi$. The Bayesian posterior predictive is
\[
p(y_*\mid x_*,D)=\int p_\phi(y_*\mid x_*)\,\Pi(d\phi\mid D),
\]
where $p_\phi(\cdot\mid x_*)$ is the conditional density of $y_*$ given $x_*$ under $P_\phi$ and $\Pi(d\phi\mid D)$ is the posterior distribution of $\phi$ given $D$. It is intractable for flexible priors. A prior-data fitted network \citep[PFN;][]{muller2022pfn} is a parametric map $g_\theta(\cdot\mid x_*,D)$ from a training set and a query covariate to a probability density over $y_*$, trained by minimizing the prior-averaged log loss
\begin{equation}\label{eq:pfn-loss}
\mathcal L(\theta)=\E_{\phi\sim\Pi}\,\E_{D,(x_*,y_*)\sim P_\phi}\big[-\log g_\theta(y_*\mid x_*,D)\big]
=\E\big[\KL\big(p(\cdot\mid x_*,D)\,\|\,g_\theta(\cdot\mid x_*,D)\big)\big]+C,
\end{equation}
where the expectation in the second form is over the prior predictive law of $(x_*,D)$ and $C$ does not depend on $\theta$. The loss is therefore minimized, over all measurable maps, by $g=p(\cdot\mid x_*,D)$. In practice \eqref{eq:pfn-loss} is minimized by stochastic gradient descent over batches of tasks drawn afresh from $\Pi$. Once trained, $\theta$ is fixed, and for a new data set the approximate posterior predictive $g_\theta(\cdot\mid x_*,D)$ is computed by a single evaluation of the network; the cost of inference is paid once, in training, and \emph{amortized} over all future data sets.

\paragraph{Architecture.} The network operates on the data table. Write $N=n+m$ for the number of training and query rows, with $m$ the number of query rows, and $p$ for the number of columns. Each cell $(i,j)$ is embedded as
\[
h^{(0)}_{ij}=e_x(x_{ij})+\ind\{i\le n\}\,e_y(y_i)+\ind\{i>n\}\,e_{\varnothing},\qquad i\le N,\ j\le p,
\]
where $e_x,e_y:\mathbb R\to\mathbb R^d$ are learned maps and $e_\varnothing\in\mathbb R^d$ is a learned token marking an unknown response; one additional column of response cells is read out for prediction. For a set of tokens $\{h_k\}$, multi-head attention (MHA) computes, for each head, $\mathrm{Attn}(h_k)=\sum_la_{kl}W_Vh_l$ with learned matrices $W_Q,W_K,W_V$ and weights $a_{kl}=\operatorname{softmax}_l\big((W_Qh_k)^\top W_Kh_l/\sqrt{d}\big)$, and concatenates the heads. Each layer applies, with layer normalization and residual connections,
\begin{align*}
h_{i\cdot}&\leftarrow h_{i\cdot}+\mathrm{MHA}_{\rm row}\big(h_{i\cdot}\big), &&\text{attention within row } i,\\
h_{\cdot j}&\leftarrow h_{\cdot j}+\mathrm{MHA}_{\rm col}\big(h_{\cdot j}\big), &&\text{attention within column } j\text{, keys and values from rows } i\le n,\\
h_{ij}&\leftarrow h_{ij}+\mathrm{MLP}(h_{ij}), &&\text{cell-wise feed-forward network (MLP)},
\end{align*}
so that each cell first combines information across the variables of its row and then across the training observations of its column. Neither operation depends on the order of the rows or of the columns, so the network is equivariant to permutations of both. The predictive density of a query row is a bar distribution: with fixed bin edges $b_0<b_1<\dots<b_M$ and logits $z=W_{\rm out}h_{i,y}$ read from the response cell of row $i$,
\[
g_\theta(y\mid x_i,D)=\sum_{k=1}^M\frac{\operatorname{softmax}(z)_k}{b_k-b_{k-1}}\,\ind\{b_{k-1}<y\le b_k\},
\]
with half-normal tails beyond the outer bins \citep{muller2022pfn}. TabPFN \citep{hollmann2023tabpfn,hollmann2025nature} uses this construction with a broad prior over tabular prediction problems.

\paragraph{SelectPFN.} SelectPFN keeps the architecture and the training principle but changes the targets. Because the inclusion indicators $\gamma$ are known for every simulated task, a second output head is trained by the log loss on $\gamma$, so that, by the same argument as for \eqref{eq:pfn-loss}, its minimizer is the vector of posterior inclusion probabilities; and the predictive density is computed from the selected columns only. The meta-prior, the two passes and the joint objective are specified next.

\subsection{Meta-prior}\label{sec:metaprior}
A PFN is defined by the prior over data sets on which it is trained. Each synthetic task is generated hierarchically:
(i) task size $n\sim\Pi_n$, $p\sim\Pi_p$;
(ii) active set $\gamma$ from a Beta--Bernoulli prior, $w\sim\mathrm{Beta}(\bar w\nu,(1-\bar w)\nu)$, $\gamma_j\mid w\overset{\text{iid}}{\sim}\Bern(w)$, with the mean sparsity $\bar w$ and concentration $\nu$ drawn at random;
(iii) covariate rows $x_i\overset{\text{iid}}{\sim}P_X$, collected in $X=(x_1,\dots,x_n)^\top$, from a family covering independent, autoregressive, block-equicorrelated and latent-factor dependence with non-Gaussian marginals through a Gaussian copula, independent of $(\gamma,f)$;
(iv) a regression function $f\mid\gamma\sim\Pi_f(\cdot\mid\gamma)$ that depends only on the active variables;
(v) responses with random signal-to-noise ratio, Gaussian, $t_3$ or heteroscedastic noise.
The meta-prior is used only to simulate training tasks; SelectPFN never fits $f$ directly but learns to compute the posterior that this prior implies. The function prior $\Pi_f(\cdot\mid\gamma)$ is a mixture of families: for each task one family is drawn according to its weight, and $f$ is then drawn from that family using the active variables only. The family with the largest weight is a group spike-and-slab sparse-input neural network: a random network with first-layer weights $W\in\mathbb R^{p\times H}$ and
\begin{equation}\label{eq:gss}
W_{j\cdot}=0\ \text{ if }\gamma_j=0,\qquad W_{j\cdot}\sim N(0,\sigma_w^2I_H)\ \text{ if }\gamma_j=1,
\end{equation}
with slab variance $\sigma_w^2=1/|S(\gamma)|$ (\Cref{app:implementation}) and random depth, width, activations and deeper weights. This family belongs to the Bayesian sparse neural networks \citep{liang2018bnn,sun2022consistent,ghosh2019horseshoe,bai2020efficient}, but whereas those priors sparsify individual weights or hidden units, it places one spike-and-slab indicator on all outgoing weights of each input, so that inclusion is defined for variables rather than weights. It is also the Bayesian counterpart of penalized sparse-input networks that remove an input by penalizing its outgoing weights as a group \citep{lemhadri2021lassonet,luo2025gcsnn}. A random network of this kind can in principle approximate a wide range of functions, but it places little prior weight on several structures that are common in practice and that \Cref{sec:theory} shows a selector must be given weight to detect. The mixture therefore also contains, as separate families with their own weights, additive functions, main effects with pairwise interactions, trees and step functions, Gaussian-process draws, linear functions, and an \emph{interaction-only} family in which pairs or triples of variables enter only through functions that are odd in each argument (for example $x_ax_b$, $\sin(cx_ax_b)$, $\tanh(cx_a)\tanh(cx_b)$, $\mathrm{sign}(x_a)\,\mathrm{sign}(x_b)$), so that, apart from optional weak main effects (\Cref{tab:metaprior}), a variable has no marginal effect. \Cref{sec:theory} explains why such a family is needed, and the ablation in \Cref{sec:exp-e8} confirms it. \Cref{tab:metaprior} in \Cref{app:implementation} lists all components and weights.

\paragraph{Meaning of the inclusion indicators.} For $\gamma_j$ to mean ``$x_j$ is used by $f$'', each active variable should have a non-negligible effect on the response. The generator therefore estimates the total effect of every active variable by Monte Carlo and redraws $f$ when one falls below a small threshold; if this repeatedly fails, the weak variables are set to zero and labelled inactive (\Cref{tab:metaprior}). Even so, the data cannot always tell which variables are active. When several covariates are strongly correlated, they carry nearly the same information about the response, and the posterior naturally spreads its inclusion probability among them rather than committing to one. Because SelectPFN approximates this posterior, a group of moderate PIPs on correlated variables is best read as evidence that at least one variable of the group matters.

\subsection{Inferential targets}
The targets are the PIPs $\pi_j(D)=\Prob(\gamma_j=1\mid D)$ and the joint posterior $p(\gamma,y_*\mid x_*,D)=p(\gamma\mid D)\,p(y_*\mid x_*,\gamma,D)$. Predicting from the selected variables alone is justified when the covariates themselves carry no information about which function generated the response, which we state as follows.

\begin{assumption}\label{ass:design}
$(x_1,\dots,x_n,x_*)$ are independent of $(\gamma,f)$; given $(\gamma,f)$ and the covariates, the responses are conditionally independent with $p(y\mid x,\gamma,f)=p_\varepsilon(y-f(x_{S(\gamma)}))$, where $p_\varepsilon$ is the density of $P_\varepsilon$; and $\Pi_f(\cdot\mid\gamma)$ does not depend on the covariates.
\end{assumption}
\Cref{ass:design} is the standard random-design regression model of Bayesian variable selection. It holds for spike-and-slab linear regression with independent coefficient priors, for Gaussian-process priors, and for every family of our meta-prior, in all of which $f$ is drawn independently of the covariates and the noise is additive. It excludes priors that depend on the observed design, such as Zellner's $g$-prior, whose covariance is proportional to $(X^\top X)^{-1}$, and noise whose distribution depends on inactive covariates. Small departures in our task generator are listed in \Cref{app:implementation}.
\subsection{Model}\label{sec:model}
SelectPFN uses the table transformer of \Cref{sec:pfn-background}, denoted $F_\theta$, with alternating row and column attention and no embedding of column identity. Because it is equivariant to column permutations, it has no notion of ``variable 3'' and must judge each variable from its values and their relation to the response. The same network is used in two passes.

\emph{Selection pass.} The training set (with responses) is encoded; each feature's representation is pooled across rows (mean and attention pooling, concatenated with the pooled response representation and a global token that carries $\log n$ and $\log p$), and a head outputs $q_j(D)=\operatorname{logit}^{-1}(s_\theta(h_j))$, where $h_j$ is the pooled representation of column $j$. The test covariate is not used, so $q$ is a function of $D$ alone, matching $\pi_j(D)$.

\emph{Prediction pass.} Given a subset $S$, the excluded columns are \emph{removed} from the training data and the test covariate, and the network is applied to $(D_S,x_{*,S})$ to produce a discretized predictive distribution over standardized $y$ \citep[a ``bar distribution'';][]{muller2022pfn}. Because excluded columns never enter the computation,
\begin{equation}\label{eq:invariance}
\partial g_\theta(y_*\mid x_{*,S},D_S)/\partial x_{*j}=0\quad\text{for all }j\notin S
\end{equation}
holds exactly. Zeroing a column instead of removing it would not guarantee this, because a zeroed column still takes part in the attention over training rows. The two passes share the backbone, with a learned pass-indicator token.

\subsection{Training objective and procedure}\label{sec:objective}
\paragraph{Tasks.} A training task $\tau=(n,p,\gamma,\phi,D,Q)$ is drawn from the meta-prior of \Cref{sec:metaprior}: $\gamma$ are the inclusion indicators, $\phi$ the data-generating quantities of \Cref{sec:pfn-background} other than $\gamma$, $D=\{(x_i,y_i)\}_{i=1}^{n}$ the training rows and $Q=\{(x_i,y_i)\}_{i=n+1}^{n+m}$ the query rows, obtained by splitting the simulated rows at random. Write $q_\theta(D)\in(0,1)^p$ for the output of the selection pass and $g_\theta(\cdot\mid x_{S},D_{S})$ for the predictive density returned by the prediction pass on the columns in $S$ (\Cref{sec:model}). We write $\Pi$ also for the induced law of $\tau$.

\paragraph{Training kernel.} The prediction pass is trained on subsets $\tilde S$ drawn from a kernel $K(\cdot\mid\gamma,q)$ that mixes three sources,
\[
K(\cdot\mid\gamma,q)=\rho_1\,\delta_{S(\gamma)}+\rho_2\,\mathcal C\big(\cdot\mid S(\gamma)\big)+(1-\rho_1-\rho_2)\prod_{j=1}^p\Bern\big(\operatorname{sg}(q_j)\big),
\]
where $\delta_{S(\gamma)}$ returns the true active set (teacher forcing); $\mathcal C(\cdot\mid S)$ drops each element of $S$ independently with probability $0.1$ and adds a $\mathrm{Poisson}(1)$ number of inactive variables chosen uniformly; and the last term samples a subset from the current selection head, with $\operatorname{sg}$ denoting that no gradient flows through the sampling (self-sampling). The corrupted and self-sampled subsets expose the prediction pass to the imperfect subsets it receives at inference.

\paragraph{Objective.} With per-variable and per-query averages, the training objective is
\begin{equation}\label{eq:objective}
L(\theta)=\E_{\tau\sim\Pi}\Big[\underbrace{\frac1p\sum_{j=1}^p\ell\big(q_{\theta,j}(D),\gamma_j\big)}_{L_{\rm sel}}+\alpha\,\E_{\tilde S\sim K(\cdot\mid\gamma,q_\theta(D))}\underbrace{\frac1m\sum_{i\in Q}-\log g_\theta\big(y_i\mid x_{i,\tilde S},D_{\tilde S}\big)}_{L_{\rm pred}}\Big],
\end{equation}
where $\alpha>0$ weights the prediction loss and $\ell(q,\gamma)=-\gamma\log q-(1-\gamma)\log(1-q)$ is the binary log loss. Both terms are prior-averaged log losses of the form \eqref{eq:pfn-loss}: the first for the labels $\gamma$, the second for the query responses.

\paragraph{Optimization.} The objective is minimized by stochastic gradient descent. At step $t$, a batch of $N_{\rm b}$ tasks $\tau_1,\dots,\tau_{N_{\rm b}}$ is drawn afresh from $\Pi$, one subset $\tilde S_b\sim K(\cdot\mid\gamma_b,q_\theta(D_b))$ is drawn per task, and $\theta$ is updated, with learning rate $\eta_t$, along the gradient of the Monte Carlo estimate
\[
\hat L_t(\theta)=\frac1{N_{\rm b}}\sum_{b=1}^{N_{\rm b}}\Big[L_{\rm sel}(\theta;\tau_b)+\alpha\,L_{\rm pred}(\theta;\tau_b,\tilde S_b)\Big],
\]
in which the sampled subsets are held fixed, as indicated by $\operatorname{sg}$ in the kernel.
Because every batch is newly simulated, no task is seen twice, and the network can only learn a rule that maps data sets to inclusion probabilities and predictions. Self-sampling is switched on after 10\% of training, when the selection head has become informative; before that the kernel uses its first two components only, renormalized. All tasks in a batch share $(n,p)$, and the maximal number of variables increases during training (from 50 to 200 to 500). \Cref{alg:training} summarizes the procedure, and the optimizer settings are listed in \Cref{app:implementation}. After training, $\theta$ is fixed and used unchanged for every analysis in this paper.

\begin{algorithm}[t]
\caption{SelectPFN training}\label{alg:training}
\begin{algorithmic}[1]
\Require meta-prior $\Pi$, kernel $K$, weight $\alpha$, number of steps $T$, batch size $N_{\rm b}$, learning rates $\eta_t$
\For{$t=1,\dots,T$}
  \State draw tasks $\tau_b=(n,p,\gamma_b,\phi_b,D_b,Q_b)\sim\Pi$, $b=1,\dots,N_{\rm b}$ \Comment{new tasks at every step}
  \State $q_b\gets q_\theta(D_b)$ \Comment{selection pass on all $p$ columns}
  \State draw $\tilde S_b\sim K(\cdot\mid\gamma_b,q_b)$
  \State compute $g_\theta(y_i\mid x_{i,\tilde S_b},D_{b,\tilde S_b})$ for $i\in Q_b$ \Comment{prediction pass on the columns in $\tilde S_b$}
  \State $\theta\gets\theta-\eta_t\nabla_\theta\hat L_t(\theta)$
\EndFor
\end{algorithmic}
\end{algorithm}

The prediction pass sees only the selected columns. The following elementary fact, which is used in \Cref{prop:minimizers}(b), shows that this restriction is exactly what the Bayesian model prescribes.

\begin{lemma}[Selection-restricted prediction]\label{prop:restricted}
Under \Cref{ass:design}, for every $\gamma$ with $S=S(\gamma)$, $p(y_*\mid x_*,\gamma,D)=p(y_*\mid x_{*,S},D_S,\gamma)$. Consequently $p(y_*\mid x_*,D)=\sum_\gamma p(\gamma\mid D)\,p(y_*\mid x_{*,S(\gamma)},D_{S(\gamma)},\gamma)$.
\end{lemma}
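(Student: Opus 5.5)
The plan is to write both sides as integrals over the regression function $f$ and use \Cref{ass:design} to show that, once $\gamma$ is fixed, the inactive columns enter only through factors that cancel. Fix $\gamma$ with $S=S(\gamma)$ and let $X$, $y$ denote the training covariates and responses. By conditional independence and the density $p(y\mid x,\gamma,f)=p_\varepsilon(y-f(x_S))$,
\[
p(y_*,y\mid x_*,X,\gamma)=\int p_\varepsilon\big(y_*-f(x_{*,S})\big)\prod_{i=1}^n p_\varepsilon\big(y_i-f(x_{i,S})\big)\,\Pi_f(df\mid\gamma).
\]
The prior $\Pi_f(\cdot\mid\gamma)$ is the same for every design, because $(\gamma,f)$ is independent of the covariates and $\Pi_f$ does not depend on them. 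The integrand involves the covariates only through $x_{*,S}$ and $x_{i,S}$. Hence this joint conditional density equals a function $G_S(y_*,y;x_{*,S},X_S)$.

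Integrating out $y_*$ gives $p(y\mid X,\gamma)=H_S(y;X_S)$ in the same way. Bayes' rule then gives
\[
p(y_*\mid x_*,\gamma,D)=\frac{G_S(y_*,y;x_{*,S},X_S)}{H_S(y;X_S)}.
\]
The same calculation, started from the data $(D_S,x_{*,S})$ with $\gamma$ fixed, produces exactly this ratio. Here one conditions on the sub-design $(X_S,x_{*,S})$, and the likelihood is unchanged because $f$ uses only the columns in $S$. This proves the first identity. One point needs care: conditioning on the full design versus only on $X_S$ must not change the law of $(\gamma,f)$. That holds because, by \Cref{ass:design}, the whole design is independent of $(\gamma,f)$, so the conditional law of $(\gamma,f)$ given either design is the prior. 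The same independence lets the design density factor out of numerator and denominator.

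For the second identity, sum over $\gamma$ using the law of total probability:
\[
p(y_*\mid x_*,D)=\sum_\gamma p(\gamma\mid x_*,D)\,p(y_*\mid x_*,\gamma,D).
\]
It remains to show $p(\gamma\mid x_*,D)=p(\gamma\mid D)$. Since $x_*$ is independent of $(\gamma,f,X)$ and, given $(\gamma,f)$ and the covariates, $y$ depends only on $X$, we have $p(\gamma,y\mid X,x_*)=p(\gamma,y\mid X)$. Therefore $x_*$ carries no information about $\gamma$. Substituting the first identity into the sum completes the proof.

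The main obstacle is measure-theoretic rather than conceptual. Conditional densities must exist, which requires $p_\varepsilon$ to be a density and the marginal $H_S>0$, and the interchange of integral and product must be justified. I would state these regularity conditions implicitly, treat $f$ as a random element of a function space with $\Pi_f$ a probability measure, and handle the possibly continuous parameters in $f$ only through Fubini's theorem.
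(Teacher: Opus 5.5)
Your argument is correct and is essentially the paper's. The paper integrates the active-coordinate likelihood $p_\varepsilon(y_*-f(x_{*,S}))$ against the posterior of $f$ and notes that the covariate density cancels in Bayes' rule, which is the same computation as your ratio $G_S/H_S$ of joint predictive to evidence. Your explicit check that $p(\gamma\mid x_*,D)=p(\gamma\mid D)$ fills in a step the paper uses without comment. That step needs only $(\gamma,f)\indep x_*\mid X$, which follows from the joint independence of $(X,x_*)$ and $(\gamma,f)$ in \Cref{ass:design}; $x_*$ itself need not be independent of $X$, since the covariate law is random in the meta-prior.
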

The proof is in \Cref{app:proofs-method}. The posterior over active sets depends on all columns, which is what makes selection non-trivial; given the active set, however, the predictive is exactly the Bayesian predictive computed from the active columns alone, and uncertainty about the active set is handled by averaging such predictions over the posterior of $\gamma$. SelectPFN implements this decomposition.

The next result shows that this objective targets the right quantities: at its population minimum, the two heads return the exact Bayesian inclusion probabilities and the exact subset-conditional predictive.

\begin{proposition}[Population minimizers]\label{prop:minimizers}
Suppose both heads can represent any measurable function of their inputs, and that $\Pi_f(\cdot\mid\gamma)$ depends on $\gamma$ only through $|S(\gamma)|$ and is exchangeable in the active variables, so that the subset-conditional predictive does not depend on the original column indices or on $p$. (a) The minimizer of $\E[L_{\rm sel}]$ is $q_j(D)=\pi_j(D)$. (b) Under teacher forcing, and assuming finite expected log loss, the minimizer of $\E[L_{\rm pred}]$ (over predictive densities; for the bar-distribution head, its bin probabilities) is $p(y_*\mid x_{*,S},D_S,S(\gamma)=S)$, which by \Cref{prop:restricted} equals $p(y_*\mid x_*,\gamma,D)$. (c) For a general kernel that does not depend on $\theta$, the minimizer is the Bayes predictive of $y_*$ given the information available to the prediction pass, $(x_{*,\tilde S},D_{\tilde S},\tilde S)$, and depends only on the columns in $\tilde S$.
\end{proposition}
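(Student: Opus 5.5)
The three parts reduce to one fact: a strictly proper log score is minimized, pointwise in the conditioning information, by the true conditional law. For each part I will identify what the relevant head sees, condition the loss on that information, and minimize pointwise. The exchangeability assumption on $\Pi_f$ is then used to check that the resulting conditional law is a single measurable function of what the network actually receives. Without it, one network could not represent the minimizer, because the network is blind to original column indices and to $p$.

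For (a), I will write $\E[L_{\rm sel}]=\E\big[\frac1p\sum_j \E\{\ell(q_j(D),\gamma_j)\mid D,p\}\big]$. Given $D$, $\gamma_j$ is Bernoulli with mean $\pi_j(D)$, so the inner term equals $-\pi_j\log q_j-(1-\pi_j)\log(1-q_j)$. Its unique minimizer over $q_j\in(0,1)$ is $q_j=\pi_j(D)$: the excess over the minimum is a binary KL divergence. Since $q$ is an arbitrary measurable function of $D$, and $D$ includes $p$, the pointwise choice is attainable. The expectation is finite because $\ell(\tfrac12,\cdot)$ is bounded.

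For (b), under teacher forcing $\tilde S=S(\gamma)$, and the head receives $(x_{*,S},D_S)$, with $|S|$ read off as the number of columns. The per-query loss conditioned on $(x_{*,S},D_S,S(\gamma)=S)$ is a cross-entropy. Because the expected log loss is finite, it decomposes as the KL divergence from the true conditional density $p(y_*\mid x_{*,S},D_S,S(\gamma)=S)$ to $g$, plus a term not depending on $g$. It is therefore minimized exactly when $g$ equals that conditional, and \Cref{prop:restricted} identifies the conditional with $p(y_*\mid x_*,\gamma,D)$. For the bar-distribution head, the same argument applied to the discretized variable $\mathrm{bin}(y_*)$ gives bin probabilities equal to the conditional bin masses. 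The hardest step here is showing that this conditional is a function of the \emph{unlabelled} table $(x_{*,S},D_S)$ alone, and not of which indices $S$ occupies or of $p$. I will handle this by writing out the conditional via Bayes' rule under \Cref{ass:design}. The covariate law factors out because it is independent of $(\gamma,f)$. The likelihood depends on $f$ only through $f(x_{i,S})$. The law of $f\mid S(\gamma)=S$, restricted to the columns in $S$, depends only on $|S|$ and is exchangeable, so the column identities drop out. The joint law of the columns $X_S$ should be handled via the independence in \Cref{ass:design}, which lets its density cancel between numerator and denominator. The other columns, and $p$ itself, enter only through the prior probability of the event $\{S(\gamma)=S\}$. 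That probability is a constant given $S$ and cancels after normalization.

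For (c), with $K$ independent of $\theta$, the pair (task, $\tilde S$) has a fixed joint law, and the head sees $Z=(x_{*,\tilde S},D_{\tilde S},\tilde S)$. Conditioning on $Z$, the same cross-entropy/KL argument shows that the minimizer is $p(y_*\mid Z)$, a measurable function of the columns in $\tilde S$ only. I will remark that if $K$ depends on the data beyond $\tilde S$, as self-sampling does through $q$, then $Z$ may carry extra information through the selection event. The claim is stated only as ``Bayes given $Z$'', so no further identification is needed. The stop-gradient makes $K$ a fixed kernel for the purpose of this population statement.
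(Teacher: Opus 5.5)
Your proposal is correct and follows the paper's proof essentially step for step. Each head's conditional log loss is minimized pointwise through the entropy-plus-KL decomposition, and the exchangeability condition shows that the teacher-forced conditional predictive depends only on the unlabelled restricted table; your Bayes'-rule check of this, in which the covariate density and $\Prob(S(\gamma)=S)$ cancel, is a more explicit version of the paper's Step 4. One small caution: drop your closing remark that the stop-gradient makes the self-sampling kernel ``fixed'' for this population statement, because that kernel depends on $\theta$ through $q_\theta$, which is exactly what (c) excludes (the stop-gradient changes the optimization dynamics, not how the population objective depends on $\theta$), and the proof does not need the remark.
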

The proof is in \Cref{app:proofs-method}.
The exchangeability condition holds whenever the prior treats the active variables symmetrically, as do spike-and-slab priors with a common slab, Gaussian processes with independently and identically distributed length-scales, and every family of our meta-prior, which assigns the active variables to their roles at random. The proposition implies that any gap between SelectPFN and the exact posterior comes from finite capacity and imperfect training, not from the objective. Because the selection head is supervised directly by $\gamma$, no sparsity penalty on network weights and no continuous relaxation of the gates \citep{jang2017gumbel,louizos2018l0} is needed, and both terms are proper, so $\alpha$ affects only how capacity is shared. \Cref{lem:decomp} in \Cref{sec:theory} quantifies the gap between the trained network and the Bayes optimum.

\subsection{Inference}\label{sec:inference}
\begin{algorithm}[t]
\caption{SelectPFN inference}\label{alg:inference}
\begin{algorithmic}[1]
\Require training data $D$, test covariates $x_*$, number of draws $B$
\State $q\gets q_\theta(D)$ \Comment{one forward pass; PIPs}
\For{$b=1,\dots,B$}
  \State $\gamma^{(b)}\sim\prod_j\Bern(q_j)$, $S^{(b)}\gets S(\gamma^{(b)})$
  \State $y_*^{(b)}\sim g_\theta(\cdot\mid x_{*,S^{(b)}},D_{S^{(b)}})$ \Comment{prediction pass on the selected columns only}
\EndFor
\State \Return $q$; pairs $\{(S^{(b)},y_*^{(b)})\}_{b=1}^B$; $\hat p(y_*\mid x_*,D)=B^{-1}\sum_bg_\theta(y_*\mid x_{*,S^{(b)}},D_{S^{(b)}})$
\end{algorithmic}
\end{algorithm}
After pretraining, a new data set needs no optimization (\Cref{alg:inference}). The outputs are the PIPs; paired draws, each attached to the subset that produced it, whose mixture approximates Bayesian model averaging (BMA); and a single reported model, the median probability model (MPM) $\hat S=\{j:q_j>1/2\}$ \citep{barbieri2004}. The $B$ subsets are drawn once per data set and shared by all test points, each prediction pass processes all test points at once, and it is cached per distinct subset; distinct subsets are few when the posterior is concentrated. The BMA samples subsets from the product of marginals; capturing posterior dependence between inclusions (``$x_3$ or $x_4$, but rarely both'') would require an autoregressive selection head (\Cref{sec:discussion}).

\subsection{Knockoff mode: exact finite-sample FDR control}\label{sec:knockoff}
PIPs are calibrated only to the extent that the meta-prior resembles the data. For a guarantee that does not depend on the prior, we combine SelectPFN with model-X knockoffs \citep{candes2018panning}. The FDR guarantee itself is the standard one; what is new is a construction that makes a \emph{pretrained} network a valid knockoff statistic without per-data-set training, in contrast to neural statistics such as DeepPINK \citep{lu2018deeppink}, which are fitted to each data set. Let $y=(y_1,\dots,y_n)^\top$ and let $\tilde X$ be a knockoff copy of $X$: $\tilde X\indep y\mid X$, and swapping any subset of columns between $X$ and $\tilde X$ leaves the law of $[X,\tilde X]$ unchanged.

\emph{Training.} Exact knockoffs are cheap for synthetic tasks, whose $P_X$ is known (Gaussian knockoffs for the latent Gaussian copula). The network receives the $2p$ columns $[X,\tilde X]$ in a random order, with responses, and the selection head is supervised with $\gamma_j$ for original columns and 0 for knockoff columns. No input marks which columns are knockoffs.

\emph{Inference.} Draw a uniformly random permutation of the $2p$ columns \emph{independently of the data}, apply the network, map the outputs back, and set $W_j=q_j-\tilde q_j$, where $\tilde q_j$ is the output for the knockoff copy of $x_j$; then apply the knockoff+ threshold at level $\alpha_{\FDR}$ \citep{barber2015knockoff}. \Cref{thm:knockoff} shows that the false discovery rate is at most $\alpha_{\FDR}$ in finite samples, for any network weights and whether or not the meta-prior is correct, provided the knockoffs are valid. Exact column equivariance in finite-precision arithmetic is not required: the random interleaving makes the flip-sign property hold in distribution. On real data, knockoffs are constructed approximately (second-order or deep knockoffs; \citealp{romano2020deepknockoffs}), and the guarantee holds to the extent that the approximation is accurate.

\subsection{Scaling to large $p$}\label{sec:blockwise}
Training cost grows with the number of cells, and feature attention is quadratic in $p$. We therefore pretrain at $p\le p_{\max}$ and, for larger $p$, (1) screen: randomly partition the variables into blocks of size at most $p_{\max}$, compute $q$ within each block, and average over $R$ random partitions; (2) refine: run one joint pass on the top-$k$ variables. Column equivariance means block assignment introduces no ordering artifacts. A variable that acts only through an interaction with a variable placed in another block can be missed in a single partition; repeated partitions mitigate this. This scheme is implemented but not yet validated beyond $p=500$ (\Cref{sec:discussion}).

\section{Theory}\label{sec:theory}

A trained PFN is a fixed function $q$ that maps a data set to a vector of inclusion probabilities. Training controls
only an average over the meta-prior of how far $q$ is from the exact posterior $\pi$. This section asks what that
average buys, and what it cannot buy. Four results answer the question.
\begin{itemize}
\item \Cref{sec:theory-regret}: the training loss controls the Bayes risk of the reported model.
\item \Cref{sec:theory-knockoff}: knockoff mode gives exact finite-sample FDR control whatever the network has learned.
\item \Cref{sec:theory-detect}: the meta-prior's weight on the right function class sets the sample size at which a
variable is found. This explains why the meta-prior must contain interaction-only structures, as the ablation in \Cref{sec:exp-e8} confirms.
\item \Cref{sec:theory-consistency}: frequentist selection consistency for a fixed truth requires a uniform
amortization assumption that the training loss cannot certify.
\end{itemize}
Every numbered result is proved analytically in \Cref{app:proofs}. A few published results are used as tools and are
marked \emph{cited} where they enter. Numerical computations appear only as illustrations, in paragraphs labelled as
such, and are never a step of a proof. Results that rest on assumptions we do not verify state those assumptions
explicitly.

\paragraph{Status of the assumptions.}
\begin{itemize}
\item \emph{Knockoff mode} (\Cref{thm:knockoff}) assumes only exact model-X knockoffs, the standard requirement of
\citet{candes2018panning}; it assumes nothing about the network, the meta-prior or training.
\item \emph{The regret bound} (\Cref{prop:regret}) needs no assumption for (i)--(ii); (iii) adds a standard margin
condition.
\item \emph{The detection theory} uses Gaussian noise with known variance and a fixed covariate law (\Cref{ass:regression}), the usual
setting of posterior-contraction theory; the likelihood factorizes, so the covariate law cancels from the posterior of the regression function (it enters only through
the definitions of the effect of a variable and of the norm); for a random covariate law this requires the regression function to be a priori independent of it. The conditions linking labels to functions hold by construction of the meta-prior, and the sieve condition is
verified for most of its families. The one high-level condition is \Cref{ass:c6}, a Bayes-factor bound on spurious
structures. It is used only for the lower bound in terms of the \emph{informative} weight: the upper bound
(\Cref{thm:csharp}(i)) does not use it, and with the trivial partition, which counts every structure that uses a variable as informative, \Cref{ass:c6} holds trivially, so the lower bound of \Cref{cor:prior-weight}(i) holds in terms of the total prior weight of structures that use the variable; the sharper constant of \Cref{thm:csharp}(ii) additionally needs a clipped-gap hypothesis for that class.
\item \emph{Consistency} composes Bayesian selection consistency (\Cref{ass:D1}), known for several priors, with uniform
amortization (\Cref{ass:uniform}). The latter is not a convenience: \Cref{prop:notransfer} shows that it cannot be derived from the training
loss, and identifying it is a result of this section.
\end{itemize}

\paragraph{Setting.}
A task drawn from the meta-prior $\Pi$ consists of $(P_X,f,\sigma,n,p)$, labels $\gamma\in\{0,1\}^p$ and a data set
$D=(X,y)$. Write $P_D$ for the induced law of $D$ (the prior predictive), $\pi_j(D)=\Pi(\gamma_j=1\mid D)$ for the exact
posterior inclusion probabilities, and $q(D)\in(0,1)^p$ for the output of the selection pass. The selection term of
\eqref{eq:objective}, averaged over variables, is
$L_{\rm sel}=\E_\Pi\big[p^{-1}\sum_j\ell(q_j(D),\gamma_j)\big]$ with
$\ell(q,\gamma)=-\gamma\log q-(1-\gamma)\log(1-q)$. Here $\Pi$ is the effective training distribution, which includes the curriculum
over $(n,p)$. Let $w(n,p)$ be its marginal probability of size $(n,p)$. Because $(n,p)$ is a function of $D$,
reweighting by $w$ leaves $\pi$ unchanged. We first relate this training loss to the exact posterior.

\begin{lemma}[Bayes floor plus amortization gap]\label{lem:decomp}
Let $\operatorname{Ent}$ be the Bernoulli entropy and $\operatorname{kl}$ the Bernoulli Kullback--Leibler divergence. Then
\[
L_{\rm sel}=\E\Big[\tfrac1p\textstyle\sum_j\operatorname{Ent}(\pi_j)\Big]+\Delta,\qquad
\Delta:=\E\Big[\tfrac1p\textstyle\sum_j\operatorname{kl}(\pi_j\,\|\,q_j)\Big]\ge0 .
\]
With $\Delta_{n,p}$ the same quantity conditional on size $(n,p)$, $\Delta=\sum_{n,p}w(n,p)\Delta_{n,p}\ge w(n,p)\Delta_{n,p}$.
\end{lemma}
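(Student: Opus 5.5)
The proof is the standard decomposition of expected log loss into conditional entropy plus Kullback--Leibler divergence, applied variable by variable. I would condition on $D$ by the tower property. Since $q_j(D)$ is a function of $D$ alone, as the selection pass does not use the test covariate, and $\gamma_j\mid D\sim\Bern(\pi_j(D))$, the conditional expectation of the binary log loss is
\[
\E\big[\ell(q_j(D),\gamma_j)\mid D\big]=-\pi_j\log q_j-(1-\pi_j)\log(1-q_j).
\]
Adding and subtracting $\pi_j\log\pi_j+(1-\pi_j)\log(1-\pi_j)$ turns the right-hand side into $\operatorname{Ent}(\pi_j)+\operatorname{kl}(\pi_j\,\|\,q_j)$. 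I would then average over $j$ with weight $1/p$; this is legitimate inside the conditional expectation because $p$ is a function of $D$. Taking the outer expectation over $P_D$ gives the displayed identity. Nonnegativity of $\Delta$ follows from Gibbs' inequality $\operatorname{kl}\ge0$.

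The one point that needs care is the role of the curriculum. I would write $\Pi$ as the mixture $\sum_{n,p}w(n,p)\,\Pi(\cdot\mid n,p)$. Because $(n,p)$ is determined by $D$, conditioning on $D$ already conditions on $(n,p)$. Hence $\pi_j(D)$ computed under $\Pi$ coincides with the posterior under $\Pi(\cdot\mid n,p)$, whatever the weights $w$ are, as the setting paragraph notes.

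The law of total expectation over $(n,p)$ then gives $\Delta=\sum_{n,p}w(n,p)\Delta_{n,p}$, where $\Delta_{n,p}$ is the conditional expectation of $p^{-1}\sum_j\operatorname{kl}(\pi_j\|q_j)$ given size $(n,p)$. Every summand is nonnegative, so dropping all terms but one yields $\Delta\ge w(n,p)\Delta_{n,p}$.

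The main technical obstacle is integrability. If some $q_j$ equals $0$ or $1$, or if the entropy and KL terms are infinite, the split could produce an $\infty-\infty$ form. I would handle this by noting that both terms are nonnegative, since $\operatorname{Ent}\in[0,\log 2]$ is bounded. The identity therefore holds in $[0,\infty]$ without any cancellation issue, and $q\in(0,1)^p$ by construction in any case.
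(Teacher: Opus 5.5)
Your proposal is correct and follows the paper's proof: condition on $D$, use $\gamma_j\mid D\sim\Bern(\pi_j)$ to write the conditional log loss as $\operatorname{Ent}(\pi_j)+\operatorname{kl}(\pi_j\,\|\,q_j)$, average over $j$ and $D$, and split over sizes by the law of total expectation. Your extra remarks are care that the paper leaves implicit. These are that $(n,p)$ is a function of $D$, so $\pi$ does not depend on the curriculum weights, and that the identity holds in $[0,\infty]$ because the entropy is bounded.
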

The first term does not depend on the network, so $\Delta$ is the part of the selection loss that training can
reduce. The identity is the selection analogue of the basis of PFN training \citep{muller2022pfn} and of supervised
amortized structure learning \citep{lorch2022avici}. Because the training objective is joint, the network need not
minimize $L_{\rm sel}$ within its class. All results below are therefore stated in terms of the
achieved $\Delta$, whatever produced it. In particular, every reduction of the training loss is a reduction of the average Kullback--Leibler gap between the network and the exact posterior; the next subsection converts this gap into a guarantee for the selected model.

\subsection{The training loss controls the Bayes risk of the reported model}\label{sec:theory-regret}

Let $\MPM_q=\{j:q_j>\frac12\}$ and $\MPM_\pi=\{j:\pi_j>\frac12\}$ be the median probability models (MPMs) of $q$ and $\pi$. Under the per-variable Hamming loss
$L_H(S,\gamma)=p^{-1}\sum_j|\ind\{j\in S\}-\gamma_j|$, $\MPM_\pi$ is the Bayes rule
\citep[see also][for its predictive optimality]{barbieri2004}. Write $e_j=q_j-\pi_j$ and $m_j=\pi_j-\frac12$, let
$\mathcal E_j$ be the event that $q_j$ and $\pi_j$ lie on different sides of $\frac12$ (with the value $\frac12$ counted on the exclusion side), let $R(S)=\E_\Pi[L_H(S(D),\gamma)]$ be the Bayes risk
of a selection rule $S$, and let $\E_J$ average over $D\sim P_D$ and $J$ uniform on $\{1,\dots,p\}$. The following result bounds how much worse the network's median probability model can be than the exact one.

\begin{proposition}[Bayes regret of the amortized MPM]\label{prop:regret}
\begin{enumerate}
\item[(i)] $R(\MPM_q)-R(\MPM_\pi)=\E_J\big[|2\pi_J-1|\,\ind_{\mathcal E_J}\big]$, and $\mathcal E_j\subseteq\{|m_j|\le|e_j|\}$.
\item[(ii)] $R(\MPM_q)-R(\MPM_\pi)\le2\,\E_J\big[|e_J|\,\ind\{|m_J|\le|e_J|\}\big]\le2\,\E_J|e_J|\le\sqrt{2\Delta}$.
\item[(iii)] Suppose the margin condition $\Prob_J(|m_J|\le t)\le C_{\rm m}t^{\alpha_{\rm m}}$ holds for $t\in(0,\frac12]$ (this forces
$C_{\rm m}\ge2^{\alpha_{\rm m}}$), and let $\bar\delta^2=\E_Je_J^2\le\Delta/2$. Then the regret is at most
$4C_{\rm m}^{1/(2+\alpha_{\rm m})}\bar\delta^{2(1+\alpha_{\rm m})/(2+\alpha_{\rm m})}$.
\item[(iv)] $\Prob_J(\mathcal E_J)\le C_{\rm m}t^{\alpha_{\rm m}}+\bar\delta^2/t^2$ for every $t\in(0,\frac12]$.
\end{enumerate}
\end{proposition}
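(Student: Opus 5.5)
The plan is to reduce everything to a per-variable, conditional-on-$D$ comparison, and then use Pinsker's inequality and a Tsybakov-type margin split. \textbf{Step 1 (part (i)).} Given $D$, including $j$ has conditional expected Hamming loss $1-\pi_j$ and excluding it has loss $\pi_j$. By the tower property, $R(S)=\E_J[\ind\{J\in S\}(1-\pi_J)+\ind\{J\notin S\}\pi_J]$, and $\MPM_\pi$ minimizes each term (ties at $\pi_j=\frac12$ cost nothing either way). $\MPM_q$ and $\MPM_\pi$ differ in their decision on $j$ exactly on $\mathcal E_j$, and there the loss difference is $|(1-\pi_j)-\pi_j|=|2\pi_j-1|$. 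This gives the identity in (i). For the inclusion, note that on $\mathcal E_j$ the point $\frac12$ lies between $\pi_j$ and $q_j$ (with the stated tie convention). Hence $|m_j|=|\pi_j-\frac12|\le|q_j-\pi_j|=|e_j|$.

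\textbf{Step 2 (part (ii)).} From (i) and $|2\pi_J-1|=2|m_J|\le 2|e_J|$ on $\mathcal E_J\subseteq\{|m_J|\le|e_J|\}$, we get the first two inequalities. For the last one, apply Pinsker's inequality for Bernoulli laws, $\operatorname{kl}(\pi_j\|q_j)\ge 2e_j^2$. Averaging and using \Cref{lem:decomp} gives $\E_Je_J^2\le\Delta/2$, which also proves the claim $\bar\delta^2\le\Delta/2$ in (iii). Jensen's inequality then gives $2\E_J|e_J|\le2\sqrt{\Delta/2}=\sqrt{2\Delta}$.

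\textbf{Step 3 (parts (iii) and (iv)).} The observation $C_{\rm m}\ge2^{\alpha_{\rm m}}$ follows by taking $t=\frac12$, since $|m_J|\le\frac12$ always. For the rate, fix $t\in(0,\frac12]$ and split the bound $2|m_J|\ind\{|m_J|\le|e_J|\}$ according to whether $|m_J|\le t$.
\begin{itemize}
\item On $\{|m_J|\le t\}$ the term is at most $2t$, contributing $2t\,C_{\rm m}t^{\alpha_{\rm m}}$.
\item On $\{t<|m_J|\le|e_J|\}$ we have $|m_J|\le e_J^2/|m_J|<e_J^2/t$, contributing at most $2\bar\delta^2/t$.
\end{itemize}
So the regret is at most $2C_{\rm m}t^{1+\alpha_{\rm m}}+2\bar\delta^2/t$. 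Choosing $t=(\bar\delta^2/C_{\rm m})^{1/(2+\alpha_{\rm m})}$ balances the two terms and yields $4C_{\rm m}^{1/(2+\alpha_{\rm m})}\bar\delta^{2(1+\alpha_{\rm m})/(2+\alpha_{\rm m})}$.

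The one point needing care, and the main obstacle, is that this $t$ may exceed $\frac12$. In that case $\bar\delta^2>C_{\rm m}2^{-(2+\alpha_{\rm m})}$, and substituting into the claimed bound shows it exceeds $4C_{\rm m}2^{-(1+\alpha_{\rm m})}\ge2$ by $C_{\rm m}\ge2^{\alpha_{\rm m}}$. Since the regret is trivially at most $\E_J|2\pi_J-1|\le1$, the bound holds in that case too.

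For (iv), use $\mathcal E_J\subseteq\{|m_J|\le t\}\cup\{|e_J|>t\}$, which holds because $\mathcal E_J$ forces $|m_J|\le|e_J|$. Bound the first event by the margin condition and the second by Chebyshev's inequality, $\Prob_J(|e_J|>t)\le\bar\delta^2/t^2$.
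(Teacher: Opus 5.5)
Your proof is correct and follows the paper's argument step for step in (i), (ii), (iv) and in the treatment of the case $t^*>\frac12$. The only variation is in (iii). There you bound the near-tie part of the exact regret $2|m_J|\ind\{|m_J|\le|e_J|\}$ by $2t\,C_{\rm m}t^{\alpha_{\rm m}}$, whereas the paper splits the refined bound $2|e_J|\ind\{|m_J|\le|e_J|\}$ and uses Cauchy--Schwarz to get $2\bar\delta\sqrt{C_{\rm m}t^{\alpha_{\rm m}}}$; the two coincide at the common balancing point $t^*=(\bar\delta^2/C_{\rm m})^{1/(2+\alpha_{\rm m})}$, so you reach the same constant $4$ without Cauchy--Schwarz.
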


The margin condition in (iii) asks that few variables have a posterior inclusion probability close to $\frac12$; it is the analogue for selection of the margin condition of \citet{tsybakov2004aggregation} in classification. It holds with $\alpha_{\rm m}=1$ whenever the distribution of $\pi_J$ has a bounded density near $\frac12$, and it is mild in practice: on the exact-posterior tasks of \Cref{sec:exp-e1}, only 5\% of variables have $|\pi_j-\frac12|<0.1$. Each step of the proof is classical: plug-in classification, log-loss calibration and margin conditions. What the proposition
adds is the interpretation of the exact-posterior comparison in \Cref{sec:exp-e1}. A disagreement between $\MPM_q$ and $\MPM_\pi$ can occur
only where the amortization error exceeds the distance of $\pi_j$ from $\frac12$. Its cost, $|2\pi_j-1|$, is then at
most $2|e_j|$. Disagreements at near-ties are therefore cheap, and an exact-set-match statistic overstates their
importance. On those tasks, the measured regret ($0.015$ per variable for the longer-trained model) is well below
the refined bound (ii) ($0.031$). Near-ties make up 5\% of variables but 44\% of disagreements
(\Cref{app:proofs-regret}; these figures are a numerical illustration, not part of the proof). A cost-weighted
version, with false-inclusion cost $c$, replaces the factor 2 in (i)--(ii) by $1+c$. We claim (iii)--(iv) only for
$c=1$. In short, a network that is close to the exact posterior on average reports, on average, a model that is nearly as good as the exact Bayes rule, and the guarantee improves as training reduces $\Delta$.

\subsection{Exact finite-sample FDR control in knockoff mode}\label{sec:theory-knockoff}

\citet[Section 5.2.2]{candes2018panning} already observe that the difference of Bayesian inclusion probabilities
computed on $[X,\tilde X]$ is a valid knockoff statistic when the computation is exactly equivariant to swapping
columns. A trained network is not exactly equivariant in floating point. On our knockoff model, relying on
equivariance alone produced sign-flip deviations of order $7\times10^{-3}$ (\Cref{app:proofs-knockoff}). We
therefore randomize the column order and prove that this restores exact validity for \emph{any} map $F$, whatever
it has learned.

The only assumption is that exact model-X knockoffs are available. This is the standard requirement of the knockoff framework \citep{candes2018panning}: it holds when the covariate distribution is known, for example for Gaussian covariates, as in our training tasks and in the experiments of \Cref{sec:exp-e5}, and it is met approximately when the covariate distribution is estimated. Rows are i.i.d. Let $\mathcal H_0=\{j:Y\indep X_j\mid X_{-j}\}$ be the null set, where $(X,Y)$ denotes a generic observation, let $\tilde X$ be an exact model-X
knockoff copy of the design matrix $X$, and let $Z=[X,\tilde X]$. The procedure in \Cref{sec:knockoff} has four steps.
\begin{enumerate}
\item Draw a permutation $\Sigma$ of $\{1,\dots,2p\}$ uniformly at random, independently of $(Z,y)$.
\item Apply the selection pass $F(\cdot,\cdot,U)$ to the permuted table $(Z_{\cdot,\Sigma},y)$. Here $U$ is any
internal randomness, independent of $(Z,y,\Sigma)$; a deterministic $F$ is the case of constant $U$.
\item Undo the permutation to obtain scores $Q_1,\dots,Q_{2p}$.
\item Set $W_j=Q_j-Q_{j+p}$ and apply the knockoff+ threshold \citep{barber2015knockoff}.
\end{enumerate}

The procedure controls the FDR whatever the network computes, as the following theorem shows.

\begin{theorem}[Exact FDR control for any network]\label{thm:knockoff}
For any $F$ as above:
\begin{enumerate}
\item[(i)] for every $S\subseteq\mathcal H_0$, $(W,y)\overset d=(\mathrm{flip}_SW,y)$, where $\mathrm{flip}_S$ changes the
sign of the coordinates in $S$;
\item[(ii)] conditionally on $(|W_j|)_{j\le p}$ and on $(W_j)_{j\notin\mathcal H_0}$, the signs of the nonzero null
statistics are i.i.d.\ uniform on $\{\pm1\}$;
\item[(iii)] knockoff+ at level $\alpha_{\FDR}$ has $\FDR\le\alpha_{\FDR}$ in finite samples, and the knockoff filter satisfies
$\E[V/(R_{\rm rej}+1/\alpha_{\FDR})]\le\alpha_{\FDR}$, where $V$ and $R_{\rm rej}$ are the numbers of false and total rejections.
\end{enumerate}
\end{theorem}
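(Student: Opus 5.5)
The plan is to reduce everything to the standard knockoff argument of \citet{candes2018panning} and \citet{barber2015knockoff}. The only new ingredient is a symmetrization step: the random permutation $\Sigma$ turns an arbitrary map $F$ into a procedure whose output is equivariant in distribution under swaps of columns. Write $\tau_S$ for the permutation of $\{1,\dots,2p\}$ that exchanges $j$ and $j+p$ for $j\in S$, and $\mathrm{swap}_S$ for the corresponding column swap of $Z$, so that $Z_{\mathrm{swap}_S}=Z_{\cdot,\tau_S}$.

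\textbf{Step 1: equivariance in distribution.} Let $Q(Z,y,\Sigma,U)$ denote the vector of unpermuted scores. By construction, the score assigned to original column $k$ is the output of $F$ at the position $\Sigma^{-1}(k)$. Applying the procedure to $(Z_{\mathrm{swap}_S},y)$ with permutation $\Sigma$ feeds $F$ exactly the same permuted table as applying it to $(Z,y)$ with permutation $\tau_S\circ\Sigma$. The scores that come out are then relabelled by $\tau_S$. So we get
\[
Q(Z_{\mathrm{swap}_S},y,\Sigma,U)=\tau_S\big(Q(Z,y,\tau_S\circ\Sigma,U)\big).
\]
Because $\Sigma$ is uniform and independent of $(Z,y,U)$, the permutation $\tau_S\circ\Sigma$ has the same law as $\Sigma$ and is still independent of $(Z,y,U)$. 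Hence, jointly with $(Z,y)$,
\[
Q(Z_{\mathrm{swap}_S},y)\overset d=\tau_S\,Q(Z,y)\quad\text{given }(Z,y).
\]
This is the flip-sign property in distribution: swapping columns $j$ and $j+p$ exchanges $Q_j$ and $Q_{j+p}$, which flips the sign of $W_j$. It therefore gives $W(Z_{\mathrm{swap}_S},y)\overset d=\mathrm{flip}_SW(Z,y)$, conditionally on $(Z,y)$. I expect this bookkeeping to be the main obstacle. The details of how composition and the inverse act on indices must be set up so that the identity holds exactly for every $F$ and every $U$, with no appeal to exact architectural equivariance.

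\textbf{Step 2: part (i).} Combine Step 1 with the exchangeability lemma of \citet[Lemma 3.2]{candes2018panning}, cited. For $S\subseteq\mathcal H_0$ and exact model-X knockoffs, that lemma gives $(Z_{\mathrm{swap}_S},y)\overset d=(Z,y)$. Integrating Step 1 against the law of $(Z,y)$ then gives
\[
(W,y)\overset d=(W(Z_{\mathrm{swap}_S},y),y)\overset d=(\mathrm{flip}_SW,y).
\]
Some care is needed because $y$ is held fixed while $Z$ is swapped; the cited lemma is stated for the pair $(Z,y)$, so the argument is valid.

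\textbf{Step 3: parts (ii) and (iii).} Part (ii) follows from (i) by the argument of \citet[Lemma 3.3]{candes2018panning}. Let $\epsilon$ be i.i.d. uniform signs on $\mathcal H_0$, independent of $W$. Applying (i) with $S=\{j\in\mathcal H_0:\epsilon_j=-1\}$ and averaging over $\epsilon$ gives $W\overset d=\epsilon\odot W$, with $\epsilon$ acting only on null coordinates. Conditioning on $|W|$ and on the non-null coordinates then shows that the signs of the nonzero null statistics are i.i.d. uniform. Part (iii) is exactly the supermartingale/optional-stopping argument of \citet[Theorem 2]{barber2015knockoff} and \citet[Theorem 3.4]{candes2018panning}. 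That argument uses only the conclusion of (ii), so it is cited as is. It yields $\FDR\le\alpha_{\FDR}$ for knockoff+ and the modified-FDR bound $\E[V/(R_{\rm rej}+1/\alpha_{\FDR})]\le\alpha_{\FDR}$ for the plain knockoff filter.
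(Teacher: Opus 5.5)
Your proposal is correct and follows essentially the same route as the paper. Your Step 1 identity is the paper's pointwise swap identity (\Cref{lem:swap}), with $\sigma$ replaced by $\tau_S\circ\sigma$. Like the paper, you combine it with the invariance of the uniform law of $\Sigma$ under composition with $\tau_S$ and with the cited exchangeability lemma, and you obtain (ii) by sign-mixing and (iii) by citing the supermartingale argument, exactly as the paper does.
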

The theorem means that a user can apply a pretrained SelectPFN off the shelf and control the false discovery rate at any chosen level without trusting the network, the meta-prior or the training: these affect only how many true variables are found.

The proof rests on a pointwise identity (\Cref{lem:swap}). Swapping a null column with its knockoff, while
composing the random permutation with the same swap, feeds $F$ the identical tensor. The exchangeability lemma of
\citet{candes2018panning} and the supermartingale argument of \citet{barber2015knockoff} are cited; the
interleaving identity is ours. Validity uses no property of $F$, of the meta-prior, or of training; power depends
on all three. Two requirements matter in practice.
\begin{itemize}
\item The permutation must be drawn independently of the data and afresh for each analysis. Reusing a
data-dependent permutation voids the guarantee.
\item The knockoffs must be exact. With estimated second-order knockoffs, control is approximate.
\end{itemize}
Averaging $Q$ over several independent interleavings preserves (i)--(iii) and reduces the extra randomness
(\Cref{app:proofs-knockoff}).

\subsection{Which variables are found: prior weight sets the detection threshold}\label{sec:theory-detect}

We now ask, for a fixed truth $f_0$, how much data the exact posterior $\pi$ needs before it includes a relevant
variable, and how the meta-prior affects that amount. We work in an idealized regression setting.
\begin{assumption}[Gaussian regression with a fixed covariate law]\label{ass:regression}
\begin{enumerate}
\item[(a)] $Y=f(X)+\varepsilon$ with $\varepsilon\sim N(0,\sigma^2)$ and $\sigma$ known.
\item[(b)] $P_X$ is known and common to all $f$.
\end{enumerate}
\end{assumption}
These are the standard conditions of posterior-contraction theory for nonparametric regression \citep{ghosal2000,vdvvz2008rkhs}. \Cref{ass:regression}(a) covers any regression with Gaussian errors of known variance; heavy-tailed noise and an unknown $\sigma$ are not covered here. \Cref{ass:regression}(b) is mild, because the covariates are ancillary: their distribution cancels from the posterior of $f$, and it enters only through the definitions of $N_j$ and of the norm below.
Let $N_j=\{f:f(x)=h(x_{-j})\ P_X\text{-a.s.}\}$ be the functions that ignore $x_j$, and $\|\cdot\|$ the $L_2(P_X)$ norm.
The \emph{conditional effect} of $x_j$ and the \emph{information index} are
\[
\kappa_j^2=\inf_{h\in N_j}\|f_0-h\|^2=\E\,\Var(f_0(X)\mid X_{-j}),\qquad t_j=n\kappa_j^2/\sigma^2 .
\]
The infimum is attained at $h_j^*=\E[f_0\mid X_{-j}]$. Note that $\kappa_j>0$ is possible when
$\mathrm{Cov}(X_j,Y)=0$, as for $\sin(x_3x_4)$; marginal screening then fails \citep{fan2008sis}.

\emph{Structured priors.} In this subsection $\Pi$ denotes the induced prior on $f$. Let $\Pi=\sum_{k\in\mathcal K}w_k\Pi_k$ be a mixture over \emph{structures}, where $\Pi_k$
is carried by functions that use exactly the variables $V_k$ (in the $P_X$-a.s.\ sense: $f\in L(V_k)$, and $f\notin N_j$ for every $j\in V_k$, where $L(V)$ is the set of functions of $x_V$ alone), and $\gamma_j=\ind\{j\in V_k\}$. Write
$\mathcal K_j=\{k:j\in V_k\}$ for the structures that use $x_j$. Partition them into \emph{informative} structures
$\mathcal K_j^I$, which can represent how $f_0$ uses $x_j$, and \emph{spurious} ones $\mathcal K_j^S$. Let
$\omega_j^I=\sum_{k\in\mathcal K_j^I}w_k$ be the informative weight. Write $D_n$ for $D$ when $n$ varies, $P_f^n$ for the law of $D_n$ under $f$, and $P_0^n=P_{f_0}^n$. Write $\Lambda_f=\prod_ip_f(X_i,Y_i)/p_{f_0}(X_i,Y_i)$, where $p_f$ is the joint density of $(X,Y)$ under \Cref{ass:regression} with regression function $f$, and
$\Lambda(k)=\int\Lambda_f\,d\Pi_k$. The lower bounds below must rule out that $x_j$ is included through structures that use it in the wrong way; we state this as an assumption.

\begin{assumption}[Spurious-inclusion control]\label{ass:c6}
There are $\tau_n\ge0$ and $\delta_n\in[0,1)$ such that, with $P_0^n$-probability at least $1-\delta_n$,
\[
\textstyle\sum_{k\in\mathcal K_j^S}w_k\Lambda(k)\le\tau_n\int_{N_j}\Lambda_f\,d\Pi .
\]
\end{assumption}
\Cref{ass:c6} bounds the posterior odds of ``$x_j$ included, but used the wrong way'' against ``$x_j$ excluded''. It is a
nested-model Bayes-factor statement under misspecification. We do not prove it; \citet{jiang2021gpselection}
handle the correctly specified Gaussian-process case. Because \Cref{ass:c6} bounds a Bayes factor by a constant, it can hold
only for zero-gain spurious structures, that is, those whose support fits $f_0$ no better than $N_j$ does. Any
structure that fits better must be counted as informative. Two examples satisfy it. The trivial partition $\mathcal K_j^S=\emptyset$
satisfies \Cref{ass:c6} with $\tau_n=\delta_n=0$, at the price of counting every structure that uses $x_j$ as informative. More interestingly, when $f_0$ uses $x_j$ only through an odd interaction and the covariates are independent, an additive term in $x_j$ fits $f_0$ no better than a function that ignores $x_j$; \Cref{ass:c6} then says that the Bayesian Occam factor penalizes such a useless component, which is the behaviour expected of nested Bayesian model comparison, although we do not prove it in this generality. We can now state when the exact posterior detects a variable.

\begin{proposition}[Information floor and structural-weight converse]\label{prop:detect}
\begin{enumerate}
\item[(a)] (Prior-free; uses \Cref{ass:regression}, $\E f_0(X)^2<\infty$ and $\kappa_j>0$.) For any selection rule $\hat S$,
$P_0^n(j\in\hat S)\le P_{h_j^*}^n(j\in\hat S)+\frac12\sqrt{t_j}$.
\item[(b)] (Uses \Cref{ass:regression} and~\Cref{ass:c6} and the structured setup.) Fix $\eta>0$ and $m_4<\infty$, and let
$B_j=\{f\in N_j:\|f-h_j^*\|\le\eta,\ \E(f_0-f)^4(X)\le m_4\}$ with $\Pi(B_j)>0$. Let
$\bar k_j=(\kappa_j+\eta)^2/(2\sigma^2)$ and $\bar v_j=(\kappa_j+\eta)^2/\sigma^2+m_4/(4\sigma^4)$. For every $c_{\rm M}>1$ and
$s>0$, with $P_0^n$-probability at least $1-1/c_{\rm M}-1/s^2-\delta_n$,
\[
\frac{\pi_j(D_n)}{1-\pi_j(D_n)}\le\tau_n+\frac{c_{\rm M}\omega_j^I}{\Pi(B_j)}\,\exp\big(n\bar k_j+s\sqrt{n\bar v_j}\big).
\]
\end{enumerate}
\end{proposition}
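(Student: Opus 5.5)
For part (a) the plan is a change-of-measure argument. For any event $A$ in $D_n$, $|P_0^n(A)-P_{h_j^*}^n(A)|\le\mathrm{TV}(P_0^n,P_{h_j^*}^n)$. I would bound total variation by Pinsker: $\mathrm{TV}\le\sqrt{\KL/2}$. Under \Cref{ass:regression} the covariate law is common and cancels, the rows are i.i.d., and the Gaussian KL between regression functions is
\[
\KL(P_0^n\|P_{h}^n)=\frac{n}{2\sigma^2}\|f_0-h\|^2 .
\]
At $h=h_j^*$ this equals $n\kappa_j^2/(2\sigma^2)=t_j/2$. Hence $\mathrm{TV}\le\sqrt{t_j/4}=\tfrac12\sqrt{t_j}$, and taking $A=\{j\in\hat S\}$ gives the claim. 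The moment condition $\E f_0^2<\infty$ only ensures that $\kappa_j$ and $h_j^*$ are well defined and that the KL is finite. The rule $\hat S$ may be randomized; the same bound then holds by conditioning on the external randomness.

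For part (b) I would write the posterior odds as a ratio of marginal likelihoods relative to $f_0$. Since $\gamma_j=1$ exactly on $\mathcal K_j$,
\[
\frac{\pi_j}{1-\pi_j}=\frac{\sum_{k\in\mathcal K_j^S}w_k\Lambda(k)+\sum_{k\in\mathcal K_j^I}w_k\Lambda(k)}{\int_{N_j}\Lambda_f\,d\Pi}.
\]
The functions not in $\mathcal K_j$ lie in $N_j$ up to null sets, by the structure definition. On the event of \Cref{ass:c6} the spurious part contributes at most $\tau_n$. For the informative part I would use Markov's inequality: $\E_0\Lambda(k)=1$ by Fubini, so $\E_0\sum_{k\in\mathcal K_j^I}w_k\Lambda(k)=\omega_j^I$. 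With probability at least $1-1/c_{\rm M}$ the numerator is therefore at most $c_{\rm M}\omega_j^I$.

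It remains to lower-bound the denominator. I would use $\int_{N_j}\Lambda_f\,d\Pi\ge\Pi(B_j)\int_{B_j}\Lambda_f\,d\Pi/\Pi(B_j)$ and apply Jensen to the normalized prior on $B_j$:
\[
\log\int_{B_j}\Lambda_f\,\frac{d\Pi}{\Pi(B_j)}\ge\int_{B_j}\log\Lambda_f\,\frac{d\Pi}{\Pi(B_j)}=:Z_n .
\]
For each $f$,
\[
\log\Lambda_f=\sum_i\Big[-\tfrac{1}{2\sigma^2}(f_0-f)^2(X_i)-\tfrac1{\sigma^2}\varepsilon_i(f_0-f)(X_i)\Big].
\]
Its mean is $-n\|f_0-f\|^2/(2\sigma^2)\ge-n\bar k_j$, using $\|f_0-f\|\le\kappa_j+\eta$ on $B_j$ by the triangle inequality. $Z_n$ is a sum of i.i.d. terms obtained by averaging the integrand over $B_j$, so I would bound its variance per observation. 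The noise term contributes at most $\E(f_0-f)^2/\sigma^2\le(\kappa_j+\eta)^2/\sigma^2$ after Jensen over $B_j$. The quadratic term contributes at most $\E(f_0-f)^4/(4\sigma^4)\le m_4/(4\sigma^4)$. The two terms are uncorrelated because $\E\varepsilon=0$ and $\varepsilon\indep X$. This gives $\Var Z_n\le n\bar v_j$, and Chebyshev yields $Z_n\ge-n\bar k_j-s\sqrt{n\bar v_j}$ with probability at least $1-1/s^2$.

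A union bound over the three events gives the stated probability and inequality. The main obstacle is this variance step. It requires interchanging the average over $B_j$ with the sum over $i$ (Fubini, justified by the fourth-moment bound), and checking that the variance bounds survive averaging over $f$, which follows from convexity of the square. The remaining steps are routine.
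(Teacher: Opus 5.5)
Your proposal is correct and follows the paper's proof essentially step for step. For (a) you use Pinsker with $\KL(P_0^n\|P_{h_j^*}^n)=t_j/2$. For (b) you use the posterior-odds decomposition, Markov on the informative numerator via $\E_0\Lambda(k)=1$, Jensen and Chebyshev on the $B_j$-restricted denominator exactly as in the paper's \Cref{lem:ceiling}, and a union bound.

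The differences are minor. You average over $B_j$ before splitting the variance into its noise and quadratic parts, whereas the paper first uses $\Var_0 Z_n\le\int\Var_0\log\Lambda_f\,d\Pi_B$ and then the per-$f$ formula. Also, identifying the denominator with $\int_{N_j}\Lambda_f\,d\Pi$ needs both halves of the structured setup: structures outside $\mathcal K_j$ are carried by $N_j$, and structures in $\mathcal K_j$ give $N_j$ zero mass. You state only the first.
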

Part (a) says that no procedure, Bayesian or not, can detect $x_j$ reliably unless $t_j$ is of order one. Part (b)
says that the exact posterior misses $x_j$ while $\frac{t_j}2(1+\eta/\kappa_j)^2\lesssim\log(1/\omega_j^I)$: the data must
overcome the prior odds against the informative structures. With $\omega_j^I=0$ and $\tau_n<1$, $x_j$ is missed for every
$n$ with probability at least $1-\delta_n$. Two further non-asymptotic statements are given in \Cref{app:proofs-detect}: a sparsity-prior ceiling (the
trivial-partition case of (b)), and a detection bound based on Le~Cam--Birg\'e tests (cited) for sup-norm-bounded
priors. Together, (a) and (b) identify two separate barriers to detection: the information in the data, which limits every method, and the prior weight of the right kind of structure, which no amount of signal can overcome when it is zero.

Part (b) suggests that the information needed for detection grows with $\log(1/\omega_j^I)$. To make this precise, we vary the informative weight while keeping everything else fixed.

To isolate the role of $\omega_j^I$, fix a reference structured prior with informative weight $\omega_0$. For $\omega\le\omega_0$, let
$\Pi_\omega$ rescale the informative structures to total weight $\omega$ and all others to $1-\omega$, keeping proportions within
each group. Vary the information index only through $n$, and let
\[
t_{50}(\omega)=\inf\big\{t=n\kappa_j^2/\sigma^2:\ P_0^n\big(\pi_j(D_n)>\tfrac12\big)\ge\tfrac12\big\}
\]
be the smallest information index at which $x_j$ is detected at least half of the time under $\Pi_\omega$. Let
$\psi^2=\inf\{\|f_0-f\|^2:f\in\operatorname{supp}\Pi^I\}$ be the \emph{informative gap}, where $\Pi^I$ is the
normalized informative part of the prior and the support is taken in $L_2(P_X)$. Let $\lambda_j=1-\psi^2/\kappa_j^2$ be the
\emph{efficiency} of the informative structures.

\begin{corollary}[Linear dependence on the log prior weight]\label{cor:prior-weight}
\begin{enumerate}
\item[(i)] If $\E f_0(X)^4<\infty$, the moment condition (M4) holds (prior draws have finite fourth moments under $P_X$: $\E_Xf(X)^4<\infty$ for $\Pi_{\omega_0}$-almost every $f$), \Cref{ass:c6} holds under $\Pi_{\omega_0}$ with $\sup_n\tau_n<1$ and $\sup_n\delta_n<\frac12$, and $h_j^*$ lies in the
$L_2(P_X)$-support of $\Pi_{\omega_0}$ restricted to $N_j$, then $\liminf_{\omega\to0}t_{50}(\omega)/\log(1/\omega)\ge2$.
\item[(ii)] Let $b>0$, $u=b^2/(2\sigma^2)$, $c_b=(1-e^{-u})/(4u)$, and let $c_{\rm T}=9/32$ be the constant of a cited test construction (\Cref{lem:tests}). Assume, under $\Pi_{\omega_0}$: (U1) all prior draws lie in one class $\mathcal F$ with $\|f-f_0\|_\infty\le b$; (U2) the set $\{f\in\mathcal F\cap N_j:\|f-f_0\|\ge\kappa_j\}$ is covered by finitely many Hellinger balls of radius $\sqrt{c_b}\,\kappa_j/(4\sigma)$ centred at densities $p_f$ with $f$ in that set; (U3) $f_0$ is approximable by the informative structures: $\Pi^I$ gives positive mass to $\{f:\KL(p_{f_0}\|p_f)\le\bar\varepsilon^2,\ \Var_{f_0}\log(p_f/p_{f_0})\le\bar\varepsilon^2\}$ for every $\bar\varepsilon>0$. If moreover $\gamma_j=0$ implies $f\in N_j$, $\Pi$-a.s.\ (\Cref{ass:labels}(a)),
then $\limsup_{\omega\to0}t_{50}(\omega)/\log(1/\omega)\le1/(c_{\rm T}c_b)$, and $1/(c_{\rm T}c_b)\to128/9$ as $u\to0$.
\end{enumerate}
\end{corollary}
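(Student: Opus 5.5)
For part (i) I would apply \Cref{prop:detect}(b) under $\Pi_\omega$ and track how each term depends on $\omega$. Rescaling the informative structures to total weight $\omega$ does three things. It gives $\omega_j^I=\omega$. It multiplies the non-informative mass by $(1-\omega)/(1-\omega_0)$, so $\Pi_\omega(B_j)\ge\frac{1-\omega}{1-\omega_0}\Pi_{\omega_0}(B_j)\ge c\,\Pi_{\omega_0}(B_j)$. Here I use that $N_j$ carries only non-informative structures, so $B_j$ lives in the non-informative part. It also scales the spurious mixture and $\int_{N_j}\Lambda_f\,d\Pi$ by the same factor, so \Cref{ass:c6} transfers from $\Pi_{\omega_0}$ to $\Pi_\omega$ with the same $\tau_n$ and $\delta_n$.

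Next I would check that $\Pi_{\omega_0}(B_j)>0$ for every $\eta>0$. I would take $m_4$ large and combine the support hypothesis on $h_j^*$ with (M4) and $\E f_0^4<\infty$. Since $\E(f_0-f)^4$ is a.s. finite, a large enough $m_4$ captures positive mass inside the $\eta$-ball.

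I would then fix $c_{\rm M}$ and $s$ so that $1-1/c_{\rm M}-1/s^2-\sup\delta_n>1/2$, which is possible because $\sup_n\delta_n<\frac12$. Setting $\bar\tau=\sup_n\tau_n<1$, the posterior odds fall below $1$, so $\pi_j\le\frac12$, as soon as $c_{\rm M}\omega\,\Pi(B_j)^{-1}\exp(n\bar k_j+s\sqrt{n\bar v_j})<1-\bar\tau$. Writing $n=t\sigma^2/\kappa_j^2$, this means $\frac t2(1+\eta/\kappa_j)^2+O(\sqrt t)<\log(1/\omega)-C$. On that range the detection probability is below $\frac12$, so $t_{50}(\omega)\ge 2\log(1/\omega)(1+\eta/\kappa_j)^{-2}(1+o(1))$. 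Letting $\omega\to0$ and then $\eta\to0$ gives the liminf bound $2$.

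For part (ii) I would prove an upper bound on the posterior odds against inclusion. Using \Cref{ass:labels}(a), I would write $1-\pi_j=\int_{N_j}\Lambda_f\,d\Pi_\omega\big/\int\Lambda_f\,d\Pi_\omega$ and bound the numerator and denominator separately. For the numerator, I would split $N_j$ into the part with $\|f-f_0\|\ge\kappa_j$, which is all of $N_j$ since $\kappa_j^2$ is the infimum over $N_j$. By (U1), the Hellinger distance between $p_f$ and $p_{f_0}$ is comparable to the $L_2$ distance. Specifically, $h^2(p_f,p_{f_0})=\E[1-e^{-(f-f_0)^2/(8\sigma^2)}]\ge c_b\|f-f_0\|^2/\sigma^2$ after constants, and I take this to be where $c_b=(1-e^{-u})/(4u)$ comes from. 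With the cover in (U2), the cited Le Cam--Birgé test lemma (\Cref{lem:tests}) gives tests $\phi_n$ with type I and type II errors at most $e^{-c_{\rm T}n c_b\kappa_j^2/\sigma^2}$ times the covering number. A Markov argument then gives $\int_{N_j}\Lambda_f\,d\Pi\le e^{-(c_{\rm T}c_b-\epsilon)t}$ with high probability. For the denominator, (U3) and the standard evidence lower bound of Ghosal--Ghosh--van der Vaart give $\int\Lambda_f\,d\Pi_\omega\ge\omega\,\Pi^I(K_{\bar\varepsilon})e^{-2n\bar\varepsilon^2}$ with probability tending to one. Taking $\bar\varepsilon^2$ to be a small multiple of $\kappa_j^2/\sigma^2$ makes this loss absorbable into the $\epsilon$.

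Combining the two bounds, $1-\pi_j<\frac12$ with probability above $\frac12$ once $(c_{\rm T}c_b-\epsilon')t>\log(1/\omega)+C$, which gives the limsup bound. The limit $c_b\to1/4$ as $u\to0$ yields $1/(c_{\rm T}c_b)\to4\cdot32/9=128/9$.

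I expect the main obstacle to be the uniformity in the test step. The test exponent has to come out as exactly $c_{\rm T}c_b\,t$, with the finite covering number contributing only an $O(1)$ term, and the probability bookkeeping has to keep the detection probability at least $\frac12$ rather than merely tending to one. My first attempt would use the tests only through $\E_0[(1-\phi_n)\int_{N_j}\Lambda_f\,d\Pi]\le\sup\E_f(1-\phi_n)$, together with Markov's inequality at a level $e^{\epsilon t}$.
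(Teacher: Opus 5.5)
Your proposal is correct and follows the paper's proof. Part (i) is exactly the paper's argument: \Cref{prop:detect}(b) under $\Pi_\omega$, combined with $\Pi_\omega(B_j)\ge\Pi_{\omega_0}(B_j)$ and the $\omega$-invariance of \Cref{ass:c6}. Part (ii) uses the same ingredients as the paper's \Cref{lem:detection}, namely \Cref{lem:tests}, \Cref{lem:evidence} and $\Pi_\omega(B)\ge\omega\,\Pi^I(B)$; the only difference is that you bound the numerator and denominator on high-probability events, whereas the paper bounds $\E_0[1-\pi_j]$ and then applies Markov. One small correction: \Cref{ass:labels}(a) gives only $1-\pi_j\le\Pi_\omega(N_j\mid D)$, not equality, but that is the direction you need anyway.
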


Taken together, (i) and (ii) show that the detection threshold grows linearly in $\log(1/\omega)$: for small $\omega$, each tenfold reduction of the informative prior weight raises the information needed for detection by roughly a constant amount. Apart from \Cref{ass:c6}, the conditions of (i) hold for Gaussian-process priors with bounded kernels and for the parametric families of our meta-prior with Gaussian weights, whenever the prior restricted to $N_j$ can approximate $h_j^*$. The constants in (i) and (ii) differ by a factor of about seven, and (ii) excludes Gaussian-process priors. The next
theorem removes both limitations. It replaces the tests by a uniform upper bound on the integrated likelihood
(\Cref{lem:ULLN}). That bound is in the spirit of \citet{shalizi2009dynamics} and of the Bayes-factor asymptotics of
\citet{chatterjee2018bayesfactor}, but is proved directly. The device that removes every sup-norm assumption is to
clip both $y$ and $f$ at a level $c$, which can only shrink each residual. The theorem uses, besides the moment condition (M4) of \Cref{cor:prior-weight}(i), two conditions on a set $A$ of functions and a radius $r\ge0$, stated
in full in \Cref{app:proofs-detect}: a clipped population gap (U$^\sharp$1), that after clipping $y$ and $f$ at a large level and restricting $x$ to a large cube, every $f\in A$ has expected squared residual at least $\sigma^2+r^2$ up to an arbitrarily small tolerance; and a sieve condition (U$^\sharp$2), that the prior restricted to $A$ puts all but mass $e^{-\beta n}$, for some $\beta>r^2/\sigma^2$, on sets whose clipped versions have sup-norm covering numbers on cubes of order $e^{o(n)}$. By \Cref{lem:sieve}, (U$^\sharp$2) holds for finite mixtures of Gaussian
processes with, for example, squared-exponential kernels, and of parametric families such as neural networks with Gaussian weights, linear models and random-feature expansions, provided kernels and architectures range over finitely many values. Families whose draws are discontinuous in their parameters (trees with random split points) and Gaussian processes with continuously distributed length-scales are not covered.

\begin{theorem}[Sharp detection threshold]\label{thm:csharp}
Assume \Cref{ass:regression}, $\E f_0(X)^4<\infty$ and (M4).
\begin{enumerate}
\item[(i)] (Upper half; no \Cref{ass:c6}, no sup-norm bound.) If $\psi<\kappa_j$ and the prior restricted to $N_j$ satisfies
the sieve condition (U$^\sharp$2) with $A=N_j$ and $r=\kappa_j$, then $\limsup_{\omega\to0}t_{50}(\omega)/\log(1/\omega)\le2/\lambda_j$.
\item[(ii)] (Lower half.) Assume \Cref{ass:c6} as in \Cref{cor:prior-weight}(i) and the support condition on $h_j^*$. If $\psi=0$, then
$\liminf_{\omega\to0}t_{50}(\omega)/\log(1/\omega)\ge2$. If $\psi>0$, assume in addition the \emph{clipped-gap hypothesis}: the
informative part is carried by a set $A$ satisfying (U$^\sharp$1)--(U$^\sharp$2) with $r=\psi$. Then
$\liminf_{\omega\to0}t_{50}(\omega)/\log(1/\omega)\ge2/\lambda_j$ (read $2/\lambda_j=\infty$ if $\psi\ge\kappa_j$).
\end{enumerate}
Consequently, under (i) and (ii), $\lim_{\omega\to0}t_{50}(\omega)/\log(1/\omega)=2/\lambda_j=2\kappa_j^2/(\kappa_j^2-\psi^2)$.
\end{theorem}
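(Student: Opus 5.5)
Both halves will follow from comparing the posterior odds of inclusion with the prior odds $\omega/(1-\omega)$, using a uniform law of large numbers for the log integrated likelihood (\Cref{lem:ULLN}) in place of the tests of \Cref{cor:prior-weight}(ii). Write the odds as
\[
\frac{\pi_j(D_n)}{1-\pi_j(D_n)}=\frac{\omega\,\Lambda^I+(1-\omega)\Lambda^{S}_{\rm rest}}{(1-\omega)\int_{N_j}\Lambda_f\,d\Pi^{N}},
\]
where $\Lambda^I=\int\Lambda_f\,d\Pi^I$, $\Pi^N$ is the normalized restriction to $N_j$, and $\Lambda^S_{\rm rest}$ collects the non-informative structures that use $x_j$. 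Since $\log\Lambda_f=-\frac{1}{2\sigma^2}\sum_i[(Y_i-f(X_i))^2-\varepsilon_i^2]$, the exponent is $-\frac{n}{2\sigma^2}(\|f_0-f\|^2+o(1))$ plus fluctuation terms. The two halves then reduce to matching exponential rates: the informative part behaves like $e^{-n\psi^2/(2\sigma^2)+o(n)}$, and the null part like $e^{-n\kappa_j^2/(2\sigma^2)+o(n)}$. Detection at probability $\frac12$ therefore occurs when $\log(1/\omega)\approx\frac{n}{2\sigma^2}(\kappa_j^2-\psi^2)=\frac{t_j}{2}\lambda_j$, which gives $t_{50}\approx(2/\lambda_j)\log(1/\omega)$.

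\emph{Upper half (i).} For the numerator, use a prior-mass lower bound. Since $\psi$ is the $L_2$ distance to $\operatorname{supp}\Pi^I$, for each $\epsilon>0$ the set $\{f:\|f-f_0\|^2\le\psi^2+\epsilon,\ \E(f_0-f)^4\le m_4\}$ has positive $\Pi^I$-mass; (M4) lets us choose $m_4$ so the mass stays positive. A Chebyshev argument, as in \Cref{prop:detect}(b) but for a lower bound (a second-moment or Fubini argument on $\int\Lambda_f\,d\Pi$), gives $\Lambda^I\ge e^{-n(\psi^2+2\epsilon)/(2\sigma^2)}$ with probability tending to one. For the denominator, apply \Cref{lem:ULLN} with $A=N_j$ and $r=\kappa_j$. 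The clipped population gap (U$^\sharp$1) holds automatically for $N_j$, because $\|f_0-f\|\ge\kappa_j$ on $N_j$ and clipping both $y$ and $f$ at level $c$ only shrinks residuals, up to a vanishing tolerance as $c\to\infty$. The sieve condition (U$^\sharp$2) bounds the remaining mass by $e^{-\beta n}$ with $\beta>\kappa_j^2/\sigma^2$, which is small enough. Together these give $\int_{N_j}\Lambda_f\,d\Pi^N\le e^{-n(\kappa_j^2-\epsilon)/(2\sigma^2)}$ eventually. The odds then exceed $1$ once $\log(1/\omega)<\frac{n}{2\sigma^2}(\kappa_j^2-\psi^2-3\epsilon)$, so $t_{50}\le(2+O(\epsilon))\log(1/\omega)/\lambda_j+O(1)$. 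Letting $\omega\to0$ and then $\epsilon\to0$ gives the bound.

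\emph{Lower half (ii).} The case $\psi=0$ is exactly \Cref{cor:prior-weight}(i), whose hypotheses are assumed. For $\psi>0$, start from \Cref{prop:detect}(b)'s decomposition. \Cref{ass:c6} with $\sup\tau_n<1$ handles the spurious structures. The support condition on $h_j^*$ together with (M4) gives the lower bound $\int_{N_j}\Lambda_f\,d\Pi^N\ge e^{-n(\kappa_j^2+\epsilon)/(2\sigma^2)}$ with probability at least $1-1/c_{\rm M}-1/s^2$, as in \Cref{prop:detect}(b). In place of the crude Markov bound on $\Lambda^I$ used there, apply \Cref{lem:ULLN} to the informative carrier set $A$ with $r=\psi$. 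The clipped-gap hypothesis makes this legitimate and gives $\Lambda^I\le e^{-n(\psi^2-\epsilon)/(2\sigma^2)}$ with probability tending to one. The odds then stay below $1$ while $\log(1/\omega)>\frac{n}{2\sigma^2}(\kappa_j^2-\psi^2+2\epsilon)+\log\frac{1}{1-\sup\tau_n}$. Since $\sup\delta_n<\frac12$ and the other failure probabilities can be made small, detection probability stays below $\frac12$ in this range. Hence $t_{50}\ge(2-O(\epsilon))\log(1/\omega)/\lambda_j-O(1)$. When $\psi\ge\kappa_j$, the same bound shows the odds stay bounded below $1$ for all $n$, so $t_{50}=\infty$. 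Combining the two halves gives the limit.

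The main obstacle is \Cref{lem:ULLN} itself, which both halves depend on: a uniform-in-$f$ upper bound on the integrated likelihood over a non-compact, unbounded class, with Gaussian $y$ and no sup-norm bound. I would prove it by truncation. First restrict to $\{\|x\|_\infty\le M\}$ and clip at level $c$, noting that clipping can only decrease $\sum_i(Y_i-f(X_i))^2$ relative to the residual sum it replaces. Then cover the sieve by $e^{o(n)}$ sup-norm balls of the clipped functions on the cube, apply a union bound with Hoeffding's inequality for bounded clipped residuals, and control the off-cube rows and the sieve complement by the $e^{-\beta n}$ mass and Markov's inequality. The delicate point is passing from a tolerance in the clipped gap to the unclipped exponent uniformly; I expect that to need careful ordering of the limits $c,M\to\infty$ after $n\to\infty$.
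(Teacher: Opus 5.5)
Your proposal is correct and follows the paper's proof essentially step for step. In (i) you use a Jensen--Chebyshev evidence lower bound near the informative gap $\psi$ against \Cref{lem:ULLN} on $N_j$ with $r=\kappa_j$; in (ii) you use \Cref{lem:ULLN} on the informative carrier with $r=\psi$ against the evidence lower bound on $N_j$ and \Cref{ass:c6}, with $\psi=0$ deferred to \Cref{cor:prior-weight}(i). Your sketch of \Cref{lem:ULLN} also matches the paper's clipping, cube, cover, Hoeffding and Markov argument.

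Two small points to tighten:
\begin{enumerate}
\item (U$^\sharp$1) holds uniformly on $N_j$ because $\operatorname{clip}_c f$ is itself still a function of $x_{-j}$ (this is \Cref{lem:U0}). The fact that clipping shrinks residuals points the wrong way for the needed lower bound.
\item In (ii), the failure probability of \Cref{lem:ULLN} is small only for large $n$. The finitely many small $n$ must be handled separately by \Cref{prop:detect}(b) at fixed $n$ as $\omega\to0$, as the paper does.
\end{enumerate}
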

The theorem identifies the constant. When the informative structures can represent how $f_0$ uses $x_j$ ($\psi=0$), detection requires an information index of about $2\log(1/\omega)$ for small $\omega$; when they can only approximate it ($\psi>0$), the requirement is inflated by the factor $1/\lambda_j$, and a variable whose effect they cannot capture better than by ignoring it ($\psi\ge\kappa_j$) is not detected once $\omega$ is small. The moment condition (M4) holds for Gaussian processes with bounded kernels, for bounded function classes, and for neural networks with Lipschitz activations when $\E|X|^4<\infty$.

\paragraph{The $\log p$ scale as a special case.} The prior-weight law contains the familiar dimension dependence of high-dimensional selection. The informative structures for $x_j$ all have $\gamma_j=1$, so $\omega_j^I\le\Pi(\gamma_j=1)$, which equals the mean sparsity $\bar w$ under the Beta--Bernoulli prior of \Cref{sec:metaprior}. If the prior is calibrated so that $\bar w\approx|S_0|/p$, \Cref{prop:detect}(b) shows that the exact posterior misses $x_j$ unless, up to the lower-order terms there, $t_j=n\kappa_j^2/\sigma^2\gtrsim2\log(p/|S_0|)$. For a linear truth, $\kappa_j^2=\beta_j^2\,\E\Var(X_j\mid X_{-j})$, so this is the beta-min condition $n\beta_{\min}^2\gtrsim\sigma^2\log p$ of sparse linear regression, recovered for nonlinear effects with $\kappa_j$ in place of $|\beta_j|$ and with an explicit constant.

The clipped-gap hypothesis requires (U$^\sharp$1), which holds when the informative structures are saturated, meaning that some minimizing
structure contains $\E[f_0\mid X_{V_k}]$ in its support (\Cref{lem:U0}), and (U$^\sharp$2) on the informative part, which holds for finite mixtures of Gaussian processes and of smooth parametric families (\Cref{lem:sieve}). It need not hold for restricted informative
classes, and \Cref{app:proofs-detect} gives a counterexample in which the $L_2$ gap is not the quantity that governs
the evidence. The theorem is asymptotic in $\omega$ and gives no explicit rate in $n$; the test-based
\Cref{cor:prior-weight}(ii) remains as a non-asymptotic statement with explicit constants. For an exactly enumerable
Gaussian-process toy prior with the truth $\sin(x_1x_2)+0.8x_3$, the theorem gives $\limsup t_{50}/\log(1/\omega)\le2$
without \Cref{ass:c6}, and limit~2 under \Cref{ass:c6} (\Cref{cor:toy}).

\paragraph{What this predicts for SelectPFN.}
The theorem concerns the exact posterior, and the network approximates it. It nevertheless gives a sharp
qualitative prediction. A variable that acts only through a function class carrying little meta-prior weight
needs an information index of about $2\log(1/\omega_j^I)/\lambda_j$ before it is detected, however strong its effect. When
that class has no weight, the variable is never detected.
A meta-prior in which every interacting variable also has a main effect gives odd interactions such as $\sin(x_3x_4)$, whose variables have no marginal effect, almost no informative weight; the interaction-only family of \Cref{sec:metaprior} gives them substantial weight. The ablation in \Cref{sec:exp-e8} confirms the prediction: without the interaction-only family, $x_3,x_4$ in design~C are included in 0.06 of the data sets at $n=800$ (averaged over the two variables), against 0.36 for SelectPFN and 0.52 for SelectPFN-L, and in a controlled study in which only the weight of the family varies, detection rises monotonically with the weight. For the exact posterior on the toy prior, $t_{50}$ grows
linearly in $\log(1/\omega)$, with the ratio $t_{50}/\log(1/\omega)$ falling from 7.4 to 4.0 and staying above 2
(\Cref{app:proofs-detect}; a numerical illustration).

\subsection{What amortization can and cannot guarantee for a fixed truth}\label{sec:theory-consistency}

Fix a truth $P_0$ with active set $S_0$ and indicator $\gamma^0$, and let $n\to\infty$, with $p=p_n$ allowed to grow. Consistency of the amortized selector needs two assumptions, one about the exact posterior and one about the trained network.

\begin{assumption}[Bayesian selection consistency with margin]\label{ass:D1}
$P_0^n\big(\max_j|\pi_j(D_n)-\gamma^0_j|\le\eta_n\big)\to1$ for some $\eta_n\to0$.
\end{assumption}
\begin{assumption}[Uniform amortization, sign version]\label{ass:uniform}
With $P_0^n$-probability tending to one, $\operatorname{sign}(q_j-\frac12)=\operatorname{sign}(\pi_j-\frac12)$ for
all $j$.
\end{assumption}
\Cref{ass:D1} is known for specific priors: linear spike-and-slab models under beta-min conditions, Gaussian-process
regression \citep{jiang2021gpselection}, and sparse deep networks \citep{sun2022consistent}. For a mixture meta-prior
it must still be verified. \Cref{ass:uniform} concerns the trained network at the truth. It holds, for example, whenever the network's inclusion probabilities stay within a fixed distance $\epsilon_0<\frac12$ of the exact ones on data sets that are typical under $P_0$, since under \Cref{ass:D1} the exact probabilities are then far from $\frac12$. Under these two assumptions, consistency of the amortized selector is immediate.

\begin{proposition}[Frequentist consistency of the amortized MPM]\label{thm:consistency}
Under \Cref{ass:D1} and \Cref{ass:uniform}, $P_0^n(\MPM_q=S_0)\to1$.
\end{proposition}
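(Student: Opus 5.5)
The plan is to intersect the two high-probability events and argue pointwise on the intersection. Let $A_n$ be the event in \Cref{ass:D1}, on which $\max_j|\pi_j(D_n)-\gamma^0_j|\le\eta_n$. Let $B_n$ be the event in \Cref{ass:uniform}, on which $\operatorname{sign}(q_j-\frac12)=\operatorname{sign}(\pi_j-\frac12)$ for all $j$. Both have $P_0^n$-probability tending to one. The union bound then gives
\[
P_0^n(A_n\cap B_n)\ge1-P_0^n(A_n^c)-P_0^n(B_n^c)\to1 .
\]
Note that this bound holds even though $p=p_n$ grows, because each assumption is already uniform over $j$.

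The deterministic step on $A_n\cap B_n$ goes as follows. Take $n$ large enough that $\eta_n<\frac12$. On $A_n$, every $j\in S_0$ has $\pi_j\ge1-\eta_n>\frac12$, and every $j\notin S_0$ has $\pi_j\le\eta_n<\frac12$. Hence $\MPM_\pi=S_0$, and no $\pi_j$ equals $\frac12$. On $B_n$, each $q_j-\frac12$ has the same strictly nonzero sign as $\pi_j-\frac12$. Thus $q_j>\frac12$ if and only if $j\in S_0$, that is, $\MPM_q=S_0$. Therefore $P_0^n(\MPM_q=S_0)\ge P_0^n(A_n\cap B_n)\to1$ for all large $n$, and the early $n$ with $\eta_n\ge\frac12$ do not affect the limit.

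The only delicate point is the tie convention at $\frac12$. A sign equality with $\pi_j=\frac12$ could allow $q_j=\frac12$, which is counted as exclusion. This is why I first use \Cref{ass:D1} to ensure that $\pi_j$ is bounded away from $\frac12$. Once that holds, the sign agreement in \Cref{ass:uniform} leaves no ambiguity. Apart from this, the argument is a direct composition of the two assumptions and has no real obstacle.
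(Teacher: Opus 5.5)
Your proof is correct and matches the paper's argument: intersect the events of \Cref{ass:D1} and \Cref{ass:uniform}, and observe that once $\eta_n<\frac12$ one has $\MPM_\pi=S_0$, so sign agreement forces $\MPM_q=\MPM_\pi=S_0$ on that intersection. Your explicit remark that \Cref{ass:D1} keeps every $\pi_j$ away from $\frac12$, which rules out ties in the sign condition, makes precise a point the paper's one-line proof leaves implicit.
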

No growth condition on $p_n$ is imposed beyond \Cref{ass:D1}: the admissible rate is that of the exact-posterior result at hand, for linear spike-and-slab priors typically $\log p_n=o(n)$ together with a beta-min condition, the scale identified after \Cref{thm:csharp}. \Cref{ass:uniform} concerns the trained network, which is pretrained for $p\le500$; larger $p$ is handled by the block-wise procedure of \Cref{sec:blockwise}, which the proposition does not cover.

The proposition shows that an amortized selector inherits the consistency of the Bayesian procedure it approximates, provided the approximation is accurate at the truth. The composition is elementary; the content lies in \Cref{ass:uniform}. One might hope that a small training loss implies \Cref{ass:uniform}. The
next result shows that it does not. The reason is that training certifies $q$ on data sets that are typical under
the prior predictive $P_D$, and these can carry little $P_D$-mass near a fixed truth. The construction uses the sieve-and-test condition (\Cref{ass:D3}) of \Cref{app:proofs-consistency}: there are sieves $\mathcal F_n$ on which $\|f-f_0\|_\infty\le b$ for some $b>0$, and radii $r_n\to0$ such that the tests of \Cref{lem:tests} separate $f_0$ from $\{f\in\mathcal F_n:\|f-f_0\|\ge r_n\}$ with vanishing error. It is the standard condition of posterior-contraction theory \citep{ghosal2000}, which holds for smooth parametric families and for bounded nonparametric classes with polynomial entropy (\Cref{lem:rates}).

\begin{proposition}[Average-case training does not transfer to a fixed truth]\label{prop:notransfer}
Work in the regression model of \Cref{ass:regression} with a joint prior on $(f,\gamma)$, at fixed size $(n,p)$.
\begin{enumerate}
\item[(1)] (Construction.) Assume \Cref{ass:D1}, and the sieve-and-test condition (\Cref{ass:D3}) of \Cref{app:proofs-consistency}. Assume $\pi_j\in(0,1)$. Then
there are events $A_n$ with $P_0^n(A_n)\to1$, and maps $q^{(n)}$ (which may depend on $f_0$) such that
$\MPM_{q^{(n)}}\ne\MPM_\pi$ with $P_0^n$-probability tending to one, while
$\Delta(q^{(n)})\le w(n,p)\,\mu_n\log(1/\varpi_n)$. Here $\varpi_n\in(0,\frac12)$ is arbitrary, $\mathcal F_n$ and $r_n$ are the sieves and radii of \Cref{ass:D3}, $c_{\rm T}=9/32$ and $c_b=(1-e^{-u})/(4u)$ with $u=b^2/(2\sigma^2)$, $b$ the sup-norm bound of \Cref{ass:D3}, and
$\mu_n=\Pi(\|f-f_0\|<r_n)+\Pi(\mathcal F_n^c)+2e^{-c_{\rm T}c_bnr_n^2/\sigma^2}$. In a parametric model of dimension $d_\theta$ satisfying the conditions of \Cref{lem:rates}(b),
$\Delta=O(n^{-d_\theta/2+\epsilon})$ for every $\epsilon>0$ is compatible with failure of \Cref{ass:uniform}. When the prior mass of $r_n$-balls around $f_0$ decays faster than every power of $n$
(\Cref{lem:rates}(a)), so does $\mu_n$.
\item[(2)] (Lower bound.) If $E_n$ is an event on which $\max_j|\pi_j-\gamma^0_j|\le\eta_n<\frac12$ and
$\MPM_q\ne\MPM_\pi$, then for every $\bar\varepsilon_n>0$, every set $B_n$ of functions $f$ with $\KL(p_{f_0}\|p_f)\le\bar\varepsilon_n^2$ and $\Var_{f_0}\log(p_f/p_{f_0})\le\bar\varepsilon_n^2$ (as in \Cref{lem:evidence}), and every $C>0$,
\[
\Delta(q)\ge w(n,p)\,\frac{2(\frac12-\eta_n)^2}{p}\,\Pi(B_n)\,e^{-(1+C)n\bar\varepsilon_n^2}
\Big(P_0^n(E_n)-\frac1{C^2n\bar\varepsilon_n^2}\Big).
\]
\end{enumerate}
\end{proposition}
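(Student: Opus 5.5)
The plan is to prove the two parts separately. Part (1) is an explicit construction: the counterexample map agrees with $\pi$ except on an event that is typical under $P_0^n$ but rare under the prior predictive $P_D$. Part (2) is a change of measure from $P_0^n$ to $P_D$, with the exponential price controlled by \Cref{lem:evidence}. Both parts start from \Cref{lem:decomp}: $\Delta(q)=\sum_{n,p}w(n,p)\Delta_{n,p}$, and only the size $(n,p)$ is modified. So the whole question reduces to bounding $\Delta_{n,p}=\E_{P_D}[p^{-1}\sum_j\operatorname{kl}(\pi_j\|q_j)]$ at that size.

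\emph{Part (1).} Let $\phi_n$ be the test of \Cref{ass:D3} (built from \Cref{lem:tests}) separating $f_0$ from $\{f\in\mathcal F_n:\|f-f_0\|\ge r_n\}$. Let $G_n=\{\max_j|\pi_j-\gamma^0_j|\le\eta_n\}$ be the consistency event of \Cref{ass:D1}, and set $A_n=\{\phi_n=0\}\cap G_n$. Then $P_0^n(A_n)\to1$, because both the type~I error and $P_0^n(G_n^c)$ vanish.

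Fix one index $j$. Define $q^{(n)}_k=\pi_k$ for $k\neq j$ and everywhere off $A_n$. On $A_n$, define $q^{(n)}_j=\varpi_n$ if $\pi_j>\frac12$ and $q^{(n)}_j=1-\varpi_n$ otherwise. This is a measurable function of $D$ alone, since $A_n$ is. On $A_n$ and for large $n$, $\pi_j$ lies strictly on one side of $\frac12$, so $q^{(n)}_j$ lies on the other side and $\MPM_{q^{(n)}}\ne\MPM_\pi$ on $A_n$.

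The divergence at the flipped coordinate satisfies $\operatorname{kl}(\pi_j\|\varpi_n)\le\pi_j\log(1/\varpi_n)+(1-\pi_j)\log\tfrac{1}{1-\varpi_n}$. After adjusting constants (using $\varpi_n<\frac12$ and the factor $1/p\le1$), this gives $\Delta_{n,p}\le P_D(A_n)\log(1/\varpi_n)$. I expect the constant bookkeeping here to be the fiddly point; if needed I would restate the bound with a harmless factor.

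It remains to show $P_D(A_n)\le\mu_n$. Write $P_D(A_n)=\int P_f^n(\phi_n=0)\,d\Pi(f)$ and split the integral into three regions:
\begin{itemize}
\item the ball $\{\|f-f_0\|<r_n\}$, which contributes at most its prior mass;
\item $\mathcal F_n^c$, which contributes at most $\Pi(\mathcal F_n^c)$;
\item the far part of the sieve, where the type~II error of \Cref{lem:tests} is at most $2e^{-c_{\rm T}c_bnr_n^2/\sigma^2}$.
\end{itemize}
For the parametric statement, use \Cref{lem:rates}(b): the prior mass of an $r_n$-ball is $\asymp r_n^{d_\theta}$. Choosing $r_n=n^{-1/2+\epsilon'}$ makes the exponential term superpolynomially small. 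Then taking, for example, $\varpi_n=n^{-1}$ gives $\Delta=O(n^{-d_\theta/2+\epsilon})$. The superpolynomial claim follows in the same way from \Cref{lem:rates}(a).

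\emph{Part (2).} On $E_n$, some $j$ has $\pi_j$ within $\eta_n$ of $\gamma^0_j\in\{0,1\}$ while $q_j$ lies on the other side of $\frac12$, so $|\pi_j-q_j|\ge\frac12-\eta_n$. Pinsker's inequality gives $\operatorname{kl}(\pi_j\|q_j)\ge2(\frac12-\eta_n)^2$. Keeping only this coordinate, $\Delta(q)\ge w(n,p)\,\frac{2(\frac12-\eta_n)^2}{p}\,P_D(E_n)$.

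Next, lower-bound $P_D(E_n)$ by changing measure:
\[
P_D(E_n)=\int P_f^n(E_n)\,d\Pi(f)=\E_0\Big[\ind_{E_n}\textstyle\int\Lambda_f\,d\Pi\Big]\ge\E_0\Big[\ind_{E_n}\textstyle\int_{B_n}\Lambda_f\,d\Pi\Big].
\]
By \Cref{lem:evidence}, $\int_{B_n}\Lambda_f\,d\Pi\ge\Pi(B_n)e^{-(1+C)n\bar\varepsilon_n^2}$ outside an event of $P_0^n$-probability at most $1/(C^2n\bar\varepsilon_n^2)$. Intersecting this event with $E_n$ gives the stated bound.

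The main obstacle is the bound $P_D(A_n)\le\mu_n$ in Part (1). The test has to be uniform over the far part of the sieve, and the event must retain $P_0^n$-probability tending to one while being $P_D$-rare. Part (2) is routine once \Cref{lem:evidence} is available.
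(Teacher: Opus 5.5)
Your proposal is correct and follows the paper's proof essentially step by step. In (1) you use the same test event $\{\varphi_n=0\}$, whose prior-predictive mass is split over the ball, $\mathcal F_n^c$ and the far part of the sieve to give $\mu_n$; in (2) you use the same Pinsker bound and change of measure $dP_{D,n}/dP_0^n=\int\Lambda_f\,d\Pi$ combined with \Cref{lem:evidence}. The differences are cosmetic (the paper flips every coordinate via $q_j=\max(\varpi_n,\min(1-\varpi_n,1-\pi_j))$ and handles the \Cref{ass:D1} event separately, while your single-coordinate flip even gains a factor $1/p$), and the bookkeeping you flagged is immediate, since $\operatorname{kl}(\pi_j\|q_j)\le\pi_j\log\frac1{\varpi_n}+(1-\pi_j)\log\frac1{1-\varpi_n}\le\log\frac1{\varpi_n}$ for $\varpi_n<\frac12$.
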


Part (2) shows that a training error below the prior-mass level $\Pi(B_n)e^{-(1+C)n\bar\varepsilon_n^2}$ does rule out
sign flips on $P_0$-typical data. Part (1) shows that errors only slightly above $\mu_n$ do not. \Cref{ass:uniform} must therefore come
from structure (the architecture and its approximation properties), not from the loss alone. This is the
selection analogue of the observation that the prior is not washed out in PFNs \citep{melnychuk2026pfn}.
A constructive route to \Cref{ass:uniform} for linear spike-and-slab priors at fixed $p$ is sketched in \Cref{app:proofs-consistency}.
We state it as a conjecture. It needs the network to know $n$, which SelectPFN's selection readout receives
through its global token.

\paragraph{Related theory.}
\citet{nagler2023pfn} shows that the bias of a PFN need not vanish without locality. \citet{melnychuk2026pfn} show
that PFN treatment-effect posteriors can be frequentist-inconsistent. \citet{frazier2024neural} study the
concentration of neural posterior estimators, and \citet{wakayama2025icl} give non-asymptotic Bayes-gap bounds for
in-context learning. In-context learning theory covers the Lasso \citep{bai2023transformers}, minimax nonparametric
regression \citep{kim2024transformers}, and single-index targets with low-dimensional structure \citep{oko2024pretrained}.
\citet{maceda2024vanbayes} compare neural inclusion probabilities with MCMC. LinearPFN \citep{schiekiera2026linearpfn}
trains a PFN on a linear spike-and-slab prior with interactions and gives no theory. To our knowledge, none of these
works gives selection guarantees for an amortized posterior, a detection threshold in terms of prior structural
weight, or finite-sample FDR control for a floating-point network.

\providecommand{\EoneMAE}{}\renewcommand{\EoneMAE}{0.037}
\providecommand{\EoneMPM}{}\renewcommand{\EoneMPM}{0.820}
\providecommand{\EoneMPMlo}{}\renewcommand{\EoneMPMlo}{0.803}
\providecommand{\EoneMPMhi}{}\renewcommand{\EoneMPMhi}{0.837}
\providecommand{\EoneNT}{}\renewcommand{\EoneNT}{0.916}
\providecommand{\EoneNTlo}{}\renewcommand{\EoneNTlo}{0.903}
\providecommand{\EoneNThi}{}\renewcommand{\EoneNThi}{0.928}
\providecommand{\EoneRef}{}\renewcommand{\EoneRef}{0.86}
\providecommand{\EoneNearTieShare}{}\renewcommand{\EoneNearTieShare}{44}
\providecommand{\EoneFracNoTie}{}\renewcommand{\EoneFracNoTie}{84}
\providecommand{\EoneAgreeSzero}{}\renewcommand{\EoneAgreeSzero}{0.98}
\providecommand{\EoneAgreeSseven}{}\renewcommand{\EoneAgreeSseven}{0.49}
\providecommand{\EoneLMAE}{}\renewcommand{\EoneLMAE}{0.028}
\providecommand{\EoneLMPM}{}\renewcommand{\EoneLMPM}{0.846}
\providecommand{\EoneLNT}{}\renewcommand{\EoneLNT}{0.942}
\providecommand{\EoneLNLL}{}\renewcommand{\EoneLNLL}{0.016}
\providecommand{\EoneRegret}{}\renewcommand{\EoneRegret}{0.0154}
\providecommand{\EoneRegretBound}{}\renewcommand{\EoneRegretBound}{0.031}
\providecommand{\EoneDisagree}{}\renewcommand{\EoneDisagree}{4.2}
\providecommand{\EoneLRegret}{}\renewcommand{\EoneLRegret}{0.0095}
\providecommand{\EoneLRegretBound}{}\renewcommand{\EoneLRegretBound}{0.021}
\providecommand{\ExactDevSmall}{}\renewcommand{\ExactDevSmall}{0.08}
\providecommand{\ExactDevLarge}{}\renewcommand{\ExactDevLarge}{0.47}
\providecommand{\EtwoNstrata}{}\renewcommand{\EtwoNstrata}{26}
\providecommand{\EtwoNobs}{}\renewcommand{\EtwoNobs}{10}
\providecommand{\EtwoFailing}{}\renewcommand{\EtwoFailing}{$|S|=0$, $|S|=1-3$, $|S|\ge 10$, SNR=null, family=null}
\providecommand{\EthreeAUPRCOne}{}\renewcommand{\EthreeAUPRCOne}{0.728}
\providecommand{\EthreeAUPRCTwos}{}\renewcommand{\EthreeAUPRCTwos}{0.754}
\providecommand{\EthreeAUPRCTwol}{}\renewcommand{\EthreeAUPRCTwol}{0.771}
\providecommand{\EthreeAUPRCDart}{}\renewcommand{\EthreeAUPRCDart}{0.499}
\providecommand{\EthreeAUPRCLasso}{}\renewcommand{\EthreeAUPRCLasso}{0.512}
\providecommand{\EthreeFDRTwol}{}\renewcommand{\EthreeFDRTwol}{0.070}
\providecommand{\EthreeFDRDart}{}\renewcommand{\EthreeFDRDart}{0.632}
\providecommand{\EthreePriorGain}{}\renewcommand{\EthreePriorGain}{+0.026}
\providecommand{\EthreePriorGainlo}{}\renewcommand{\EthreePriorGainlo}{0.024}
\providecommand{\EthreePriorGainhi}{}\renewcommand{\EthreePriorGainhi}{0.027}
\providecommand{\EthreeSizeGain}{}\renewcommand{\EthreeSizeGain}{+0.017}
\providecommand{\EthreeSizeGainlo}{}\renewcommand{\EthreeSizeGainlo}{0.016}
\providecommand{\EthreeSizeGainhi}{}\renewcommand{\EthreeSizeGainhi}{0.018}
\providecommand{\EthreeMinGap}{}\renewcommand{\EthreeMinGap}{0.16}
\providecommand{\EthreeMaxGap}{}\renewcommand{\EthreeMaxGap}{0.41}
\providecommand{\EthreeCorrFrac}{}\renewcommand{\EthreeCorrFrac}{100}
\providecommand{\EthreeWorstIndepGap}{}\renewcommand{\EthreeWorstIndepGap}{+0.004}
\providecommand{\RuntimeSPTwos}{}\renewcommand{\RuntimeSPTwos}{0.037}
\providecommand{\RuntimeSpeedTwos}{}\renewcommand{\RuntimeSpeedTwos}{416}
\providecommand{\RuntimeSPTwol}{}\renewcommand{\RuntimeSPTwol}{0.042}
\providecommand{\RuntimeSpeedTwol}{}\renewcommand{\RuntimeSpeedTwol}{363}
\providecommand{\RuntimeDart}{}\renewcommand{\RuntimeDart}{15.4}
\providecommand{\RuntimeBMA}{}\renewcommand{\RuntimeBMA}{0.21}
\providecommand{\EfourGMTwos}{}\renewcommand{\EfourGMTwos}{0.946}
\providecommand{\EfourWorstTwos}{}\renewcommand{\EfourWorstTwos}{1.039}
\providecommand{\EfourExceedTwos}{}\renewcommand{\EfourExceedTwos}{0}
\providecommand{\EfourGMTwol}{}\renewcommand{\EfourGMTwol}{0.950}
\providecommand{\EfourWorstTwol}{}\renewcommand{\EfourWorstTwol}{1.229}
\providecommand{\EfourExceedTwol}{}\renewcommand{\EfourExceedTwol}{7}
\providecommand{\EfourWorstSetting}{}\renewcommand{\EfourWorstSetting}{design B, $n=300$, $p=200$, $\rho=0.8$, SNR$=5$}
\providecommand{\DetCxThreeOne}{}\renewcommand{\DetCxThreeOne}{0.07}
\providecommand{\DetCxFourOne}{}\renewcommand{\DetCxFourOne}{0.06}
\providecommand{\ExactCeightOne}{}\renewcommand{\ExactCeightOne}{0.01}
\providecommand{\FPRmaxOne}{}\renewcommand{\FPRmaxOne}{0.013}
\providecommand{\DetCxThreeTwos}{}\renewcommand{\DetCxThreeTwos}{0.36}
\providecommand{\DetCxFourTwos}{}\renewcommand{\DetCxFourTwos}{0.37}
\providecommand{\ExactCeightTwos}{}\renewcommand{\ExactCeightTwos}{0.24}
\providecommand{\FPRmaxTwos}{}\renewcommand{\FPRmaxTwos}{0.013}
\providecommand{\DetCxThreeTwol}{}\renewcommand{\DetCxThreeTwol}{0.52}
\providecommand{\DetCxFourTwol}{}\renewcommand{\DetCxFourTwol}{0.51}
\providecommand{\ExactCeightTwol}{}\renewcommand{\ExactCeightTwol}{0.41}
\providecommand{\FPRmaxTwol}{}\renewcommand{\FPRmaxTwol}{0.012}
\providecommand{\EfiveFDRbig}{}\renewcommand{\EfiveFDRbig}{0.061}
\providecommand{\EfivePowbig}{}\renewcommand{\EfivePowbig}{0.47}
\providecommand{\EfiveFDRbigTwo}{}\renewcommand{\EfiveFDRbigTwo}{0.163}
\providecommand{\EfivePowbigTwo}{}\renewcommand{\EfivePowbigTwo}{0.64}
\providecommand{\EfivePowLassobig}{}\renewcommand{\EfivePowLassobig}{0.33}
\providecommand{\EfivePowsmallTwo}{}\renewcommand{\EfivePowsmallTwo}{0.49}
\providecommand{\EfivePowLassosmallTwo}{}\renewcommand{\EfivePowLassosmallTwo}{0.18}
\providecommand{\SeedSD}{}\renewcommand{\SeedSD}{0.0033}
\providecommand{\LowAUPRCLinearPFN}{}\renewcommand{\LowAUPRCLinearPFN}{0.866}
\providecommand{\LowAUPRCCLinearPFN}{}\renewcommand{\LowAUPRCCLinearPFN}{0.895}
\providecommand{\LowAUPRCTwos}{}\renewcommand{\LowAUPRCTwos}{0.916}
\providecommand{\LowAUPRCCTwos}{}\renewcommand{\LowAUPRCCTwos}{0.892}
\providecommand{\LowAUPRCTwol}{}\renewcommand{\LowAUPRCTwol}{0.923}
\providecommand{\LowAUPRCCTwol}{}\renewcommand{\LowAUPRCCTwol}{0.910}
\providecommand{\EsevenMinGap}{}\renewcommand{\EsevenMinGap}{0.17}
\providecommand{\EsevenMaxGap}{}\renewcommand{\EsevenMaxGap}{0.26}
\providecommand{\EsevenMaxP}{}\renewcommand{\EsevenMaxP}{$8.0\times10^{-174}$}
\providecommand{\EsevenBest}{}\renewcommand{\EsevenBest}{LassoNet}
\providecommand{\EsevenBestAUPRC}{}\renewcommand{\EsevenBestAUPRC}{0.365}
\providecommand{\EsevenAUPRCTwol}{}\renewcommand{\EsevenAUPRCTwol}{0.535}
\providecommand{\EsevenFDRTwol}{}\renewcommand{\EsevenFDRTwol}{0.23}
\providecommand{\EsevenFDRBest}{}\renewcommand{\EsevenFDRBest}{0.81}
\providecommand{\EsevenbNullTwol}{}\renewcommand{\EsevenbNullTwol}{0.110}
\providecommand{\EsevenbSelTwol}{}\renewcommand{\EsevenbSelTwol}{2.0}
\providecommand{\EsevenbNullTwos}{}\renewcommand{\EsevenbNullTwos}{0.116}
\providecommand{\EsevenbSelTwos}{}\renewcommand{\EsevenbSelTwos}{1.6}
\providecommand{\EsevenbNullLasso}{}\renewcommand{\EsevenbNullLasso}{0.390}
\providecommand{\EsevenbSelLasso}{}\renewcommand{\EsevenbSelLasso}{18.9}
\providecommand{\EsevenbNullLassoNet}{}\renewcommand{\EsevenbNullLassoNet}{0.276}
\providecommand{\EsevenbSelLassoNet}{}\renewcommand{\EsevenbSelLassoNet}{28.2}
\providecommand{\EsevenbNullBART}{}\renewcommand{\EsevenbNullBART}{0.499}
\providecommand{\EsevenbSelBART}{}\renewcommand{\EsevenbSelBART}{142.1}

\section{Simulation studies}\label{sec:experiments}

We evaluate SelectPFN in simulations that address four questions: how accurately it selects variables in nonlinear models that it never saw during training, how closely its amortized inclusion probabilities track the exact Bayesian posterior, whether restricting prediction to the selected variables costs accuracy, and whether knockoff mode controls the false discovery rate. \Cref{sec:exp-e7} then turns to real covariate matrices and real responses.

\subsection{Simulation setup}\label{sec:exp-setup}

\paragraph{SelectPFN models.} SelectPFN is trained on the meta-prior of \Cref{sec:metaprior} (\Cref{tab:metaprior}) at two sizes: \emph{SelectPFN}, with $d=128$, 6 layers and $1.9$M parameters, trained for 80k steps (4.9 GPU-hours on one RTX~4090), and \emph{SelectPFN-L}, with $d=256$, 8 layers and $9.5$M parameters, trained for 120k steps (17.8 GPU-hours). Training is done once; no model is retrained or tuned for any data set below. For specific studies we also train a model on a Gaussian-process prior whose posterior can be enumerated exactly, a knockoff model ($d=256$, 8 layers) on $2p$-column tables containing exact Gaussian knockoffs, and the ablation models of \Cref{sec:exp-e8}. Implementation details are in \Cref{app:implementation}.

\paragraph{Data-generating models.} None of the following designs is drawn from the meta-prior. Covariates are $\mathrm{AR}(1)$ Gaussian with correlation $\rho\in\{0,0.5,0.8\}$, transformed to uniform margins for design A, and the active set $S_0$ is placed at random positions. The response is $y=f(x_{S_0})+\varepsilon$ with Gaussian noise of variance $\sigma^2$ and signal-to-noise ratio $\mathrm{SNR}=\Var f/\sigma^2$.
\begin{itemize}
\item \textbf{A} (Friedman; \citealp{friedman1991mars}): $10\sin(\pi x_1x_2)+20(x_3-\tfrac12)^2+10x_4+5x_5$.
\item \textbf{B} (additive): $2\sin(2x_1)+x_2^2+x_3+e^{-x_4}+|x_5|$.
\item \textbf{C} (interactions and a threshold): $x_1x_2+\sin(x_3x_4)+\ind\{x_5>0\}x_6$. Here $x_3$ and $x_4$ affect the response only through $\sin(x_3x_4)$, so neither has a marginal association with $y$.
\item \textbf{D} (neural network): a two-hidden-layer $\tanh$ network (widths 16 and 8) on 8 inputs with a linear output, whose weights are Gaussian with variance $1/\text{fan-in}$ and biases $N(0,0.25)$, drawn once and kept fixed across replicates.
\item \textbf{L} and \textbf{L10} (sparse linear): $x_1-x_2+x_3-x_4+x_5$, and $\sum_{k=1}^{10}(-1)^{k+1}\beta_kx_k$ with $\beta_k$ equally spaced from $1$ to $0.3$.
\item \textbf{L20} and \textbf{B20} ($|S_0|=20$): $\sum_{k=1}^{20}(-1)^{k+1}\beta_kx_k$ with $\beta_k$ equally spaced from $1$ to $0.3$, and $\sum_{k=1}^{20}g_{k\bmod5}(x_k)$ with the five centred components of design B, $g_0(t)=2\sin(2t)$, $g_1(t)=t^2-1$, $g_2(t)=t$, $g_3(t)=e^{-t}-e^{1/2}$ and $g_4(t)=|t|-\sqrt{2/\pi}$, each applied to four variables.
\end{itemize}
The main study crosses $n\in\{100,300\}$, $p\in\{50,200,500\}$, the three values of $\rho$ and $\mathrm{SNR}\in\{1,5\}$, with 50 replicates per setting: 7{,}200 data sets. Further studies use the following settings.
\begin{itemize}
\item \emph{Support recovery:} $n$ up to $800$, plus the sparse linear designs L and L10; 30 replicates.
\item \emph{Low-dimensional slice:} $p=20$ and otherwise the main grid, so that LinearPFN, which accepts at most 30 predictors, can be compared on the same data sets.
\item \emph{Exact posterior:} 2{,}000 tasks from a Gaussian-process subset prior with $p\in\{6,\dots,12\}$, a shared squared-exponential kernel and hyperparameters on a 27-point grid, for which the posterior is computed by enumeration (\Cref{app:gp-subset}).
\item \emph{Calibration:} 10{,}000 held-out tasks from the meta-prior, about $1.06$ million variable--task pairs.
\item \emph{Knockoffs:} $n=300$, $p\in\{100,200\}$, $\rho=0.5$ (Gaussian margins also for design A), $\mathrm{SNR}\in\{1,5\}$ and 200 replicates, for designs A--D and the designs L20 and B20 with $|S_0|=20$.
\item \emph{Ablation:} design C, and for the prior-weight study design C with $p=50$, $\rho=0$, $n\in\{100,\dots,1000\}$ and $\mathrm{SNR}\in\{1,2,5,10,20\}$, 100 replicates (\Cref{sec:exp-e8}).
\end{itemize}

\subsection{Evaluation metrics}\label{sec:exp-metrics}
\begin{itemize}
\item \emph{Ranking accuracy:} the area under the precision--recall curve (AUPRC) of the variable scores against $S_0$, our primary metric because active variables are few, and the area under the ROC curve (AUROC).
\item \emph{Selected set:} the true positive rate (TPR), the false discovery proportion (FDP), and exact recovery, $\ind\{\hat S=S_0\}$. The selected set of SelectPFN is the median probability model (MPM), $\hat S=\{j:q_j>\frac12\}$; the other methods use their default or cross-validated rule.
\item \emph{Posterior accuracy:} the mean absolute difference $|q_j-\pi_j|$ between amortized and exact PIPs, the agreement of their MPMs, and the realized Bayes regret of \Cref{prop:regret}; for calibration, the expected calibration error (ECE) and the largest deviation of the reliability curve.
\item \emph{Prediction:} the held-out RMSE, reported as a ratio to TabPFN fitted on all columns.
\item \emph{Knockoff mode:} the empirical FDR and power of knockoff+ at levels $\alpha_{\FDR}\in\{0.1,0.2\}$.
\item \emph{Computation:} wall-clock time per data set.
\end{itemize}
All intervals are 95\% percentile bootstrap intervals over data sets, with 2{,}000 bootstrap resamples. Paired comparisons use the same data sets for all methods; they are tested with the two-sided Wilcoxon signed-rank test, with Holm-adjusted $p$-values within each table.

\subsection{Competing methods}\label{sec:exp-baselines}
We compare SelectPFN with fifteen methods that cover every family reviewed in \Cref{sec:related}.
\begin{itemize}
\item Penalized regression: the lasso, elastic net, SCAD and MCP \citep{tibshirani1996lasso,zou2005enet,fan2001scad,zhang2010mcp}, with penalties chosen by cross-validation.
\item Model-free screening: DC-SIS \citep{li2012dcsis} and HSIC-Lasso \citep{yamada2014hsic}, which rank variables only.
\item Gaussian-process regression with automatic relevance determination (GP-ARD).
\item Tree ensembles: BART and DART \citep{chipman2010bart,linero2018dart}. Their PIP is the fraction of posterior draws that use a variable; DART ranks variables by this PIP and BART by its average share of splits \citep{bleich2014bart}.
\item Sparse neural networks: LassoNet \citep{lemhadri2021lassonet}, a lightweight group-MCP sparse-input network$^\dagger$ in the spirit of GCRNet \citep{luo2025gcsnn}, and a sparse Bayesian network with a spike-and-slab prior on every weight$^\dagger$, after \citet{sun2022consistent}.
\item Foundation models: TabPFN v2 \citep{hollmann2025nature} with permutation importance, and with SHAP \citep{lundberg2017shap} on the low-dimensional slice.
\item Amortized selection: LinearPFN \citep{schiekiera2026linearpfn}, the closest amortized competitor, on the low-dimensional slice.
\end{itemize}
$^\dagger$ marks our own implementations: the sparse Bayesian network has no public code, and the group-MCP network is a fast, simplified version of GCRNet that fits the full benchmark at low cost; it is not tuned as in \citet{luo2025gcsnn}. For the sparse Bayesian network the sparsity level is chosen on a validation split; at these sample sizes it still selects conservatively, so its results are best read as a lower bound for the method. PACE \citep{qin2026pace} has no public implementation. For knockoff mode, the SelectPFN statistic $W_j=q_j-\tilde q_j$, with $\tilde q_j$ the inclusion probability of the knockoff of $x_j$, is compared with the lasso coefficient-difference and random-forest importance statistics, using exact or second-order knockoffs (Gaussian, with a Ledoit--Wolf covariance estimate; \citealp{ledoit2004}).

\subsection{Results}\label{sec:exp-results}

\subsubsection{Variable selection in nonlinear models}\label{sec:exp-e3}

\Cref{tab:e3main} summarizes selection over the 7{,}200 data sets of the main study, \Cref{tab:e3paired} gives paired comparisons, and \Cref{fig:auprc-design} breaks AUPRC down by design.

Both SelectPFN models rank variables more accurately than every competing method. SelectPFN-L reaches AUPRC 0.771, and SelectPFN 0.754, against 0.543 for the best competitor evaluated on all 7{,}200 data sets (LassoNet); its paired advantage over the competitors ranges from $\EthreeMinGap$ to $\EthreeMaxGap$, with every Holm-adjusted $p<10^{-100}$, and it is $0.16$ ahead of TabPFN with permutation importance. The advantage holds in every design (\Cref{fig:auprc-design}), and SelectPFN beats DART in $\EthreeCorrFrac$\% of the correlated settings and is never below it. Its selected sets are also far more reliable. The MPM of SelectPFN-L has FDP $\EthreeFDRTwol$, against $\EthreeFDRDart$ for DART and $0.38$--$0.92$ for the other methods with a selection rule (the sparse Bayesian network has a lower FDP only because it selects almost no variables), and it recovers the active set exactly in 21.5\% of the data sets, against at most 1.2\% for any competitor.

Two features of the method explain these gains. First, SelectPFN's inclusion probabilities average over all structures in a broad nonlinear meta-prior, so a variable is credited whether it acts additively, through an interaction or through a threshold; penalized and screening methods credit mainly the effects they model. Second, the MPM thresholds posterior probabilities rather than tuning a penalty for prediction, which is why its false discovery proportion is five to thirteen times lower than those of the cross-validated competitors. The larger network adds a further $\EthreeSizeGain$ [$\EthreeSizeGainlo$, $\EthreeSizeGainhi$] in AUPRC, mostly on design C, whose interaction-only variables are the subject of \Cref{sec:exp-e6,sec:exp-e8}.

\begin{table}[t]
\centering\scriptsize\setlength{\tabcolsep}{3pt}
\caption{Variable selection on designs A--D ($n\in\{100,300\}$, $p\in\{50,200,500\}$, $\rho\in\{0,0.5,0.8\}$, $\mathrm{SNR}\in\{1,5\}$; 50 replicates). TPR, FDP and exact recovery refer to each method's selected set: the MPM for SelectPFN, cross-validated or default selection for the others. Methods with fewer than 7{,}200 data sets were run on a subset of replicates; compare them through the paired results in \Cref{tab:e3paired}. $^\dagger$Our implementation. Bold: best value in each of the AUPRC, AUROC and exact-recovery columns.}\label{tab:e3main}
\begin{adjustbox}{max width=\linewidth}
\begin{tabular}{lrccccc}
\toprule
Method & data sets & AUPRC & AUROC & TPR (MPM) & FDP (MPM) & exact recovery \\
\midrule
SelectPFN & 7200 & 0.754 {\scriptsize[0.748, 0.759]} & 0.922 {\scriptsize[0.919, 0.924]} & 0.612 {\scriptsize[0.605, 0.619]} & 0.070 {\scriptsize[0.066, 0.073]} & 0.192 {\scriptsize[0.183, 0.202]} \\
SelectPFN-L & 7200 & \textbf{0.771} {\scriptsize[0.765, 0.776]} & \textbf{0.929} {\scriptsize[0.926, 0.931]} & 0.633 {\scriptsize[0.625, 0.640]} & 0.070 {\scriptsize[0.066, 0.073]} & \textbf{0.215} {\scriptsize[0.205, 0.224]} \\
DART & 7200 & 0.499 {\scriptsize[0.491, 0.506]} & 0.775 {\scriptsize[0.771, 0.780]} & 0.539 {\scriptsize[0.530, 0.547]} & 0.632 {\scriptsize[0.626, 0.639]} & 0.012 {\scriptsize[0.009, 0.014]} \\
BART & 7200 & 0.538 {\scriptsize[0.530, 0.545]} & 0.804 {\scriptsize[0.800, 0.808]} & 0.835 {\scriptsize[0.829, 0.842]} & 0.916 {\scriptsize[0.915, 0.918]} & 0.000 {\scriptsize[0.000, 0.000]} \\
Lasso & 7200 & 0.512 {\scriptsize[0.507, 0.518]} & 0.767 {\scriptsize[0.764, 0.771]} & 0.524 {\scriptsize[0.518, 0.531]} & 0.648 {\scriptsize[0.642, 0.653]} & 0.000 {\scriptsize[0.000, 0.000]} \\
Elastic net & 7200 & 0.490 {\scriptsize[0.485, 0.496]} & 0.769 {\scriptsize[0.766, 0.772]} & 0.566 {\scriptsize[0.560, 0.573]} & 0.756 {\scriptsize[0.751, 0.762]} & 0.000 {\scriptsize[0.000, 0.000]} \\
SCAD & 7200 & 0.513 {\scriptsize[0.507, 0.519]} & 0.763 {\scriptsize[0.759, 0.766]} & 0.500 {\scriptsize[0.493, 0.506]} & 0.570 {\scriptsize[0.563, 0.577]} & 0.000 {\scriptsize[0.000, 0.001]} \\
MCP & 7200 & 0.509 {\scriptsize[0.503, 0.515]} & 0.750 {\scriptsize[0.747, 0.754]} & 0.451 {\scriptsize[0.445, 0.458]} & 0.375 {\scriptsize[0.368, 0.382]} & 0.000 {\scriptsize[0.000, 0.001]} \\
DC-SIS & 7200 & 0.499 {\scriptsize[0.494, 0.505]} & 0.834 {\scriptsize[0.831, 0.837]} & -- & -- & -- \\
HSIC-Lasso & 7200 & 0.520 {\scriptsize[0.515, 0.526]} & 0.809 {\scriptsize[0.806, 0.811]} & -- & -- & -- \\
GP-ARD & 7200 & 0.503 {\scriptsize[0.496, 0.510]} & 0.835 {\scriptsize[0.832, 0.837]} & -- & -- & -- \\
Group-MCP NN$^\dagger$ & 7200 & 0.486 {\scriptsize[0.480, 0.492]} & 0.763 {\scriptsize[0.760, 0.767]} & 0.671 {\scriptsize[0.665, 0.676]} & 0.870 {\scriptsize[0.867, 0.873]} & 0.000 {\scriptsize[0.000, 0.000]} \\
LassoNet & 7200 & 0.543 {\scriptsize[0.536, 0.549]} & 0.805 {\scriptsize[0.802, 0.808]} & 0.573 {\scriptsize[0.566, 0.581]} & 0.628 {\scriptsize[0.621, 0.634]} & 0.000 {\scriptsize[0.000, 0.001]} \\
Sparse BNN$^\dagger$ & 7200 & 0.358 {\scriptsize[0.352, 0.365]} & 0.691 {\scriptsize[0.687, 0.695]} & 0.039 {\scriptsize[0.036, 0.041]} & 0.004 {\scriptsize[0.003, 0.005]} & 0.000 {\scriptsize[0.000, 0.000]} \\
TabPFN + perm.\ imp. & 1440 & 0.603 {\scriptsize[0.588, 0.618]} & 0.809 {\scriptsize[0.801, 0.818]} & -- & -- & -- \\
\bottomrule
\end{tabular}
\end{adjustbox}

\end{table}

\begin{table}[t]
\centering\scriptsize
\caption{Paired AUPRC differences on matched data sets. Holm adjustment is over all paired comparisons in \Cref{tab:e3-paired-full}.}\label{tab:e3paired}
\begin{adjustbox}{max width=\linewidth}
\begin{tabular}{lrccc}
\toprule
Comparison & matched & $\Delta$AUPRC [95\% CI] & Wilcoxon $p$ & Holm $p$ \\
\midrule
SelectPFN-L $-$ DART & 7200 & 0.271 [0.266, 0.277] & $<10^{-300}$ & $<10^{-300}$ \\
SelectPFN-L $-$ BART & 7200 & 0.233 [0.228, 0.237] & $<10^{-300}$ & $<10^{-300}$ \\
SelectPFN-L $-$ Lasso & 7200 & 0.259 [0.254, 0.263] & $<10^{-300}$ & $<10^{-300}$ \\
SelectPFN-L $-$ Elastic net & 7200 & 0.280 [0.276, 0.285] & $<10^{-300}$ & $<10^{-300}$ \\
SelectPFN-L $-$ SCAD & 7200 & 0.257 [0.252, 0.262] & $<10^{-300}$ & $<10^{-300}$ \\
SelectPFN-L $-$ MCP & 7200 & 0.261 [0.256, 0.266] & $<10^{-300}$ & $<10^{-300}$ \\
SelectPFN-L $-$ DC-SIS & 7200 & 0.271 [0.267, 0.275] & $<10^{-300}$ & $<10^{-300}$ \\
SelectPFN-L $-$ HSIC-Lasso & 7200 & 0.250 [0.246, 0.254] & $<10^{-300}$ & $<10^{-300}$ \\
SelectPFN-L $-$ GP-ARD & 7200 & 0.267 [0.262, 0.272] & $<10^{-300}$ & $<10^{-300}$ \\
SelectPFN-L $-$ Group-MCP NN$^\dagger$ & 7200 & 0.284 [0.280, 0.289] & $<10^{-300}$ & $<10^{-300}$ \\
SelectPFN-L $-$ LassoNet & 7200 & 0.228 [0.224, 0.232] & $<10^{-300}$ & $<10^{-300}$ \\
SelectPFN-L $-$ Sparse BNN$^\dagger$ & 7200 & 0.412 [0.406, 0.418] & $<10^{-300}$ & $<10^{-300}$ \\
SelectPFN-L $-$ TabPFN + perm.\ imp. & 1440 & 0.164 [0.156, 0.172] & $1.8\times10^{-190}$ & $5.5\times10^{-190}$ \\
SelectPFN-L $-$ SelectPFN & 7200 & 0.017 [0.016, 0.018] & $6.1\times10^{-133}$ & $6.1\times10^{-133}$ \\
\bottomrule
\end{tabular}
\end{adjustbox}

\end{table}

\begin{figure}[t]
\centering
\includegraphics[width=0.85\textwidth]{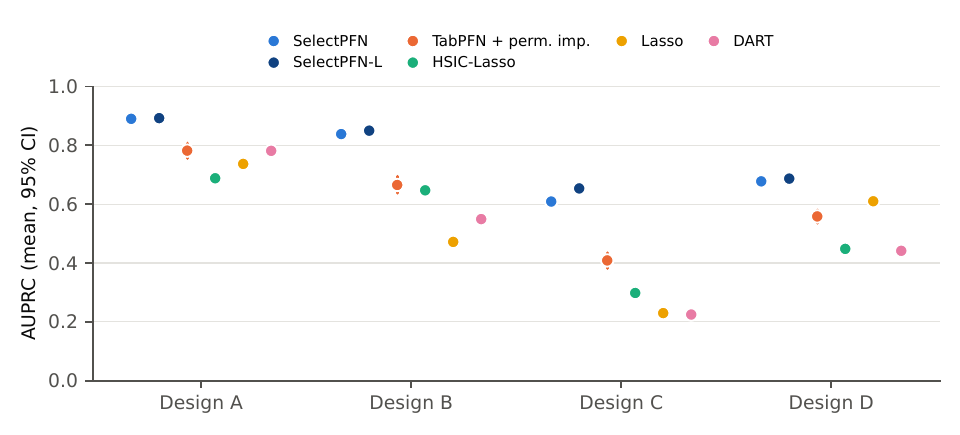}
\caption{Mean AUPRC by design with 95\% bootstrap intervals; most intervals are narrower than the markers. The two SelectPFN models are shades of one hue. Numerical values are in \Cref{tab:e3-design}.}\label{fig:auprc-design}
\end{figure}

\subsubsection{Comparison with an amortized linear selector}\label{sec:exp-lowdim}

On the low-dimensional slice ($p=20$; \Cref{tab:lowdim,tab:lowdim-paired}), every method can be compared with LinearPFN, which models main effects and pairwise products explicitly. Averaged over the four designs, SelectPFN outperforms it: AUPRC $\LowAUPRCTwos$ and $\LowAUPRCTwol$ for SelectPFN and SelectPFN-L, against $\LowAUPRCLinearPFN$ for LinearPFN, a paired advantage of $0.050$ [$0.046$, $0.054$] and $0.057$ [$0.054$, $0.061$]. The advantage comes from the designs that are not products of variables. On design C, where $x_1x_2$ and $\sin(x_3x_4)\approx x_3x_4$ near the origin suit LinearPFN's explicit product terms, the two are comparable ($\LowAUPRCCTwos$ for SelectPFN and $\LowAUPRCCLinearPFN$ for LinearPFN), and SelectPFN-L is ahead ($\LowAUPRCCTwol$). SelectPFN's selected set is also more conservative (FDP 0.068--0.070, against 0.113 for LinearPFN). Among the remaining methods, TabPFN with SHAP comes closest (0.825); both SelectPFN models exceed it (Holm-adjusted $p<10^{-84}$) and exceed every non-amortized method by at least 0.13 in mean AUPRC.

\begin{table}[t]
\centering\scriptsize\setlength{\tabcolsep}{3pt}
\caption{Low-dimensional slice: $p=20$, designs A--D, $n\in\{100,300\}$, three values of $\rho$, two SNRs, 50 replicates (2{,}400 data sets; TabPFN+SHAP 20 replicates). For LinearPFN, the variable score is the maximum PIP over the variable's main effect and all interactions involving it. Bold: best value in the AUPRC and per-design columns.}\label{tab:lowdim}
\begin{adjustbox}{max width=\linewidth}
\begin{tabular}{lrccccccc}
\toprule
Method & data sets & AUPRC & TPR (MPM) & FDP (MPM) & A & B & C & D \\
\midrule
SelectPFN & 2400 & 0.916 {\scriptsize[0.911, 0.920]} & 0.756 {\scriptsize[0.747, 0.766]} & 0.070 {\scriptsize[0.065, 0.075]} & 0.956 & 0.934 & 0.892 & 0.881 \\
SelectPFN-L & 2400 & \textbf{0.923} {\scriptsize[0.919, 0.927]} & 0.779 {\scriptsize[0.770, 0.789]} & 0.068 {\scriptsize[0.063, 0.073]} & \textbf{0.958} & \textbf{0.938} & \textbf{0.910} & \textbf{0.886} \\
LinearPFN & 2400 & 0.866 {\scriptsize[0.861, 0.871]} & 0.684 {\scriptsize[0.674, 0.693]} & 0.113 {\scriptsize[0.106, 0.120]} & 0.891 & 0.802 & 0.895 & 0.876 \\
TabPFN + SHAP & 960 & 0.825 {\scriptsize[0.815, 0.835]} & -- & -- & 0.913 & 0.831 & 0.739 & 0.818 \\
DART & 2400 & 0.743 {\scriptsize[0.734, 0.751]} & 0.684 {\scriptsize[0.670, 0.698]} & 0.397 {\scriptsize[0.385, 0.410]} & 0.869 & 0.742 & 0.635 & 0.724 \\
Lasso & 2400 & 0.719 {\scriptsize[0.712, 0.725]} & 0.632 {\scriptsize[0.621, 0.643]} & 0.442 {\scriptsize[0.433, 0.451]} & 0.846 & 0.676 & 0.532 & 0.821 \\
BART & 2400 & 0.736 {\scriptsize[0.728, 0.744]} & 1.000 {\scriptsize[1.000, 1.000]} & 0.700 {\scriptsize[0.698, 0.703]} & 0.871 & 0.743 & 0.619 & 0.711 \\
Group-MCP NN$^\dagger$ & 2400 & 0.766 {\scriptsize[0.760, 0.772]} & 0.853 {\scriptsize[0.845, 0.861]} & 0.574 {\scriptsize[0.566, 0.581]} & 0.847 & 0.708 & 0.686 & 0.822 \\
Sparse BNN$^\dagger$ & 2400 & 0.640 {\scriptsize[0.630, 0.649]} & 0.286 {\scriptsize[0.274, 0.297]} & 0.126 {\scriptsize[0.117, 0.136]} & 0.764 & 0.593 & 0.454 & 0.748 \\
\bottomrule
\end{tabular}
\end{adjustbox}

\end{table}

\subsubsection{Support recovery and interaction-only variables}\label{sec:exp-e6}

\Cref{tab:e6} reports exact recovery and the false-negative rate (FNR; fraction of active variables missed) of the MPM as $n$ grows; the full grid is in \Cref{tab:e6-full}. Exact recovery improves steadily with $n$, and false positives are rare throughout: the false-positive rate (FPR; fraction of inactive variables selected) is at most $\FPRmaxTwol$ in every design and at every $n$, so the remaining errors are almost all misses of weak effects.

Design C is the most demanding case, because $x_3$ and $x_4$ act only through $\sin(x_3x_4)$ and have no marginal association with $y$. Even so, SelectPFN includes them in $0.36$ of the data sets at $n=800$ (averaged over $x_3$ and $x_4$), and SelectPFN-L in $0.52$, with exact recovery on design C of $\ExactCeightTwos$ and $\ExactCeightTwol$. The next subsection shows that this ability comes from the interaction-only family of the meta-prior.

\begin{table}[t]
\centering\footnotesize
\caption{Exact recovery of $S_0$ by the MPM and its false-negative rate (FNR) (30 replicates per setting, averaged over $p\in\{50,200,500\}$, $\rho$ and SNR). The full grid is in \Cref{tab:e6-full}.}\label{tab:e6}
\begin{adjustbox}{max width=\linewidth}
\begin{tabular}{lrcccc}
\toprule
Design & $n$ & SelectPFN exact & SelectPFN FNR & SelectPFN-L exact & SelectPFN-L FNR \\
\midrule
C & 300 & 0.07 & 0.38 & 0.15 & 0.32 \\
C & 800 & 0.24 & 0.22 & 0.41 & 0.16 \\
D & 300 & 0.02 & 0.35 & 0.04 & 0.34 \\
D & 800 & 0.23 & 0.16 & 0.36 & 0.13 \\
L10 & 300 & 0.28 & 0.16 & 0.33 & 0.16 \\
L10 & 800 & 0.53 & 0.05 & 0.59 & 0.05 \\
\bottomrule
\end{tabular}
\end{adjustbox}

\end{table}

\subsubsection{Ablation: what the meta-prior must contain}\label{sec:exp-e8}

\Cref{thm:csharp} predicts that a variable can be detected only if the meta-prior gives weight to the kind of structure through which it acts. We test this prediction with two ablations at $d=128$.

\paragraph{Removing the interaction-only family.} We trained SelectPFN on a meta-prior without the interaction-only family and the weak-effect regime, in which every interacting variable of the pairwise family also has a main effect and the remaining families are reweighted (\Cref{tab:metaprior}). This model essentially never finds the interaction-only variables of design C (\Cref{tab:ablation}, \Cref{fig:design-c}): it includes $x_3,x_4$ in $0.06$ of the data sets at $n=800$, against $0.36$ for SelectPFN, and recovers the active set of design C exactly in $\ExactCeightOne$ of them, against $\ExactCeightTwos$. Over all designs, the interaction-only family adds $\EthreePriorGain$ [$\EthreePriorGainlo$, $\EthreePriorGainhi$] in AUPRC, almost entirely on design C, without lowering accuracy on the other designs.

\paragraph{Varying the prior weight.} Because this ablation changes more than one aspect of the meta-prior, we also trained nine models for 40k steps that differ \emph{only} in the weight $\omega\in\{0,0.01,0.03,0.12,0.30\}$ of the interaction-only family (two training seeds for each $\omega>0$), and evaluated each on 3{,}000 data sets from design C. Detection of $x_3,x_4$ is summarized by the information index $t_j=n\kappa_j^2/\sigma^2$ of \Cref{thm:csharp} and by $t_{50}$, the index at which detection reaches one half (\Cref{tab:e8,fig:e8}).

Detection rises monotonically with the prior weight, as the theory predicts. At $n=800$ and SNR 5, the seed-averaged detection is 0.01, 0.06, 0.12, 0.57 and 0.82 for $\omega=0,0.01,0.03,0.12,0.30$; at the largest index in the grid it is 0.09, 0.30, 0.40, 0.99 and 1.00. With $\omega=0$ the variables are essentially never found, at any sample size or SNR, as \Cref{prop:detect}(b) predicts. The threshold $t_{50}$ lies beyond the grid for $\omega\le0.03$, and falls to 536--622 at $\omega=0.12$ and to 275--394 at $\omega=0.30$. The extra power costs nothing in false selections: null variables enter the MPM at a rate of 0.003--0.005 for every $\omega$, and AUPRC on design C rises from 0.795 to 0.895.

The trained networks' threshold grows faster as $\omega$ falls than the logarithmic rate that \Cref{thm:csharp} gives for the exact posterior. An amortized model must also learn to recognize a structure from the training tasks, and rarely generated structures are learned less well. For an amortized selector, the weight of a structure in the meta-prior therefore matters at least as much as the theory for the exact posterior indicates, and the meta-prior should give substantial weight to every structure one wants to detect, as SelectPFN's does.

\begin{table}[t]
\centering\scriptsize
\caption{Ablation of the interaction-only family ($d=128$; main simulation study, low-dimensional slice and support-recovery study). Detection: how often the MPM includes $x_3,x_4$, which act only through $\sin(x_3x_4)$ in design C. Bold: better of the two models.}\label{tab:ablation}
\begin{adjustbox}{max width=\linewidth}
\begin{tabular}{lccccc}
\toprule
Model ($d{=}128$) & AUPRC, all designs & AUPRC, design C & AUPRC, design C, $p{=}20$ & detection of $x_3,x_4$, C, $n{=}800$ & exact recovery, C, $n{=}800$ \\
\midrule
SelectPFN (full meta-prior) & \textbf{0.754} {\scriptsize[0.748, 0.759]} & \textbf{0.609} & \textbf{0.892} & \textbf{0.36} & \textbf{0.24} \\
SelectPFN, no interaction-only family & 0.728 {\scriptsize[0.722, 0.734]} & 0.503 & 0.823 & 0.06 & 0.01 \\
\bottomrule
\end{tabular}
\end{adjustbox}

\end{table}

\begin{figure}[t]
\centering
\includegraphics[width=0.46\textwidth]{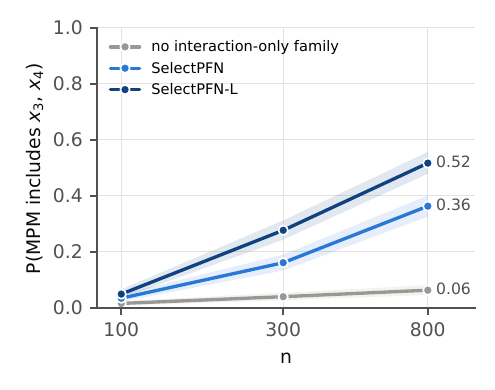}
\caption{Design C: how often the MPM includes the interaction-only variables $x_3,x_4$ (average of the two; band: 95\% bootstrap interval over data sets). Without the interaction-only family the meta-prior gives such structures no weight, and the variables are essentially never found, as \Cref{cor:prior-weight} and \Cref{thm:csharp} predict.}\label{fig:design-c}
\end{figure}

\begin{figure}[t]
\centering
\includegraphics[width=0.9\textwidth]{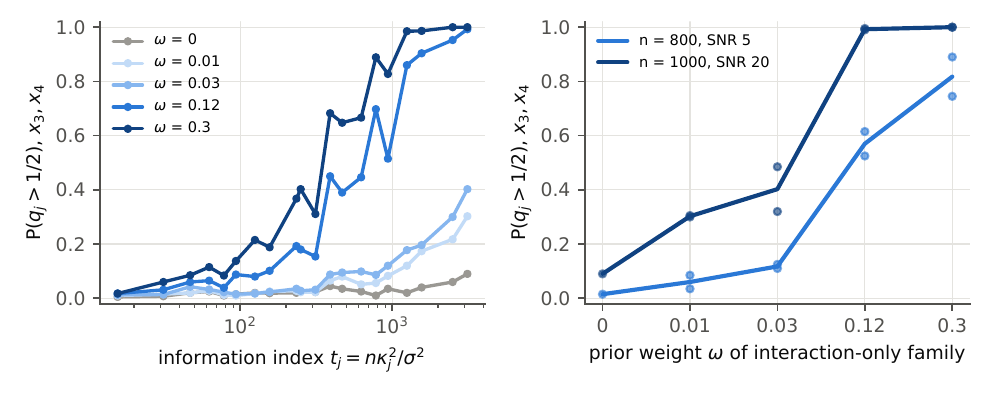}
\caption{Models that differ only in the prior weight $\omega$ of the interaction-only family. Left: detection of $x_3,x_4$ (acting only through $\sin(x_3x_4)$ in design C) against the information index $t_j$, seeds pooled. Right: detection at two fixed $(n,\mathrm{SNR})$; dots are training seeds, lines their mean.}\label{fig:e8}
\end{figure}

\begin{table}[t]
\centering\footnotesize
\caption{Prior-weight study on design C ($p=50$, $\rho=0$; 3{,}000 data sets per model). $t_{50}$: information index $t_j=n\kappa_j^2/\sigma^2$ at which detection ($q_j>\tfrac12$) reaches one half, from a logistic fit in $\log t_j$ with a bootstrap 95\% CI over data sets; ``not reached'': beyond the grid; $<56$: detection exceeds one half already at the smallest index. Det.: detection of $x_3,x_4$ at the stated $(n,\mathrm{SNR})$. ``Reference'' models are the main models, which differ in more than $\omega$.}\label{tab:e8}
\begin{adjustbox}{max width=\linewidth}
\begin{tabular}{lccccc}
\toprule
Model & $t_{50}$, $x_3,x_4$ & $t_{50}$, $x_1,x_2$ & det.\ $x_3,x_4$, $n{=}800$, SNR 5 & det.\ $x_3,x_4$, $n{=}1000$, SNR 20 & AUPRC \\
\midrule
$\omega=0.00$, seed 0 & not reached & 91 [79, 99] & 0.01 & 0.09 & 0.795 \\
$\omega=0.01$, seed 0 & not reached & 111 [101, 121] & 0.04 & 0.30 & 0.810 \\
$\omega=0.01$, seed 1 & not reached & 96 [87, 105] & 0.09 & 0.30 & 0.814 \\
$\omega=0.03$, seed 0 & not reached & 92 [81, 101] & 0.12 & 0.32 & 0.823 \\
$\omega=0.03$, seed 1 & not reached & 86 [76, 94] & 0.11 & 0.48 & 0.825 \\
$\omega=0.12$, seed 0 & 536 [513, 562] & 60 & 0.61 & 0.99 & 0.870 \\
$\omega=0.12$, seed 1 & 622 [591, 655] & 61 & 0.53 & 0.99 & 0.873 \\
$\omega=0.30$, seed 0 & 275 [262, 289] & $<56$ & 0.89 & 1.00 & 0.901 \\
$\omega=0.30$, seed 1 & 394 [376, 415] & $<56$ & 0.74 & 1.00 & 0.888 \\
no interaction-only family (reference) & not reached & 86 [77, 94] & 0.01 & 0.04 & 0.791 \\
SelectPFN-L (reference) & 158 [149, 166] & $<56$ & 0.99 & 1.00 & 0.924 \\
SelectPFN (reference) & 249 [236, 262] & $<56$ & 0.97 & 1.00 & 0.905 \\
SelectPFN, seed 1 (reference) & 405 [386, 427] & 57 & 0.76 & 1.00 & 0.888 \\
SelectPFN, seed 2 (reference) & 354 [338, 373] & 58 & 0.80 & 1.00 & 0.894 \\
\bottomrule
\end{tabular}
\end{adjustbox}

\end{table}

\subsubsection{Accuracy of the amortized posterior}\label{sec:exp-e1}

\paragraph{Agreement with an exact posterior.} On the Gaussian-process subset prior the posterior can be enumerated, so the amortized PIPs can be compared with the exact ones (\Cref{tab:e1}). The amortized posterior is close to the exact one and approaches it steadily as training compute grows. The mean absolute PIP error is $0.047$ after 25k training steps, $\EoneMAE$ after 100k steps and $\EoneLMAE$ for the larger model ($d=256$, 8 layers), and the gap in predictive log loss to the exact posterior falls from $0.036$ to $0.024$ and $\EoneLNLL$. The selected sets agree exactly with the exact MPM on $\EoneMPM$ of the tasks for the $d=128$ model and on $\EoneLMPM$ for the larger one, and on $\EoneLNT$ of the tasks without a near-tie. The remaining disagreements are cheap. Agreement is an exact-match statistic over all variables of a task, and it falls as the true support grows even though $95.8\%$ of individual variables agree. The disagreements concentrate on near-ties: the 5\% of variables with $|\pi_j-\frac12|<0.1$ account for $\EoneNearTieShare$\% of them, and there the choice hardly matters for the Bayes risk. Indeed, the realized Bayes regret of the amortized MPM is $\EoneLRegret$ per variable for the larger model, within the bound $\EoneLRegretBound$ of \Cref{prop:regret}. DART and GP-ARD, which estimate the same quantities by per-data-set inference, are much further from the exact posterior.

\begin{table}[t]
\centering\footnotesize
\caption{Amortized vs.\ exact posterior on 2{,}000 Gaussian-process subset tasks. MPM agreement: exact match of $\{j:q_j>\tfrac12\}$ and $\{j:\pi_j>\tfrac12\}$ on a task. No near-ties: tasks with all $|\pi_j-\tfrac12|\ge0.1$. $^\ddagger$Exact $\pi$ plus random errors of the network's size (sign-randomized $|q-\pi|$, permuted within the task). NLL $-$ exact: mean predictive negative log-likelihood minus that of the exact posterior. $^\ast$GP-ARD, fitted by type-II maximum likelihood (ML-II), has no PIP; its selected set is the variables with ARD length-scale $\ell_j<5$. Bold: best value in each column, excluding the reference column.}\label{tab:e1}
\begin{adjustbox}{max width=\linewidth}
\begin{tabular}{lccccc}
\toprule
Method & mean $|q-\pi|$ & MPM agreement & agreement, no near-ties & reference$^\ddagger$ & NLL $-$ exact \\
\midrule
SelectPFN, $d{=}128$, 25k steps & 0.047 {\scriptsize[0.046, 0.049]} & 0.780 {\scriptsize[0.762, 0.799]} & 0.875 {\scriptsize[0.860, 0.891]} & 0.832 {\scriptsize[0.816, 0.848]} & 0.036 \\
SelectPFN, $d{=}128$, 100k steps & 0.037 {\scriptsize[0.036, 0.039]} & 0.820 {\scriptsize[0.803, 0.837]} & 0.916 {\scriptsize[0.903, 0.928]} & 0.861 {\scriptsize[0.846, 0.875]} & 0.024 \\
SelectPFN, $d{=}256$, 8 layers, 80k steps & \textbf{0.028} {\scriptsize[0.027, 0.030]} & \textbf{0.846} {\scriptsize[0.831, 0.862]} & \textbf{0.942} {\scriptsize[0.931, 0.953]} & 0.884 {\scriptsize[0.871, 0.898]} & \textbf{0.016} \\
DART & 0.335 & 0.156 & 0.167 & -- & 0.180 \\
GP-ARD (ML-II)$^\ast$ & -- & 0.271 & 0.307 & -- & 0.117 \\
\bottomrule
\end{tabular}
\end{adjustbox}

\end{table}

\paragraph{Calibration.} On 10{,}000 held-out meta-prior tasks (\Cref{tab:e2}, \Cref{fig:reliability}), the inclusion probabilities are very well calibrated: the ECE is $0.0005$ for SelectPFN and $0.0009$ for SelectPFN-L, and the reliability curve stays within $\pm0.05$ in every stratum defined by an observable quantity ($n$, $p$ or the dependence type). Within strata defined by latent quantities, such as the true number of active variables, calibration is not expected even of the exact posterior, which we confirmed on the exact-posterior tasks; per-stratum results are in \Cref{tab:e2-strata}.

\begin{table}[t]
\centering\footnotesize
\caption{Calibration of $q$ on 10{,}000 held-out meta-prior tasks (equal-mass bins). The last column counts the strata, defined by $n$, $p$ or the dependence type, in which the reliability curve stays within $\pm0.05$. Max dev.: largest absolute deviation of the reliability curve from the diagonal.}\label{tab:e2}
\begin{adjustbox}{max width=\linewidth}
\begin{tabular}{lcccc}
\toprule
Model & ECE & max dev. & AUPRC & strata within $\pm0.05$ \\
\midrule
SelectPFN & 0.0005 & 0.0034 & 0.691 & 10/10 \\
SelectPFN-L & 0.0009 & 0.0035 & 0.706 & 10/10 \\
\bottomrule
\end{tabular}
\end{adjustbox}

\end{table}

\begin{figure}[t]
\centering
\includegraphics[width=0.48\textwidth]{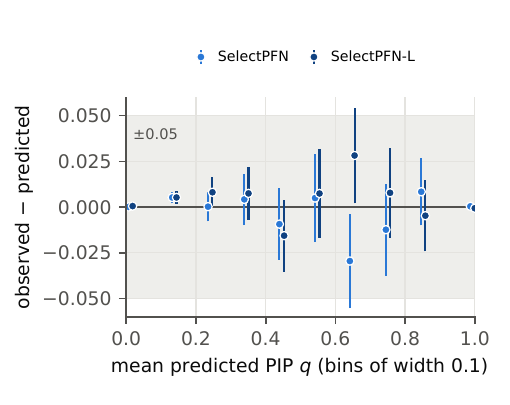}
\caption{Reliability on held-out meta-prior tasks: observed frequency of activity minus mean $q$ in bins of width 0.1, with 95\% binomial intervals. The shaded band is $\pm0.05$.}\label{fig:reliability}
\end{figure}

\subsubsection{Prediction from the selected variables}\label{sec:exp-e4}

Restricting prediction to the selected variables does not cost accuracy. Over the 144 settings of the main study, the model-averaged prediction of SelectPFN is \emph{more} accurate than TabPFN fitted on all $p$ columns, with geometric-mean RMSE ratios $\EfourGMTwos$ for SelectPFN and $\EfourGMTwol$ for SelectPFN-L (\Cref{tab:e4}). For SelectPFN the ratio stays below $1.05$ in every setting. Discarding irrelevant columns removes noise that a model using all columns must learn to ignore, which is why selection can improve prediction at these sample sizes. Relative to an oracle that is given the true active set, predicting from the MPM costs $8.7\%$ in RMSE, and averaging over sampled subsets reduces this to $7.3\%$. For SelectPFN-L, accuracy drops on design B at $n=300$ and $\mathrm{SNR}=5$ in the oracle-subset prediction as well, so the loss lies in the prediction pass rather than in selection; about half of it comes from extrapolation to extreme responses (\Cref{tab:e6-rmse}). We therefore use SelectPFN ($d=128$) for prediction.

\begin{table}[t]
\centering\footnotesize
\caption{Prediction on the 144 settings of the main study (geometric means of RMSE ratios over settings; 95\% bootstrap interval over settings). BMA: SelectPFN's model-averaged prediction over 64 sampled subsets; TabPFN-all: TabPFN fitted on all $p$ columns; settings $>1.05$ and worst: number of settings with BMA/TabPFN-all ratio above $1.05$, and the largest such ratio. MPM/oracle subset: prediction from the MPM relative to prediction from the true active set. Oracle subset/TabPFN-$S_0$ compares SelectPFN's prediction pass with TabPFN when both are given the true active set. BMA CRPS: mean continuous ranked probability score of the BMA predictive distribution.}\label{tab:e4}
\begin{adjustbox}{max width=\linewidth}
\begin{tabular}{lcccccc}
\toprule
Model & BMA/TabPFN-all (geo-mean) & settings $>1.05$ & worst & MPM/oracle subset & oracle subset/TabPFN-$S_0$ & BMA CRPS \\
\midrule
SelectPFN & 0.946 {\scriptsize[0.936, 0.955]} & 0/144 & 1.039 & 1.087 & 0.974 & 1.228 \\
SelectPFN-L & 0.950 {\scriptsize[0.936, 0.963]} & 7/144 & 1.229 & 1.087 & 0.981 & 1.216 \\
\bottomrule
\end{tabular}
\end{adjustbox}

\end{table}

\subsubsection{False discovery rate control in knockoff mode}\label{sec:exp-e5}

With exact knockoffs, the FDR is at or below the nominal level for every statistic and design (\Cref{tab:e5}, \Cref{fig:knockoff}), as \Cref{thm:knockoff} guarantees. Among valid statistics, SelectPFN is the most powerful. On the designs with $|S_0|=20$ its power at $\alpha_{\FDR}=0.1$ is $\EfivePowbig$, against $\EfivePowLassobig$ for the lasso statistic, with FDR $\EfiveFDRbig$; at $\alpha_{\FDR}=0.2$ its power is $\EfivePowbigTwo$, with FDR $\EfiveFDRbigTwo$. On designs A--D at $\alpha_{\FDR}=0.2$ its power is $\EfivePowsmallTwo$, against $\EfivePowLassosmallTwo$ for the lasso statistic. With only five to eight active variables, knockoff+ at $\alpha_{\FDR}=0.1$ cannot reach its threshold, which requires at least ten rejections, so power at that level is near zero for every statistic. With second-order knockoffs the FDR rises slightly, to $0.08$ at $\alpha_{\FDR}=0.1$, consistent with the guarantee holding to the extent that the knockoffs are valid.

\begin{figure}[t]
\centering
\includegraphics[width=0.85\textwidth]{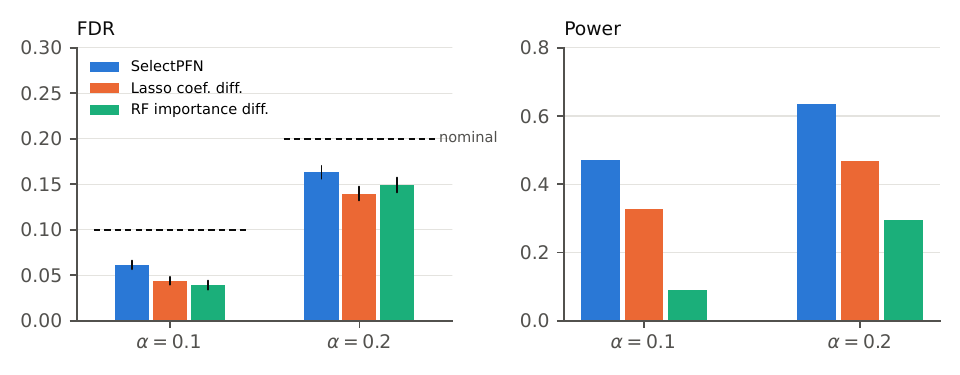}
\caption{Knockoff mode: FDR (left; whiskers $\pm2$ s.e.; dashed: nominal level) and power (right) of knockoff+ with exact knockoffs on the $|S_0|=20$ designs L20 and B20 ($n=300$, $p\in\{100,200\}$; 1{,}600 data sets).}\label{fig:knockoff}
\end{figure}

\begin{table}[t]
\centering\scriptsize
\caption{Knockoff+ FDR (standard error) and power at levels $\alpha_{\FDR}=0.1$ and $0.2$ (@0.1, @0.2); reps: number of data sets. With $|S_0|\le8$, knockoff+ at $\alpha_{\FDR}=0.1$ cannot reach its threshold, so power at $\alpha_{\FDR}=0.1$ is near zero for every statistic. Bold: highest power within each block of designs and knockoff type.}\label{tab:e5}
\begin{adjustbox}{max width=\linewidth}
\begin{tabular}{lllccccr}
\toprule
Designs & knockoffs & Statistic & FDR@0.1 (se) & power@0.1 & FDR@0.2 (se) & power@0.2 & reps \\
\midrule
$|S_0|\le8$ (A--D) & exact & SelectPFN (knockoff mode) & 0.018 (0.002) & \textbf{0.04} & 0.148 (0.004) & \textbf{0.49} & 3200 \\
$|S_0|\le8$ (A--D) & exact & Lasso coef.\ diff. & 0.008 (0.001) & 0.01 & 0.086 (0.003) & 0.18 & 3200 \\
$|S_0|\le8$ (A--D) & exact & RF importance diff. & 0.012 (0.001) & 0.02 & 0.126 (0.004) & 0.21 & 3200 \\
$|S_0|\le8$ (A--D) & 2nd-order & SelectPFN (knockoff mode) & 0.015 (0.002) & 0.03 & 0.143 (0.004) & \textbf{0.48} & 3200 \\
$|S_0|\le8$ (A--D) & 2nd-order & Lasso coef.\ diff. & 0.010 (0.001) & 0.02 & 0.094 (0.003) & 0.20 & 3200 \\
$|S_0|\le8$ (A--D) & 2nd-order & RF importance diff. & 0.038 (0.003) & \textbf{0.05} & 0.234 (0.005) & 0.33 & 3200 \\
$|S_0|=20$ (L20, B20) & exact & SelectPFN (knockoff mode) & 0.061 (0.002) & \textbf{0.47} & 0.163 (0.004) & \textbf{0.64} & 1600 \\
$|S_0|=20$ (L20, B20) & exact & Lasso coef.\ diff. & 0.044 (0.002) & 0.33 & 0.140 (0.004) & 0.47 & 1600 \\
$|S_0|=20$ (L20, B20) & exact & RF importance diff. & 0.039 (0.002) & 0.09 & 0.149 (0.004) & 0.29 & 1600 \\
$|S_0|=20$ (L20, B20) & 2nd-order & SelectPFN (knockoff mode) & 0.080 (0.003) & \textbf{0.51} & 0.194 (0.004) & \textbf{0.66} & 1600 \\
$|S_0|=20$ (L20, B20) & 2nd-order & Lasso coef.\ diff. & 0.059 (0.003) & 0.36 & 0.169 (0.004) & 0.49 & 1600 \\
$|S_0|=20$ (L20, B20) & 2nd-order & RF importance diff. & 0.086 (0.004) & 0.17 & 0.230 (0.005) & 0.38 & 1600 \\
\bottomrule
\end{tabular}
\end{adjustbox}

\end{table}

\subsubsection{Computation time and robustness to training randomness}\label{sec:exp-runtime}

SelectPFN needs no fitting, tuning or convergence diagnostics for a new data set. At $p=500$ its selection pass takes $\RuntimeSPTwos$\,s per data set for SelectPFN ($\RuntimeSPTwol$\,s for SelectPFN-L), against $\RuntimeDart$\,s for DART, a speed-up of $\RuntimeSpeedTwos\times$ ($\RuntimeSpeedTwol\times$). A model-averaged prediction over 64 sampled subsets adds $\RuntimeBMA$\,s, because predictions are cached for each distinct subset. All times were measured on one RTX~4090.

The results do not depend on a lucky training run. Over three independent training seeds of SelectPFN, the AUPRC of the main study has standard deviation \SeedSD, an order of magnitude below the effect of the interaction-only family ($\EthreePriorGain$; \Cref{tab:seeds}).

\begin{table}[t]
\centering\footnotesize
\caption{Training-seed variability of SelectPFN ($d=128$; seed 0 is the model reported elsewhere).}\label{tab:seeds}
\begin{adjustbox}{max width=\linewidth}
\begin{tabular}{lccccc}
\toprule
Seed & AUPRC & exact recovery & AUPRC, design C & ECE & exact recovery, C, $n{=}800$ \\
\midrule
0 & 0.754 & 0.192 & 0.609 & 0.0005 & 0.244 \\
1 & 0.747 & 0.179 & 0.589 & 0.0007 & 0.187 \\
2 & 0.749 & 0.185 & 0.594 & 0.0005 & 0.176 \\
mean $\pm$ SD & 0.750 $\pm$ 0.003 & 0.185 $\pm$ 0.007 & 0.597 $\pm$ 0.010 & 0.001 $\pm$ 0.000 & 0.202 $\pm$ 0.037 \\
\bottomrule
\end{tabular}
\end{adjustbox}

\end{table}

\subsection{Summary}\label{sec:exp-summary}
Across 7{,}200 simulated data sets from nonlinear designs that it never saw during training, SelectPFN ranks variables more accurately than every penalized, tree-based, sparse-network and foundation-model competitor, by 0.23--0.28 in AUPRC over the established methods, and its selected sets have an FDP near 0.07 against 0.38--0.92 for the competitors (except the sparse Bayesian network, which selects almost nothing). It does so in a few hundredths of a second per data set, without tuning. Its inclusion probabilities are close to the exact Bayesian posterior and well calibrated, and their accuracy improves with training compute. Predictions that use only the selected variables are as accurate as a foundation model that sees every column. In knockoff mode the FDR is controlled exactly, with the highest power among the statistics compared. Finally, the controlled prior-weight study confirms the main theoretical prediction: what the meta-prior contains determines which variables can be found, and giving weight to a structure makes the variables that use it detectable at no cost in false selections.

\section{Real data analysis}\label{sec:exp-e7}

The simulations use Gaussian or Gaussian-copula covariates. Real covariate matrices are harder: their columns are strongly dependent, often skewed, heavy-tailed or discrete, and their dependence is not of the form used in the meta-prior. We therefore evaluate SelectPFN on the design matrices of four public OpenML data sets from different domains: superconductivity of materials (superconduct, $p=77$), community crime rates (us\_crime, $p=98$), near-infrared spectra of meat samples (tecator, $p=25$) and molecular descriptors (topo\_2\_1, $p=189$).

\subsection{Setup}\label{sec:real-setup}
We kept numeric columns without missing values and rows with an observed response, removed constant columns and near-duplicates ($|r|>0.99$), dropped columns that are constant in a training sample, and standardized the columns. The remaining columns are strongly dependent: the median, over columns, of the largest absolute correlation with another column ranges from 0.70 (tecator) to 0.95 (topo\_2\_1). Two studies are run on these matrices.
\begin{itemize}
\item \emph{Known truth on real covariates.} The response is generated from designs A--D applied to randomly chosen real columns (clipped at $\pm4$, rank-transformed to $(0,1)$ for design A, with noise variance set from the empirical $\Var f$ and the SNR), with $n\in\{100,300\}$ ($n=100$ only for tecator), $\mathrm{SNR}\in\{1,5\}$ and 25 replicates: 1{,}400 data sets. The metrics and competitors are those of \Cref{sec:experiments}.
\item \emph{Real responses with known null variables.} Both $X$ and $y$ are real, so the true active set is unknown. To create variables that are known to be irrelevant, we append $\min(p,500-p)$ columns that are independent row permutations of real columns; they are independent of $y$ and of all other columns, so any selected permuted column is a false selection. We record the number of selected variables, the share of permuted columns among them, and the held-out RMSE ($n\in\{100,300\}$, 50 replicates).
\end{itemize}

\subsection{Results}\label{sec:real-results}

\paragraph{Known truth on real covariates.} SelectPFN keeps its advantage on real covariate matrices (\Cref{tab:e7a}). SelectPFN-L reaches AUPRC $\EsevenAUPRCTwol$, against $\EsevenBestAUPRC$ for the best competitor (\EsevenBest), and it is ahead on every design and every data set. Its paired advantage over the competitors ranges from $\EsevenMinGap$ to $\EsevenMaxGap$ (Holm-adjusted over the seven comparisons, $p\le{}$\EsevenMaxP). Strong dependence makes selection harder for every method, but much less so for SelectPFN: its FDP is $\EsevenFDRTwol$, against $0.74$--$0.92$ for the competitors' selected sets. As a Bayesian procedure, SelectPFN shares inclusion probability among near-collinear columns (\Cref{sec:metaprior}) instead of selecting all of them, which keeps its selected sets small.

\paragraph{Real responses.} With real responses, SelectPFN selects parsimonious models with few false selections (\Cref{tab:e7b}). SelectPFN-L selects $\EsevenbSelTwol$ variables on average, and only $\EsevenbNullTwol$ of them are permuted nulls. The competitors select many more variables and many more nulls: the null share is $\EsevenbNullLassoNet$ for LassoNet ($\EsevenbSelLassoNet$ variables selected), $\EsevenbNullLasso$ for the lasso and $\EsevenbNullBART$ for BART. Parsimony does not necessarily cost predictive accuracy. On superconduct, SelectPFN-L predicts better than the penalized and network competitors (RMSE 0.59 against 0.64--0.67) using 1--4 variables, against 15--35 for the lasso. Averaged over data sets its RMSE is 30--33\% above that of TabPFN on all columns (22--25\% for the lasso, SCAD and LassoNet); most of this difference comes from tecator, whose response is spread over many correlated spectral columns, a dense structure that the sparse meta-prior does not generate.

\begin{table}[t]
\centering\scriptsize
\caption{Known truth on real covariates: selection with real covariate matrices and responses from designs A--D on randomly chosen columns (1{,}400 data sets). Per-data-set columns are mean AUPRC. FDP: false discovery proportion of each method's selected set. Bold: best value in the AUPRC and per-data-set columns.}\label{tab:e7a}
\begin{adjustbox}{max width=\linewidth}
\begin{tabular}{lrcccccc}
\toprule
Method & data sets & AUPRC & superconduct & us\_crime & tecator & topo\_2\_1 & FDP \\
\midrule
SelectPFN & 1400 & 0.517 {\scriptsize[0.506, 0.530]} & 0.490 & 0.613 & 0.694 & 0.361 & 0.21 {\scriptsize[0.19, 0.22]} \\
SelectPFN-L & 1400 & \textbf{0.535} {\scriptsize[0.522, 0.547]} & \textbf{0.501} & \textbf{0.639} & \textbf{0.710} & \textbf{0.376} & 0.23 {\scriptsize[0.21, 0.25]} \\
LassoNet & 1400 & 0.365 {\scriptsize[0.354, 0.375]} & 0.331 & 0.428 & 0.563 & 0.236 & 0.81 {\scriptsize[0.80, 0.82]} \\
BART & 1400 & 0.352 {\scriptsize[0.340, 0.365]} & 0.332 & 0.423 & 0.537 & 0.210 & 0.92 {\scriptsize[0.91, 0.92]} \\
DART & 1400 & 0.331 {\scriptsize[0.318, 0.344]} & 0.307 & 0.417 & 0.540 & 0.163 & 0.78 {\scriptsize[0.77, 0.79]} \\
Lasso & 1400 & 0.279 {\scriptsize[0.269, 0.288]} & 0.260 & 0.296 & 0.513 & 0.164 & 0.81 {\scriptsize[0.80, 0.82]} \\
SCAD & 1400 & 0.270 {\scriptsize[0.261, 0.279]} & 0.250 & 0.287 & 0.504 & 0.155 & 0.74 {\scriptsize[0.73, 0.75]} \\
Group-MCP NN$^\dagger$ & 1400 & 0.310 {\scriptsize[0.300, 0.321]} & 0.286 & 0.335 & 0.549 & 0.191 & 0.85 {\scriptsize[0.85, 0.86]} \\
HSIC-Lasso & 1400 & 0.276 {\scriptsize[0.267, 0.285]} & 0.252 & 0.301 & 0.510 & 0.158 & -- \\
\bottomrule
\end{tabular}
\end{adjustbox}

\end{table}

\begin{table}[t]
\centering\scriptsize
\caption{Real responses with appended permuted null columns (350 data sets). ``Selected'' is the mean size of the selected set, and the next two columns count permuted nulls in it. RMSE/TabPFN is the geometric mean, over data sets, of the held-out RMSE ratio to TabPFN fitted on all columns. Bold: lowest null share and lowest RMSE ratio.}\label{tab:e7b}
\begin{adjustbox}{max width=\linewidth}
\begin{tabular}{lrcccc}
\toprule
Method & data sets & selected & permuted nulls selected & share of selection that is null & RMSE / TabPFN \\
\midrule
SelectPFN & 350 & 1.6 & 0.31 & 0.116 {\scriptsize[0.087, 0.145]} & 1.325 {\scriptsize[1.266, 1.391]} \\
SelectPFN-L & 350 & 2.0 & 0.22 & \textbf{0.110} {\scriptsize[0.080, 0.140]} & 1.299 {\scriptsize[1.239, 1.367]} \\
LassoNet & 350 & 28.2 & 13.06 & 0.276 {\scriptsize[0.248, 0.303]} & 1.245 {\scriptsize[1.205, 1.290]} \\
BART & 350 & 142.1 & 70.89 & 0.499 {\scriptsize[0.497, 0.501]} & 1.361 {\scriptsize[1.281, 1.451]} \\
DART & 350 & 20.6 & 8.65 & 0.401 {\scriptsize[0.375, 0.425]} & 1.347 {\scriptsize[1.272, 1.431]} \\
Lasso & 350 & 18.9 & 9.13 & 0.390 {\scriptsize[0.364, 0.417]} & 1.225 {\scriptsize[1.172, 1.292]} \\
SCAD & 350 & 11.6 & 5.43 & 0.317 {\scriptsize[0.289, 0.343]} & \textbf{1.217} {\scriptsize[1.177, 1.272]} \\
Group-MCP NN$^\dagger$ & 350 & 42.4 & 26.81 & 0.602 {\scriptsize[0.582, 0.623]} & 1.649 {\scriptsize[1.508, 1.819]} \\
\bottomrule
\end{tabular}
\end{adjustbox}

\end{table}

\subsection{Summary}
On real covariate matrices from four domains, SelectPFN identifies the variables that generate the response far more accurately than the competing methods, with gaps of 0.17--0.26 in AUPRC and an FDP a quarter to a third of theirs. With real responses it returns small models in which permuted noise variables are rare, and its predictions remain competitive where the signal is sparse.

\section{Discussion}\label{sec:discussion}

SelectPFN shows that variable selection can be amortized for a broad class of sparse nonlinear regression models. It does so without giving up the two properties that distinguish integrated selection from post-hoc importance: predictions that provably use only the selected variables, and, in knockoff mode, exact finite-sample FDR control that does not depend on the meta-prior. On out-of-prior designs it recovers active sets markedly better than penalized, Bayesian-tree and sparse-neural-network methods, at a small fraction of their cost.

\paragraph{What the theory says about the prior.} \Cref{thm:csharp} turns a modelling intuition into a quantitative statement. A variable is detected once its information index exceeds about $(2/\lambda_j)\log(1/\omega)$, where $\omega$ is the prior weight of structures that can use the variable as the truth does. Breadth of the meta-prior therefore matters, and it matters logarithmically: a structure the prior never generates ($\omega=0$) is essentially never found, whatever the sample size. Raising its weight from negligible to moderate shifts the detection threshold by a computable amount. Our ablations confirm this mechanism: without interaction-only structures in the meta-prior, variables that act only through an interaction are essentially never found, and in a controlled sweep of their weight detection rises monotonically with the weight, at no cost in false selections (\Cref{sec:exp-e8}). For a trained network the effect is even stronger than for the exact posterior, because rarely generated structures are also learned less well. It also gives a principled way to decide which families a foundation prior for selection must contain.

\paragraph{What amortization cannot give for free.} Training minimizes an average over the prior. Proposition~\ref{prop:notransfer} shows that such average-case guarantees do not transfer to a fixed truth, so frequentist selection consistency of an amortized posterior requires a uniform-amortization assumption that training does not certify. Knockoff mode sidesteps this for error control. For power and calibration at a fixed truth, the assumption remains the main open problem.

\paragraph{Limitations.}
\begin{itemize}
\item \emph{Exactness.} The median probability model agrees exactly with that of an enumerable posterior on $\EoneLMPM$ of the tasks for the larger model; the remaining disagreements concentrate at near-ties, which cost little Bayes risk (\Cref{prop:regret}), and agreement improves steadily with training compute (\Cref{sec:exp-e1}).
\item \emph{Factorized selection head.} Model averaging samples subsets from the product of the marginal PIPs and ignores posterior dependence between inclusions. An autoregressive selection head would remove this.
\item \emph{Scale.} All experiments use $p\le500$. The block-wise scheme of \Cref{sec:blockwise} is implemented but not validated beyond that.
\item \emph{Prior coverage.} Like any pretrained model, SelectPFN reflects what its meta-prior contains. Dense signals, such as spectra, are not generated by the current prior (\Cref{sec:exp-e7}); by \Cref{thm:csharp}, adding dense and low-rank families is the way to cover them.
\item \emph{Baselines.} PACE has no public code. LinearPFN supports only $p\le30$ and is compared on a $p=20$ slice. The spike-and-slab sparse neural network is our own implementation of \citet{sun2022consistent}; with $n\le800$ it selects conservatively, and a tuned implementation by its authors could do better.
\item \emph{Prediction at the larger size.} The larger model (width $d=256$ with 8 layers) improves selection but regresses in prediction on one design. About half of the regression comes from extrapolation to extreme responses, and the rest is unexplained (\Cref{sec:exp-e4}). We use the smaller model for prediction.
\end{itemize}

\paragraph{Extensions.} Binary and time-to-event responses require only a different likelihood in the meta-prior and a different prediction head. Test-point-specific selection and group-level inclusion for correlated blocks are natural next targets. On the theory side, the remaining open problems are spurious-inclusion control (assumption \Cref{ass:c6}) for nonparametric priors, an explicit second-order term in the detection threshold, and conditions under which training provably achieves uniform amortization.

\section*{AI Use Statement}
We used generative AI tools, including coding assistants, to write and debug experiment code, run and aggregate experiments, search and summarize related literature, assist with mathematical derivations, and draft and edit the manuscript. The research question, methods, mathematical formulations, and final claims were determined by the authors. All AI-assisted code was tested; reported results were verified against experiment logs and code; mathematical derivations and proofs were independently checked by the authors; citations were checked against their sources; and all text was reviewed by the authors. The authors take responsibility for the final content, including the text, claims, and artifacts produced with the aid of generative AI.

\bibliographystyle{apalike}
\bibliography{refs,refs_theory,refs_experiments}

\appendix
\crefalias{section}{appendix}\crefalias{subsection}{appendix}\crefalias{subsubsection}{appendix}
\section{Implementation details}\label{app:implementation}

\begin{table}[ht]
\centering
\caption{Meta-prior of SelectPFN. $\mathrm{LU}(a,b)$ is log-uniform on $[a,b]$. The ablation model of \Cref{sec:exp-e8} uses the same meta-prior without the interaction-only family, the weak-effect regime and interaction-only variables in the pairwise family, with family weights MLP 0.40, additive 0.15, pairwise 0.15, tree 0.10, GP 0.10 and linear 0.10.}\label{tab:metaprior}
\small
\begin{tabular}{@{}p{0.22\linewidth}p{0.74\linewidth}@{}}
\toprule
Component & Distribution \\
\midrule
Sample size & $N\sim\mathrm{LU}(50,1000)$ rows per task; a fraction $n/N\sim U(0.5,0.9)$ are training (context) rows, the remaining $m=N-n$ are query rows \\
Dimension & $p\sim\mathrm{LU}(5,p_{\max})$ with curriculum $p_{\max}=50\to200\to500$ in thirds of training \\
Sparsity & Beta--Bernoulli, $w\sim\mathrm{Beta}(\bar w\nu,(1-\bar w)\nu)$, $\E|S|=\bar w\,p\sim\mathrm{LU}(1,\min(20,p/3))$, $\nu\sim\mathrm{LU}(2,50)$; $|S|\le40$; an empty $\gamma$ is redrawn w.p.\ 0.9 \\
Dependence & independent; AR(1); block equicorrelation; latent factor (equal weights). Active variables at random positions \\
Marginals & Gaussian copula with per-column transform: identity, uniform, log-normal skew, Tukey-$h$ heavy tails ($h\sim U(0.1,0.4)$), discretized; columns standardized \\
Function family & sparse-input MLP with group spike-and-slab first layer, \cref{eq:gss} (0.30); additive (0.12); additive + pairwise, with no main effect for a variable w.p.\ 0.3 (0.12); trees/step functions (0.10); GP with ARD kernel, length-scales $\mathrm{LU}(0.3,2.5)$ (0.10); linear (0.14); interaction-only pairs/triples through functions odd in each argument (a single active variable gets a main effect), plus weak main effects w.p.\ 0.3 (0.12) \\
Effect sizes & w.p.\ 0.5, magnitudes $\sim\mathrm{LU}(0.1,1)$ (weak-effect regime); otherwise $U(0.3,1)$ \\
Effect-size check & permutation estimate of each active variable's total effect, $\tfrac12\E[(f(x)-f(x\text{ with }x_j\text{ permuted}))^2]/\Var f$, on 256 draws; redraw $f$ if any is below the threshold 0.01; after 4 failures freeze weak variables at 0 and relabel them inactive. GP draws skip the check \\
Noise & SNR $=\Var f/\sigma^2\sim\mathrm{LU}(0.25,20)$; Gaussian (0.6), $t_3$ (0.2), heteroscedastic (0.2) \\
Response & standardized with context-set statistics \\
\bottomrule
\end{tabular}
\end{table}

\paragraph{Function families.} Each family builds $f$ from the $s=|S|$ active covariates only, and is evaluated on standardized covariates. Several families use random one- and two-dimensional Fourier-feature functions,
\[
g_1(u)=\textstyle\sum_{k=1}^{8}a_k\cos(\xi_ku+\vartheta_k),\qquad g_2(u,v)=\textstyle\sum_{k=1}^{12}a_k\cos(\xi_{k1}u+\xi_{k2}v+\vartheta_k),
\]
with $a_k\sim N(0,1/8)$ for $g_1$ and $a_k\sim N(0,1/12)$ for $g_2$, $\vartheta_k\sim U(0,2\pi)$ and frequencies $\xi\sim N(0,c^2)$, where the bandwidth is $c\sim\mathrm{LU}(0.5,3)$ for $g_1$ and $c\sim\mathrm{LU}(0.5,2)$ for $g_2$; every component is centred and scaled to unit variance on a reference sample from the task's covariate law. The overall scale of $f$ does not matter, because the noise level is then set from $\Var f$ and the SNR. Effect magnitudes $\beta_j$ are drawn as in \Cref{tab:metaprior}, and $\mathrm{LU}(a,b)$ is log-uniform. Integer ranges below are uniform.
\begin{itemize}
\item \emph{Sparse-input network.} A fully connected network with 1--3 hidden layers of width 8--64. First-layer weights follow \cref{eq:gss}: rows of inactive inputs are zero and rows of active inputs are $N(0,\sigma_w^2I_H)$ with $\sigma_w^2=1/s$, $H$ the first hidden width; deeper weights are $N(0,1/\text{fan-in})$ and biases $N(0,0.25)$. Each hidden layer draws its activation from $\{\tanh,\mathrm{ReLU},\sin,\mathrm{GELU},\mathrm{softplus}\}$, and its pre-activations are standardized on the reference sample so that the network neither saturates nor collapses.
\item \emph{Additive.} $f=\sum_{j\in S}\beta_jg_{1,j}(x_j)$, a sum of independent random smooth functions, one per active variable.
\item \emph{Pairwise.} The additive part plus $1$ to $s$ randomly chosen pairs, each entering through either a random $g_2$ or a plain product $x_jx_k$ (probability $\tfrac12$ each), with weights $\mathrm{LU}(0.5,1.5)$. Each variable loses its main effect with probability 0.3 and is then given at least one interaction partner.
\item \emph{Interaction-only.} The active variables are split at random into pairs and occasional triples. A pair $(a,b)$ enters through a function that is odd in each argument, chosen from $x_ax_b$, $\sin(cx_ax_b)$, $\tanh(cx_a)\tanh(cx_b)$ and $\mathrm{sign}(x_a)\mathrm{sign}(x_b)$ with $c\sim\mathrm{LU}(0.5,2)$; a triple $(a,b,e)$ multiplies such a function of $(x_a,x_b)$ by $\tanh(x_e)$. (In rare cases a leftover fourth variable joins a triple without entering it; the effect-size check of \Cref{tab:metaprior} then removes it from the active set.) Group weights are $\mathrm{LU}(0.3,1)$. With probability 0.3 every variable also receives a weak main effect $g_1$ of weight $\mathrm{LU}(0.05,0.3)$. With symmetric covariates, a variable without a main effect then has no marginal association with $y$. If $s=1$, the single variable enters through $g_1$.
\item \emph{Trees and step functions.} A sum of trees of depth 1--4 each. The number of trees is drawn from 1--5, and further trees are added until every active variable has been used as a split variable; thresholds are values of the split variable in the reference sample, and the $2^r$ leaf values of a depth-$r$ tree are $N(0,1)$.
\item \emph{Gaussian process.} $f$ is drawn jointly at the task's rows from a zero-mean Gaussian process with unit variance and squared-exponential ARD kernel on the active variables, with length-scales $\mathrm{LU}(0.3,2.5)$ drawn independently for each variable.
\item \emph{Linear.} $f=\sum_{j\in S}\beta_jx_j$ with random signs.
\end{itemize}

\paragraph{Architecture.} Cell encoder: each standardized value is mapped to $\mathbb R^d$ by a small MLP; the response embedding is added to every cell of a training (context) row, and each row also has a dedicated target cell (the response embedding, or a learned missing-response token for query rows), read by the prediction head. Backbone: pre-LayerNorm transformer layers alternating feature attention within rows and row attention within features (rows attend to context rows only), MLP ratio 4. Main models: $d=128$, 6 layers, 4 heads (1.9M parameters), and $d=256$, 8 layers, 8 heads (9.5M parameters). Selection readout: mean and attention pooling over context rows, concatenated with the pooled response column and a global token carrying $(\log n,\log p)$, then an MLP. Prediction readout: 256-bin bar distribution with half-normal tails.

\paragraph{Training.} Objective \cref{eq:objective} with $\alpha=1$ and $(\rho_1,\rho_2)=(0.4,0.3)$; self-sampling after 10\% of training; self-sampled subsets capped at 64 columns. AdamW, learning rate $3\times10^{-4}$, $\beta=(0.9,0.95)$, weight decay 0.01, 2\% warm-up, cosine decay, gradient clipping at 1, bf16 autocast, gradient checkpointing for large tables. All tasks in a step share $(N,n,p)$, and the batch size adapts to keep about $3$--$4\times10^5$ cells per step. Tasks are generated on the fly by parallel CPU data-loader workers, so no task is seen twice. Main models: 80k steps (4.9--5.1 h, $d=128$) and 120k steps (17.8 h, width $d=256$ with 8 layers) on one RTX 4090. Knockoff models: 50--60k steps on $2p$ interleaved columns. GP-subset model (\Cref{sec:exp-e1}): $p\in\{6,\dots,12\}$, $n\in\{50,\dots,300\}$ training rows, 25k + 75k warm-started steps.

\paragraph{Relation to \Cref{ass:design}.} The generator departs from \Cref{ass:design} in minor ways. The function $f$ is rescaled to unit variance and its effect sizes are checked on reference rows drawn from the same task distribution $P_X$, so $f$ depends on the task-level parameters of $P_X$ (but not on the observed rows), and columns are standardized jointly over the observed and reference rows. The noise standard deviation is set from the sample variance of $f$ over the rows of the task and the drawn SNR, so the rows are not exactly conditionally independent given the covariates; since this variance depends on the covariates only through the active columns, the likelihood still depends on the data only through $D_S$ and $x_{*,S}$, and the conclusion of \Cref{prop:restricted} is unaffected. In addition, with label relabelling after freezing switched off (the default), heteroscedastic noise can depend on a variable that was frozen and labelled inactive, and a variable whose only interaction partner was frozen remains labelled active. They affect only the training targets, not the knockoff guarantee of \Cref{sec:knockoff}, which holds for any network weights.

\paragraph{Exactness tests.} Unit tests verify \cref{eq:invariance} (the prediction is bit-identical when excluded columns are changed), invariance to padding, and column-permutation equivariance of $q$.

\paragraph{Software.} The PyTorch implementation, task generators, trained weights and all evaluation scripts will be released publicly.

\section{Proofs}\label{app:proofs}

Every result in this appendix is proved analytically under its stated assumptions. The proofs are self-contained
except for the following published results, which are marked \emph{cited} where they are used:
\begin{itemize}
\item the model-X exchangeability lemma and the knockoff+ supermartingale argument \citep{candes2018panning,barber2015knockoff};
\item the existence of tests against convex Hellinger sets \citep{lecam1986,birge1983,ghosal2008nonparametric};
\item a group of facts about Gaussian processes, used only to verify the hypotheses of \Cref{thm:csharp} for
Gaussian-process priors \citep{borell1975,vdvvz2008rkhs,vdvvz2009adaptive,aronszajn1950,steinwart2008svm}.
\end{itemize}
Paragraphs headed \emph{Numerical illustration} report computations from the accompanying code. They are not part
of any proof.

\subsection{Proofs for Section~\ref{sec:method}}\label{app:proofs-method}

\begin{proof}[Proof of \Cref{prop:restricted}]
Fix $\gamma$ and write $S=S(\gamma)$.

\emph{Step 1 (average over $f$).} By the law of total probability, and because under \Cref{ass:design} the response $y_*$ is conditionally independent of the training responses given $(f,\gamma)$ and the covariates,
\[
p(y_*\mid x_*,\gamma,D)=\int p(y_*\mid x_*,f,\gamma)\,p(f\mid\gamma,D,x_*)\,df .
\]

\emph{Step 2 (the likelihood of $y_*$ uses only active coordinates).} Under \Cref{ass:design}, $y_*=f(x_{*,S})+\varepsilon$, so $p(y_*\mid x_*,f,\gamma)=p_\varepsilon(y_*-f(x_{*,S}))$, which does not involve $x_{*,-S}$.

\emph{Step 3 (the posterior of $f$ uses only active columns).} By Bayes' rule,
\[
p(f\mid\gamma,D,x_*)\ \propto\ \Pi_f(f\mid\gamma)\;p(x_1,\dots,x_n,x_*)\;\prod_{i=1}^np_\varepsilon\big(y_i-f(x_{i,S})\big).
\]
By \Cref{ass:design}, the covariates are independent of $(\gamma,f)$ and $\Pi_f(\cdot\mid\gamma)$ does not depend on them, so the factor $p(x_1,\dots,x_n,x_*)$ does not involve $f$ and cancels on normalization. The remaining factors involve the data only through $y_i$ and $x_{i,S}$. Hence $p(f\mid\gamma,D,x_*)=p(f\mid\gamma,D_S)$.

\emph{Conclusion.} Substituting Steps 2 and 3 into Step 1,
\[
p(y_*\mid x_*,\gamma,D)=\int p_\varepsilon\big(y_*-f(x_{*,S})\big)\,p(f\mid\gamma,D_S)\,df=p(y_*\mid x_{*,S},D_S,\gamma),
\]
which proves the first statement. The second follows from $p(y_*\mid x_*,D)=\sum_\gamma p(\gamma\mid D)\,p(y_*\mid x_*,\gamma,D)$.
\end{proof}

\begin{proof}[Proof of \Cref{prop:minimizers}]
\emph{Step 1 (pointwise minimization).} Each loss term has the form
\[
\E[\ell(a(Z),T)]=\E\big[\E[\ell(a(Z),T)\mid Z]\big],
\]
where $Z$ is the input of a head, $T$ its target and $a(Z)$ its output. Since each head can represent any measurable function of its input, the expected loss is minimized by minimizing the conditional expectation separately for every value $z$ of $Z$.

\emph{Step 2 (selection head).} Here $Z=D$ and $T=\gamma_j$. For fixed $D$,
\[
\E\big[-\gamma_j\log q-(1-\gamma_j)\log(1-q)\,\big|\,D\big]=\operatorname{Ent}(\pi_j)+\operatorname{kl}(\pi_j\,\|\,q),
\]
with $\pi_j=\E[\gamma_j\mid D]$. Since $\operatorname{kl}\ge0$ with equality only at $q=\pi_j$, the minimizer is $q_j(D)=\pi_j(D)$, which proves (a).

\emph{Step 3 (prediction head).} Here $T=y_*$. For fixed $z$, $\E[-\log g(y_*)\mid Z=z]=\operatorname{Ent}\big(p(\cdot\mid z)\big)+\KL\big(p(\cdot\mid z)\,\|\,g\big)$, which is finite by assumption and is minimized uniquely by $g=p(y_*\mid Z=z)$. For the bar-distribution head, the same argument applied to the bin of $y_*$ shows that the minimizing bin probabilities are those of $p(y_*\mid Z=z)$.

\emph{Step 4 (identifying the input).} Under teacher forcing, $Z=(x_{*,S},D_S)$ with $S=S(\gamma)$. The prediction pass sees these columns without their original indices; by the exchangeability condition, the conditional law of $y_*$ given the columns does not depend on which indices they carry, so the minimizer of Step 3 is $p(y_*\mid x_{*,S},D_S,S(\gamma)=S)$. By \Cref{prop:restricted} this equals $p(y_*\mid x_*,\gamma,D)$, which proves (b). For a general kernel, $Z=(x_{*,\tilde S},D_{\tilde S},\tilde S)$; because the kernel does not depend on $\theta$, the joint law of $(Z,y_*)$ is fixed during minimization, and Step 3 gives the minimizer $p(y_*\mid Z)$, which by construction depends only on the columns in $\tilde S$. This proves (c).
\end{proof}

\subsection{Two tools: an evidence lower bound and tests}\label{app:proofs-tools}

Consider $Y=f(X)+\varepsilon$, $\varepsilon\sim N(0,\sigma^2)$, with $\sigma$ and $P_X$ fixed. Write $p_f$ for the density of
one observation $(X,Y)$, $\Lambda_f=\prod_{i=1}^np_f(X_i,Y_i)/p_{f_0}(X_i,Y_i)$, $\Delta_f=f_0-f$, and $\|\cdot\|$ for the
$L_2(P_X)$ norm. $P_f^n$ is the law of $n$ i.i.d.\ observations under $f$, $P_0^n=P_{f_0}^n$, and $\E_0,\Var_0$ denote expectation and variance under $P_0^n$.

\begin{lemma}[Evidence lower bound]\label{lem:evidence}
Let $B$ be a set of functions with $\KL(p_{f_0}\|p_f)\le\varrho^2$ and
$\mathcal V(p_{f_0}\|p_f):=\Var_{f_0}\log(p_f/p_{f_0})\le\varrho^2$ for all $f\in B$, and $\Pi(B)>0$; we call $\varrho$ the radius of $B$. Then for every $C>0$,
\[
P_0^n\Big(\int\Lambda_f\,d\Pi<\Pi(B)\,e^{-(1+C)n\varrho^2}\Big)\le\frac1{C^2n\varrho^2}.
\]
\end{lemma}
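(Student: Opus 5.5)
This is the classical evidence lower bound of Ghosal, Ghosh and van der Vaart (2000), and I would prove it along their lines. First restrict the integral to $B$ and normalize: write $\Pi_B=\Pi(\cdot\cap B)/\Pi(B)$, so that $\int\Lambda_f\,d\Pi\ge\Pi(B)\int_B\Lambda_f\,d\Pi_B$. It then suffices to show that
\[
P_0^n\Big(\int_B\Lambda_f\,d\Pi_B<e^{-(1+C)n\varrho^2}\Big)\le\frac1{C^2n\varrho^2}.
\]
Since $\log$ is concave and $\Pi_B$ is a probability measure, Jensen's inequality gives $\log\int_B\Lambda_f\,d\Pi_B\ge\int_B\log\Lambda_f\,d\Pi_B=:Z_n$. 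Here
\[
Z_n=\sum_{i=1}^n\int_B\log\frac{p_f}{p_{f_0}}(X_i,Y_i)\,d\Pi_B(f).
\]
This is a sum of i.i.d.\ terms under $P_0^n$. Measurability of $(f,x,y)\mapsto p_f(x,y)$ is needed for Fubini, and I would take it for granted for the Gaussian regression family.

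Next compute the first two moments of one summand $U_i=\int_B\log(p_f/p_{f_0})(X_i,Y_i)\,d\Pi_B$. By Fubini, $\E_0U_i=-\int_B\KL(p_{f_0}\|p_f)\,d\Pi_B\ge-\varrho^2$. For the variance, Jensen (or Cauchy--Schwarz) applied to the centred integrand gives
\[
\Var_0U_i\le\int_B\Var_{f_0}\log(p_f/p_{f_0})\,d\Pi_B\le\varrho^2.
\]
Hence $\E_0Z_n\ge-n\varrho^2$ and $\Var_0Z_n\le n\varrho^2$.

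The event $\{Z_n<-(1+C)n\varrho^2\}$ is contained in $\{Z_n-\E_0Z_n<-Cn\varrho^2\}$. By Chebyshev, this has probability at most
\[
\frac{n\varrho^2}{C^2n^2\varrho^4}=\frac1{C^2n\varrho^2}.
\]
On the complement, $\int\Lambda_f\,d\Pi\ge\Pi(B)e^{Z_n}\ge\Pi(B)e^{-(1+C)n\varrho^2}$, which is the claim.

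The only delicate step is the interchange of integrals and the finiteness of moments. These are guaranteed by the $\KL$ and $\mathcal V$ bounds on $B$, since these make $\log(p_f/p_{f_0})$ square-integrable under $P_{f_0}$ uniformly over $f\in B$. The main point to check is that the variance bound passes through the $\Pi_B$-average. This follows from $\E_0\big(\int(g_f-\E_0g_f)\,d\Pi_B\big)^2\le\int\E_0(g_f-\E_0g_f)^2\,d\Pi_B$, with $g_f=\log(p_f/p_{f_0})$, by Jensen's inequality for the square.
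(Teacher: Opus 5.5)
Your proposal is correct and follows the paper's proof essentially step for step. You restrict to $B$ and normalize, use Jensen to pass to $Z=\int\log\Lambda_f\,d\Pi_B$, bound its mean and variance through Fubini and Jensen (Cauchy--Schwarz), and finish with Chebyshev; the only cosmetic difference is that you bound the variance summand by summand using the i.i.d.\ structure, whereas the paper bounds $\Var_0 Z$ directly by $\int\Var_0(\log\Lambda_f)\,d\Pi_B$, and both give $n\varrho^2$.
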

\begin{proof}
Let $\Pi_B=\Pi(\cdot\cap B)/\Pi(B)$ and $Z=\int\log\Lambda_f\,d\Pi_B$. By Jensen,
$\int\Lambda_f\,d\Pi\ge\Pi(B)\int\Lambda_f\,d\Pi_B\ge\Pi(B)e^Z$. The observations are i.i.d., so by Fubini
$\E_0Z=-n\int\KL\,d\Pi_B\ge-n\varrho^2$. The variance of an average is at most the average of the variances
(Cauchy--Schwarz), so $\Var_0Z\le\int\Var_0(\log\Lambda_f)\,d\Pi_B=n\int\mathcal V\,d\Pi_B\le n\varrho^2$. By Chebyshev,
$P_0^n(Z<-(1+C)n\varrho^2)\le P_0^n(Z-\E_0Z<-Cn\varrho^2)\le n\varrho^2/(Cn\varrho^2)^2$.
\end{proof}
This is the centred-variance form of \citet[Lemma 8.1]{ghosal2000}. For Gaussian regression,
$\log(p_f/p_{f_0})=-(2\varepsilon \Delta_f(X)+\Delta_f(X)^2)/(2\sigma^2)$. Hence $\KL=\|\Delta_f\|^2/(2\sigma^2)$ and
\[
\mathcal V=\frac{4\sigma^2\|\Delta_f\|^2+\Var(\Delta_f(X)^2)}{4\sigma^4}\le\frac{\|\Delta_f\|^2}{\sigma^2}+\frac{\E \Delta_f(X)^4}{4\sigma^4},
\]
since the cross term vanishes because $\E[\varepsilon\mid X]=0$.

\begin{lemma}[Tests]\label{lem:tests}
Let $\mathcal F$ be a set of functions with $\|f-f_0\|_\infty\le b$, and let $c_b=(1-e^{-u})/(4u)$ with
$u=b^2/(2\sigma^2)$.
\begin{enumerate}
\item[(a)] The squared Hellinger distance $H^2(p_f,p_{f_0})=\int(\sqrt{p_f}-\sqrt{p_{f_0}})^2$ satisfies
$H^2\ge c_b\|f-f_0\|^2/\sigma^2$ for $f\in\mathcal F$.
\item[(b)] (Cited.) With $c_{\rm T}=9/32$, for every $r>0$ there are tests $\varphi_n\in\{0,1\}$ with
\[
P_0^n\varphi_n\le2\,\mathcal N_n(r)\,e^{-c_{\rm T}c_bnr^2/\sigma^2},\qquad
\sup_{f\in\mathcal F:\|f-f_0\|\ge r}P_f^n(1-\varphi_n)\le2\,e^{-c_{\rm T}c_bnr^2/\sigma^2}.
\]
Here $\mathcal N_n(r)$ is the number of Hellinger balls of radius $\sqrt{c_b}\,r/(4\sigma)$, with centres in the set,
needed to cover $\{f\in\mathcal F:\|f-f_0\|\ge r\}$.
\end{enumerate}
\end{lemma}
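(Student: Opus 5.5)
For part (a) I will compute the Hellinger affinity directly. For Gaussian regression with a common covariate law $P_X$, the conditional densities of $Y$ given $X=x$ are $N(f(x),\sigma^2)$ and $N(f_0(x),\sigma^2)$. Their Bhattacharyya coefficient is $\exp\{-(f(x)-f_0(x))^2/(8\sigma^2)\}$, so
\[
H^2(p_f,p_{f_0})=2\,\E_X\big[1-e^{-\Delta_f(X)^2/(8\sigma^2)}\big],
\]
with $\Delta_f=f_0-f$ as in \Cref{app:proofs-tools}.

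The key step is a chord bound for the concave function $v\mapsto1-e^{-v}$ on $[0,V]$. Concavity gives $1-e^{-v}\ge v\,(1-e^{-V})/V$ there. I will apply it with $v=\Delta_f(X)^2/(8\sigma^2)$ and $V=b^2/(8\sigma^2)$, which is legitimate because $|\Delta_f|\le b$ pointwise. Taking expectations gives
\[
H^2\ge 2\,\frac{1-e^{-V}}{V}\cdot\frac{\|\Delta_f\|^2}{8\sigma^2}=\frac{1-e^{-V}}{4V}\,\frac{\|\Delta_f\|^2}{\sigma^2}.
\]
I then need to compare $V=b^2/(8\sigma^2)$ with the paper's $u=b^2/(2\sigma^2)$. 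Since $V\le u$ and $(1-e^{-v})/v$ is decreasing in $v$, we get $(1-e^{-V})/V\ge(1-e^{-u})/u$. Hence $H^2\ge c_b\|f-f_0\|^2/\sigma^2$ with the stated $c_b=(1-e^{-u})/(4u)$, which is slightly weaker than what the computation gives and therefore valid. The only real check in (a) is getting the normalization of $H^2$ right: the paper uses $\int(\sqrt p-\sqrt q)^2=2(1-\text{affinity})$.

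Part (b) is cited (Le Cam--Birgé; \citealp{ghosal2008nonparametric}), so I only need to verify that the constants fit. The cited result has two ingredients:
\begin{itemize}
\item for a single convex Hellinger ball of radius $\epsilon/4$ around $p_{f_1}$, where $H(p_{f_1},p_{f_0})\ge\epsilon$, there is a test with both error probabilities at most $e^{-Kn\epsilon^2}$, for a universal constant $K$;
\item a union over the $\mathcal N_n(r)$ covering balls: take the maximum of these tests.
\end{itemize}

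I will set $\epsilon=\sqrt{c_b}\,r/\sigma$. Part (a) ensures that every $f$ with $\|f-f_0\|\ge r$ has $H(p_f,p_{f_0})\ge\epsilon$, so the ball radius $\sqrt{c_b}\,r/(4\sigma)$ in the statement matches the $\epsilon/4$ above. The type I error is then at most $\mathcal N_n(r)\,e^{-Kn\epsilon^2}$, and the type II error on each ball is at most $e^{-Kn\epsilon^2}$.

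The step I expect to be the main obstacle is the constant bookkeeping: identifying $K=9/32$ from the affinity-based construction and accounting for the factor $2$. I would try first to derive the factor $2$ from the bound $\rho\le1-H^2/2$ on the affinity together with the centring of the ball. If the cited form instead gives a factor $1$ with a different $K$, I would state the result with whichever cited constant applies, since the downstream results use only that $c_{\rm T}>0$.
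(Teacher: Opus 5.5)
Part (a) is correct and is the paper's argument. The paper applies the chord bound directly on $[0,u]$, which is legitimate since $\Delta_f^2/(8\sigma^2)\le u/4\le u$. Your detour through $V=u/4$ and the monotonicity of $(1-e^{-v})/v$ is therefore unnecessary, but harmless.

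In part (b) your skeleton is right: the separation $\epsilon=\sqrt{c_b}\,r/\sigma$ from (a), one convex Hellinger ball of radius $\epsilon/4$ per cover centre, and a maximum over the cover. However, the constants are the entire content of (b). You leave them open, and you assign the factor $2$ to the wrong source.

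\emph{Where $c_{\rm T}$ comes from.} The ingredients you earmark for the $2$ actually produce $c_{\rm T}$. With $H^2=\int(\sqrt p-\sqrt q)^2$, the cited Le Cam--Birg\'e lemma bounds both error probabilities for a convex set $\mathcal P$ by $(1-\frac12H^2(p_{f_0},\mathcal P))^n\le e^{-nH^2(p_{f_0},\mathcal P)/2}$. Now take a ball $\mathcal B_i$ of radius $\epsilon/4$ around $p_{f_i}$ with $H(p_{f_i},p_{f_0})\ge\epsilon$. The triangle inequality gives $H(p_{f_0},\mathcal B_i)\ge\frac34\epsilon$. Hence $K=\frac12\cdot\frac9{16}=\frac9{32}$.

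\emph{Where the factor $2$ comes from.} It has a different origin, which your proposal misses. The cited lemma is proved by a minimax argument and delivers a randomized test $\tilde\varphi\in[0,1]$, whereas the statement requires $\varphi_n\in\{0,1\}$. Replace $\tilde\varphi$ by $\ind\{\tilde\varphi\ge\frac12\}$, either per ball or after taking the maximum. Markov's inequality applied to $\tilde\varphi$ and to $1-\tilde\varphi$ then doubles both bounds. As written, your argument yields only $[0,1]$-valued tests.

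\emph{The fallback.} Stating the result with ``whichever cited constant applies'' would not prove the lemma as stated. Its justification is also inaccurate. \Cref{cor:prior-weight}(ii) uses $c_{\rm T}=9/32$ to obtain the explicit limit $128/9$. The proof of \Cref{prop:notransfer} sets $A_n=\{\varphi_n=0\}$ and uses $P_f^n(A_n)=P_f^n(1-\varphi_n)$, which requires $\varphi_n\in\{0,1\}$.
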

\begin{proof}
(a) $H^2=2(1-\E e^{-\Delta_f(X)^2/(8\sigma^2)})$. Since $\Delta_f^2/(8\sigma^2)\le u$, and $1-e^{-v}\ge v(1-e^{-u})/u$ on $[0,u]$ by
concavity, $H^2\ge2\frac{1-e^{-u}}u\cdot\frac{\|\Delta_f\|^2}{8\sigma^2}=c_b\|\Delta_f\|^2/\sigma^2$.

(b) By (a), $\|f-f_0\|\ge r$ implies $H(p_f,p_{f_0})\ge\sqrt{c_b}\,r/\sigma$. For each cover centre $f_i$, the
Hellinger ball of radius $H(p_{f_i},p_{f_0})/4$ around $p_{f_i}$, taken over all densities, is convex, because
$\tilde p\mapsto\int\sqrt{\tilde pp_{f_i}}$ is concave. By the triangle inequality its distance from $p_{f_0}$ is at least
$\frac34H(p_{f_i},p_{f_0})\ge\frac34\sqrt{c_b}\,r/\sigma$. For a convex set $\mathcal P$ of densities, there is a test with both
error probabilities at most $\exp(-\frac12nH^2(p_{f_0},\mathcal P))$ (cited: \citealp{lecam1986,birge1983};
\citealp[Lemma 6.1]{ghosal2008nonparametric}). This gives the exponent $\frac12\cdot\frac9{16}c_bnr^2/\sigma^2$, that is
$c_{\rm T}=9/32$. Cover balls of radius $\sqrt{c_b}\,r/(4\sigma)$ lie inside these balls. Take the maximum of the tests over the
cover. Converting a randomized test $\tilde\varphi$ into $\ind\{\tilde\varphi\ge\frac12\}$ costs a factor 2, by Markov.
\end{proof}

\subsection{Proofs for Section~\ref{sec:theory-regret}}\label{app:proofs-regret}

\begin{proof}[Proof of \Cref{lem:decomp}]
Given $D$, $\gamma_j\sim\Bern(\pi_j)$, so $\E[\ell(q_j,\gamma_j)\mid D]=\operatorname{Ent}(\pi_j)+\operatorname{kl}(\pi_j\|q_j)$. Average over
$j$ and $D$. The decomposition over sizes is the law of total expectation.
\end{proof}

\begin{proof}[Proof of \Cref{prop:regret}]
(i) Given $D$, the two rules differ only on $\mathcal E_j$, where the posterior expected Hamming loss differs by
$|2\pi_j-1|$. On $\mathcal E_j$, $q_j$ and $\pi_j$ straddle $\frac12$, so $|m_j|\le|e_j|$.

(ii) On $\mathcal E_j$, $|2\pi_j-1|=2|m_j|\le2|e_j|$. Then use Pinsker's inequality $|e_j|\le\sqrt{\operatorname{kl}(\pi_j\|q_j)/2}$
and Jensen.

(iii) Split on $\{|m|\le t\}$. The first part is at most $\bar\delta\sqrt{C_{\rm m}t^{\alpha_{\rm m}}}$ by Cauchy--Schwarz. The second is at
most $\bar\delta^2/t$, since on $\{|m|>t\}\cap\{|m|\le|e|\}$ we have $|e|\le e^2/t$. Balancing at
$t^*=(\bar\delta/\sqrt{C_{\rm m}})^{2/(2+\alpha_{\rm m})}$ gives the bound. If $t^*>\frac12$ the bound equals
$4C_{\rm m}t^{*1+\alpha_{\rm m}}>2C_{\rm m}2^{-\alpha_{\rm m}}\ge2$ and is trivially true. Taking $t=\frac12$ in the margin condition gives
$C_{\rm m}\ge2^{\alpha_{\rm m}}$.

(iv) $\Prob_J(\mathcal E_J)\le\Prob_J(|m_J|\le t)+\Prob_J(|e_J|>t)$ and Chebyshev.
\end{proof}

\paragraph{Cost-weighted variant.}
With false-inclusion cost $c$ and false-exclusion cost 1, the Bayes threshold is $c/(1+c)$. Set
$m_j=\pi_j-c/(1+c)$. The regret weight on $\mathcal E_j$ is $|\pi_j-c(1-\pi_j)|=(1+c)|m_j|$, so in (i)--(ii) the factor
2 becomes $1+c$. We do not claim (iii)--(iv) for $c\ne1$.

\paragraph{Numerical illustration (not part of the proof).}
On the exact-posterior tasks of \Cref{sec:exp-e1} (GP-subset prior; 17{,}961 variables pooled over tasks):
\begin{itemize}
\item The regret (i) is $0.0222$ for the 25k-step model and $0.0154$ for the 100k-step model.
\item The refined bound (ii) is $0.041$ and $0.031$; $2\E|e|$ is $0.094$ and $0.074$; and $\sqrt{2\Delta}$ is $0.290$ and $0.242$.
\item Every disagreement satisfies $|m_j|\le|e_j|$.
\item Near-ties ($|m|<0.1$) make up 4.9\% of variables but 36\% and 44\% of disagreements.
\item The mean $|m|$ on disagreements is $0.22$ and $0.18$. Disagreements are enriched at near-ties but not
confined to them.
\end{itemize}
Pooling variables gives a $p$-weighted average, whereas $R$ and $\Delta$ weight by $1/p$ per task; every inequality
holds under either weighting.

\paragraph{Model averaging.}
The BMA samples $S\sim\prod_j\Bern(q_j)$, a product of marginals even when $q=\pi$. In addition, the prediction pass
targets the predictive under the training kernel, not the exact restricted predictive. A regret statement for the BMA
would have to address both points, and we make none.

\subsection{Proofs for Section~\ref{sec:theory-knockoff}}\label{app:proofs-knockoff}

\paragraph{Setting.}
Rows are i.i.d.\ and $\mathcal H_0=\{j:Y\indep X_j\mid X_{-j}\}$. $\tilde X$ is an exact model-X knockoff:
$[X,\tilde X]_{\mathrm{swap}(S)}\overset d=[X,\tilde X]$ for all $S$, and $\tilde X\indep Y\mid X$. Let
$Z=[X,\tilde X]\in\mathbb R^{n\times2p}$, and let $\mathrm{sw}_S$ be the permutation of $\{1,\dots,2p\}$ that exchanges $j$ and
$j+p$ for $j\in S$.

\paragraph{Procedure.}
\begin{enumerate}
\item Draw $\Sigma$ uniformly from the permutations of $\{1,\dots,2p\}$, independently of $(Z,y)$.
\item Compute $F(Z_{\cdot,\Sigma},y,U)\in\mathbb R^{2p}$. $F$ is the trained selection pass, including the
standardization of $y$, and the input carries no marker of which columns are knockoffs. $U$ is any internal
randomness, independent of $(Z,y,\Sigma)$, with a law that does not depend on them.
\item Map back: $Q_{\Sigma(k)}=F(Z_{\cdot,\Sigma},y,U)_k$.
\item Set $W_j=Q_j-Q_{j+p}$, and write $W=G(Z,y,\Sigma,U)$.
\item Select $\{j:W_j\ge T\}$ with the knockoff+ threshold at level $\alpha_{\FDR}$,
$T=\min\{t\in\{|W_j|:W_j\ne0\}:(1+\#\{j:W_j\le-t\})/\max(1,\#\{j:W_j\ge t\})\le\alpha_{\FDR}\}$ ($T=\infty$ if the set is empty). Zero statistics are never selected, and ties among
nonzero $|W_j|$ are broken by an independent uniform order.
\end{enumerate}

\begin{lemma}[Pointwise swap identity]\label{lem:swap}
For all $Z,y,S,u$ and permutations $\sigma$,
$G(Z_{\cdot,\mathrm{sw}_S},y,\mathrm{sw}_S\circ\sigma,u)=\mathrm{flip}_S\,G(Z,y,\sigma,u)$.
\end{lemma}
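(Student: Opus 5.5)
The plan is to unwind the definitions and compute both sides of the identity coordinate by coordinate. Write $Z'=Z_{\cdot,\mathrm{sw}_S}$ and $\sigma'=\mathrm{sw}_S\circ\sigma$, with permutations acting on column indices so that $(Z_{\cdot,\tau})_{\cdot,k}=Z_{\cdot,\tau(k)}$.

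The first step is to show that the network receives the identical tensor in both cases. Indeed,
\[
(Z'_{\cdot,\sigma'})_{\cdot,k}=Z'_{\cdot,\mathrm{sw}_S(\sigma(k))}=Z_{\cdot,\mathrm{sw}_S(\mathrm{sw}_S(\sigma(k)))}=Z_{\cdot,\sigma(k)},
\]
because $\mathrm{sw}_S$ is an involution. Hence $Z'_{\cdot,\sigma'}=Z_{\cdot,\sigma}$. The response $y$ and the internal randomness $u$ are also unchanged, so $F(Z'_{\cdot,\sigma'},y,u)=F(Z_{\cdot,\sigma},y,u)=:v$. This holds for any map $F$, with no equivariance required and no dependence on floating-point behaviour. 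The only care needed is to match the composition order in $\mathrm{sw}_S\circ\sigma$ with the convention used for column indexing. If the paper's convention turns out to be reversed, I would instead check the version $\sigma\circ\mathrm{sw}_S$ and keep whichever makes the inputs coincide. This bookkeeping is the only real obstacle.

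The second step is the map-back. In the original case the rule is $Q_{\sigma(k)}=v_k$, so $Q_i=v_{\sigma^{-1}(i)}$. In the swapped case $Q'_{\sigma'(k)}=v_k$, which gives
\[
Q'_i=v_{\sigma^{-1}(\mathrm{sw}_S(i))}=Q_{\mathrm{sw}_S(i)}.
\]
So the score vector is simply permuted by $\mathrm{sw}_S$.

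The third step forms the statistics. For $j\in S$,
\[
W'_j=Q'_j-Q'_{j+p}=Q_{j+p}-Q_j=-W_j .
\]
For $j\notin S$ the swap fixes both $j$ and $j+p$, so $W'_j=W_j$. Therefore $G(Z',y,\sigma',u)=\mathrm{flip}_S\,G(Z,y,\sigma,u)$, which is the claimed identity.
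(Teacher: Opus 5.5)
Your proof is correct and follows the paper's argument step for step. The involution property makes $F$ receive the identical input $Z_{\cdot,\sigma}$ with the same $y$ and $u$; the map-back then gives $Q'=Q\circ\mathrm{sw}_S$, and the sign flip on $S$ follows. Your column convention $(Z_{\cdot,\tau})_{\cdot,k}=Z_{\cdot,\tau(k)}$ is exactly the one implied by the paper's map-back rule $Q_{\Sigma(k)}=F(Z_{\cdot,\Sigma},y,U)_k$, so the hedge about possibly needing $\sigma\circ\mathrm{sw}_S$ instead is unnecessary.
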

\begin{proof}
Let $Z'=Z_{\cdot,\mathrm{sw}_S}$. Since $\mathrm{sw}_S$ is an involution, $Z'_{\cdot,\mathrm{sw}_S\sigma(k)}=Z_{\cdot,\sigma(k)}$. So $F$
receives an identical input with the same $u$, and returns an identical output $r$. Then
$Q'_{\mathrm{sw}_S\sigma(k)}=r_k=Q_{\sigma(k)}$, i.e.\ $Q'=Q\circ\mathrm{sw}_S$. Hence $W'_j=-W_j$ for $j\in S$ and $W'_j=W_j$ otherwise.
\end{proof}

\begin{proof}[Proof of \Cref{thm:knockoff}]
(i) Fix $S\subseteq\mathcal H_0$. By the exchangeability lemma \citep[Lemma 3.2]{candes2018panning} (cited),
$(Z_{\cdot,\mathrm{sw}_S},y)\overset d=(Z,y)$. Because $(\Sigma,U)$ is independent of $(Z,y)$,
$(Z_{\cdot,\mathrm{sw}_S},y,\Sigma,U)\overset d=(Z,y,\Sigma,U)$. Because $\Sigma$ is uniform and independent of $(Z,y,U)$,
$(Z,y,\mathrm{sw}_S\circ\Sigma,U)\overset d=(Z,y,\Sigma,U)$. Hence
\begin{align*}
(G(Z,y,\Sigma,U),y)&\overset d=(G(Z_{\cdot,\mathrm{sw}_S},y,\mathrm{sw}_S\mathrm{sw}_S\Sigma,U),y)\\
&=(\mathrm{flip}_SG(Z,y,\mathrm{sw}_S\Sigma,U),y)\overset d=(\mathrm{flip}_SG(Z,y,\Sigma,U),y),
\end{align*}
where the middle equality is \Cref{lem:swap}.

(ii) Let $\xi\in\{\pm1\}^{\mathcal H_0}$ be uniform and independent of everything else, and let $\xi\odot W$
flip the null coordinates with $\xi_j=-1$. Applying (i) for each fixed $\xi$ and mixing gives
$W\overset d=\xi\odot W$. In $\xi\odot W$, the signs of the nonzero null coordinates are i.i.d.\ uniform and
independent of $|W|$ and of the non-null coordinates. Equality in distribution transfers this to $W$.

(iii) Given (ii), the supermartingale argument of \citet{barber2015knockoff}, as used in
\citet[Theorem 3.4]{candes2018panning}, applies verbatim (cited).
\end{proof}

\paragraph{Remarks.}
\begin{enumerate}
\item \emph{Fresh, data-independent randomization.} The guarantee averages over $\Sigma$, so $\Sigma$ must not depend on
the data and must be redrawn for each analysis. In our experiments the permutation seed is the replicate index. That
is independent of the data, which are seeded separately, and it varies across replicates.
\item \emph{Pair swaps suffice.} Draw $V\subseteq\{1,\dots,p\}$, with each $j\in V$ independently with probability
$\frac12$, feed $Z_{\cdot,\mathrm{sw}_V}$, and map back by $\mathrm{sw}_V$; write $G'(Z,y,V)$ for the resulting statistic. These swaps commute and $\mathrm{sw}_S\mathrm{sw}_{S\triangle V}=\mathrm{sw}_V$, so
$G'(Z_{\cdot,\mathrm{sw}_S},y,S\triangle V)=\mathrm{flip}_SG'(Z,y,V)$ pointwise. Also $S\triangle V$ has the same law as $V$,
independently of $(Z,y)$. The proof of (i) goes through unchanged.
\item \emph{Derandomization.} Averaging $Q$ over $K$ i.i.d.\ uniform interleavings, with internal randomness $U_k$ i.i.d.\ and independent of $(Z,y,\Sigma_{1:K})$, preserves (i). \Cref{lem:swap} holds
for each interleaving simultaneously, and $(\mathrm{sw}_S\Sigma_k,U_k)_{k\le K}\overset d=(\Sigma_k,U_k)_{k\le K}$ jointly and
independently of $(Z,y)$. See also derandomized knockoffs \citep{ren2023derandomizing}.
\item \emph{Approximate knockoffs.} With estimated second-order knockoffs, control is approximate
\citep{barber2020robust}.
\end{enumerate}

\paragraph{Numerical illustration (not part of the proof).}
On the trained knockoff network (bf16; 20 tables with $p\in[10,100]$):
\begin{itemize}
\item the forward pass is bit-reproducible, and under the compensating permutation of \Cref{lem:swap},
$\max|W'-\mathrm{flip}_SW|=0$;
\item without interleaving, relying on equivariance alone, the deviation from an exact sign flip is
$7.0\times10^{-3}$.
\end{itemize}

\subsection{Full statements and proofs for Section~\ref{sec:theory-detect}}\label{app:proofs-detect}

\paragraph{Standing assumptions.}
In addition to \Cref{ass:regression}, the results below use some of the following conditions.
\begin{assumption}[Labels and sieve]\label{ass:labels}
\begin{enumerate}
\item[(a)] \emph{Labels to functions:} $\{\gamma_j=0\}\subseteq\{f\in N_j\}$ $\Pi$-a.s.
\item[(b)] \emph{Functions to labels:} $\{f\in N_j\}\subseteq\{\gamma_j=0\}$ $\Pi$-a.s.
\item[(c)] A sieve $\mathcal F_n$ with $\|f-f_0\|_\infty\le b$ on $\mathcal F_n$.
\end{enumerate}
\end{assumption}
The moment condition (M4) is: $\E_Xf(X)^4<\infty$ for $\Pi_{\omega_0}$-almost every $f$, with prior draws in $L_2(P_X)$, where $\Pi_{\omega_0}$ is the reference prior of \Cref{sec:theory-detect}. The structured setup of \Cref{sec:theory-detect} implies \Cref{ass:labels}(a) and~(b). Each result below uses only the conditions
it names. \Cref{prop:detect}(a)--(b) and \Cref{lem:ceiling} need only moment conditions (at most fourth moments), so unbounded truths and prior draws are allowed.

\paragraph{Gap between this setting and the implementation.}
The implementation differs from this setting in several ways:
\begin{itemize}
\item $\sigma$ is unknown, $y$ is standardized, and $P_X$ is random;
\item 40\% of tasks have non-Gaussian noise;
\item Gaussian-process and MLP prior draws are unbounded.
\end{itemize}
Standardization replaces $f_0$ by $f_0/\mathrm{sd}(y)$ and $\sigma^2$ by $1/(1+\mathrm{SNR})$, where $\mathrm{sd}(y)=(\Var f_0(X)+\sigma^2)^{1/2}$ and $\mathrm{SNR}=\Var f_0(X)/\sigma^2$. This leaves $t_j$ invariant
but changes the prior mass around the truth. In the generator, \Cref{ass:labels}(a) holds exactly. \Cref{ass:labels}(b) holds except on an event
$E_4$ of small prior probability, created by a fallback that freezes weak variables without re-checking their
partners. A flag in the generator removes this event, and Remark~\ref{rem:c4} shows how its probability
$\varepsilon_4$ enters the bounds. \emph{Numerical illustration:} in about 114{,}000 simulated non-GP tasks, one
violation was found, giving $\varepsilon_4\approx5\times10^{-6}$ (approximate 95\% upper bound $1.5\times10^{-4}$).

\begin{proof}[Proof of \Cref{prop:detect}(a)]
Let $E=\{j\in\hat S\}$, and let $P_{h_j^*}^n$ be the law of $n$ observations with regression function $h_j^*$.
$|P_0^n(E)-P_{h_j^*}^n(E)|\le\mathrm{TV}\le\sqrt{\KL/2}$ by Pinsker, and
$\KL(P_0^n\|P_{h_j^*}^n)=n\|f_0-h_j^*\|^2/(2\sigma^2)=t_j/2$.
\end{proof}

\begin{lemma}[Sparsity-prior ceiling]\label{lem:ceiling}
Assume \Cref{ass:regression}, and let $B_j,\bar k_j,\bar v_j$ be as in \Cref{prop:detect}(b) with $\Pi(B_j)>0$. For every $c_{\rm M}>1$ and
$s>0$, with $P_0^n$-probability at least $1-1/c_{\rm M}-1/s^2$,
\[
\log\frac{\Pi(N_j^c\mid D_n)}{\Pi(N_j\mid D_n)}\le\log c_{\rm M}+\log\frac{\Pi(N_j^c)}{\Pi(B_j)}+n\bar k_j+s\sqrt{n\bar v_j}.
\]
Under \Cref{ass:labels}(b), $\Pi(\gamma_j=1\mid D)\le\Pi(N_j^c\mid D)$, so $x_j$ is missed whenever the right-hand side is negative.
\end{lemma}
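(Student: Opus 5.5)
The plan is to bound the posterior odds of $N_j^c$ against $N_j$ by writing both as ratios of integrated likelihoods, $\Pi(N_j^c\mid D_n)/\Pi(N_j\mid D_n)=\int_{N_j^c}\Lambda_f\,d\Pi\big/\int_{N_j}\Lambda_f\,d\Pi$. The normalizing constant cancels; so does the covariate density, because under \Cref{ass:regression} it is common to all $f$. We then bound the numerator from above and the denominator from below, each on an event of high $P_0^n$-probability, and combine the two events by a union bound. This yields the failure probability $1/c_{\rm M}+1/s^2$.

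\emph{Numerator.} Since $\E_0\Lambda_f=1$ for every $f$, Fubini gives $\E_0\int_{N_j^c}\Lambda_f\,d\Pi=\Pi(N_j^c)$. Markov's inequality then shows that, with probability at least $1-1/c_{\rm M}$, the numerator is at most $c_{\rm M}\Pi(N_j^c)$. This step needs no assumption beyond measurability.

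\emph{Denominator.} Restrict the integral to $B_j\subseteq N_j$ and apply Jensen exactly as in the proof of \Cref{lem:evidence}: $\int_{N_j}\Lambda_f\,d\Pi\ge\Pi(B_j)e^{Z}$, where $Z=\int\log\Lambda_f\,d\Pi_{B_j}$. Using the Gaussian formulas after \Cref{lem:evidence}, each $f\in B_j$ satisfies $\KL(p_{f_0}\|p_f)=\|f_0-f\|^2/(2\sigma^2)$. Moreover $\|f_0-f\|\le\|f_0-h_j^*\|+\|h_j^*-f\|\le\kappa_j+\eta$, so $\KL\le\bar k_j$. Likewise $\mathcal V\le\|f_0-f\|^2/\sigma^2+\E(f_0-f)^4/(4\sigma^4)\le\bar v_j$. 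Hence $\E_0Z\ge-n\bar k_j$, and by the variance-of-average bound, $\Var_0Z\le n\bar v_j$. Chebyshev with deviation $s\sqrt{n\bar v_j}$ then gives $Z\ge-n\bar k_j-s\sqrt{n\bar v_j}$ with probability at least $1-1/s^2$.

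Note that we use Chebyshev with $s$ directly rather than the $C$-parametrization of \Cref{lem:evidence}. The reason is that the mean bound $\bar k_j$ and the variance bound $\bar v_j$ differ, so the single radius $\varrho$ there is too coarse. This separation is the only point requiring care; everything else is routine.

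Dividing the two bounds and taking logarithms gives the displayed inequality. For the final sentence, \Cref{ass:labels}(b) gives $\{\gamma_j=1\}\subseteq\{f\notin N_j\}$ $\Pi$-a.s., so $\Pi(\gamma_j=1\mid D)\le\Pi(N_j^c\mid D)$. If the right-hand side is negative, the odds are below one and $\Pi(N_j^c\mid D)<\tfrac12$, so $\pi_j<\tfrac12$ and $x_j$ is excluded from the MPM.
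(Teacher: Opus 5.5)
Your proposal is correct and follows the paper's proof essentially step for step. The numerator is handled by Fubini plus Markov. The denominator is handled by restricting to $B_j\subseteq N_j$, applying Jensen, bounding the mean and variance of $Z$ separately by $n\bar k_j$ and $n\bar v_j$ via the Gaussian moment formulas, and applying Chebyshev with $s$. A union bound then combines the two events, and the final claim follows from the contrapositive of \Cref{ass:labels}(b).
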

\begin{proof}
The posterior odds equal $\int_{N_j^c}\Lambda_f\,d\Pi\big/\int_{N_j}\Lambda_f\,d\Pi$.
\emph{Numerator:} by Fubini, $\E_0\int_{N_j^c}\Lambda_f\,d\Pi\le\Pi(N_j^c)$, so by Markov the numerator is at most
$c_{\rm M}\Pi(N_j^c)$ with probability at least $1-1/c_{\rm M}$.
\emph{Denominator:} $\int_{N_j}\ge\int_{B_j}$, and Jensen with $\Pi_B=\Pi(\cdot\cap B_j)/\Pi(B_j)$ gives
$\log\int_{B_j}\Lambda_f\,d\Pi\ge\log\Pi(B_j)+Z_n$ with $Z_n=\int\log\Lambda_f\,d\Pi_B$. For $f\in B_j$,
$\|\Delta_f\|\le\|f_0-h_j^*\|+\|h_j^*-f\|\le\kappa_j+\eta$ and $\E\Delta_f^4\le m_4$. By the moment formulas after \Cref{lem:evidence},
$\E_0Z_n\ge-n\bar k_j$ and $\Var_0Z_n\le n\bar v_j$. Chebyshev gives $Z_n\ge-n\bar k_j-s\sqrt{n\bar v_j}$ with
probability at least $1-1/s^2$. Finish with a union bound.
\end{proof}
$\Pi(B_j)>0$ holds whenever $h_j^*$ lies in the $L_2(P_X)$-support of $\Pi$ restricted to $N_j$ and prior draws have
$\E(f_0-f)^4(X)<\infty$ almost surely. This follows for all large $m_4$, by continuity of measure.

\begin{remark}[When \Cref{ass:labels}(b) fails on a small event]\label{rem:c4}
Let \Cref{ass:labels}(b) hold outside an event $E_4$ with $\Pi(E_4)=\varepsilon_4$. Then:
\begin{itemize}
\item \Cref{lem:ceiling} holds with $\Pi(N_j^c)$ replaced by $\Pi(N_j^c)+\varepsilon_4$ and $B_j$ by $B_j\setminus E_4$;
\item \Cref{prop:detect}(b) holds with $\omega_j^I$ replaced by $\omega_j^I+\varepsilon_4$ and $B_j$ by $B_j\setminus E_4$, provided
\Cref{ass:c6} is restated with denominator $\int_{N_j\setminus E_4}\Lambda_f\,d\Pi$.
\end{itemize}
\emph{Proof.} Outside $E_4$, $\{\gamma_j=1\}\subseteq\{f\notin N_j\}$ and $\{f\in N_j\}\subseteq\{\gamma_j=0\}$. By Fubini,
$\E_0\int_{E_4}\Lambda_f\,d\Pi=\varepsilon_4$, and Markov applies to the enlarged numerator. The denominators are bounded
below over $B_j\setminus E_4$ as before.
\end{remark}

\begin{lemma}[Prior-mass detection bound]\label{lem:detection}
Assume \Cref{ass:regression} and \Cref{ass:labels}(a),(c). Let $B$ be a set as in \Cref{lem:evidence} with $\varrho=\bar\varepsilon$, let
$\ell_\Pi(\bar\varepsilon)=-\log\Pi(B)$, and let $\mathcal N_n$ be the covering number of \Cref{lem:tests}(b) at $r=\kappa_j$ on
$\mathcal F_n\cap N_j$. For every $C>0$,
\[
\E_0[1-\pi_j(D_n)]\le2\mathcal N_ne^{-c_{\rm T}c_bt_j}+\frac1{C^2n\bar\varepsilon^2}
+e^{(1+C)n\bar\varepsilon^2+\ell_\Pi(\bar\varepsilon)}\big(2e^{-c_{\rm T}c_bt_j}+\Pi(\mathcal F_n^c)\big).
\]
\end{lemma}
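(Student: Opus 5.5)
We bound $1-\pi_j(D_n)=\Pi(\gamma_j=0\mid D_n)$ and use \Cref{ass:labels}(a): $\{\gamma_j=0\}\subseteq\{f\in N_j\}$ $\Pi$-a.s., so $1-\pi_j(D_n)\le\Pi(N_j\mid D_n)$. We then run the standard test-plus-evidence argument of posterior contraction \citep{ghosal2000}, with $\{f\in N_j\}$ playing the role of the set to be rejected.

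\emph{First step: split on a test and a good-evidence event.} Take the tests $\varphi_n$ of \Cref{lem:tests}(b) at $r=\kappa_j$ on the class $\mathcal F_n\cap N_j$. Every $f\in N_j$ satisfies $\|f-f_0\|\ge\kappa_j$ by definition of $\kappa_j$, so the whole of $\mathcal F_n\cap N_j$ lies in the alternative. Let $A_n=\{\int\Lambda_f\,d\Pi\ge\Pi(B)e^{-(1+C)n\bar\varepsilon^2}\}$. Write
\[
\Pi(N_j\mid D_n)\le\varphi_n+\ind_{A_n^c}+(1-\varphi_n)\ind_{A_n}\frac{\int_{N_j}\Lambda_f\,d\Pi}{\int\Lambda_f\,d\Pi}.
\]
The first term has $\E_0\varphi_n\le2\mathcal N_ne^{-c_{\rm T}c_bn\kappa_j^2/\sigma^2}=2\mathcal N_ne^{-c_{\rm T}c_bt_j}$. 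For the second term, \Cref{lem:evidence} with $\varrho=\bar\varepsilon$ gives $P_0^n(A_n^c)\le1/(C^2n\bar\varepsilon^2)$.

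\emph{Second step: the numerator on $A_n$.} On $A_n$ the denominator is at least $e^{-(1+C)n\bar\varepsilon^2-\ell_\Pi(\bar\varepsilon)}$, so the third term is bounded by
\[
e^{(1+C)n\bar\varepsilon^2+\ell_\Pi(\bar\varepsilon)}\,\E_0\Big[(1-\varphi_n)\int_{N_j}\Lambda_f\,d\Pi\Big].
\]
Split $N_j$ into $N_j\cap\mathcal F_n$ and $N_j\cap\mathcal F_n^c$, and use Fubini together with the change of measure $\E_0[(1-\varphi_n)\Lambda_f]=P_f^n(1-\varphi_n)$.
\begin{itemize}
\item On $N_j\cap\mathcal F_n$, the type II bound of \Cref{lem:tests}(b) gives at most $2e^{-c_{\rm T}c_bt_j}$ times $\Pi(N_j\cap\mathcal F_n)\le1$.
\item On $\mathcal F_n^c$, bound $1-\varphi_n\le1$ and $\E_0\Lambda_f\le1$, which gives $\Pi(\mathcal F_n^c)$.
\end{itemize}
Adding the three contributions gives the stated bound.

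\emph{Main obstacle.} The delicate point is applying \Cref{lem:tests}(b), which requires $\|f-f_0\|_\infty\le b$ on the tested class; this is why we restrict to the sieve $\mathcal F_n$ of \Cref{ass:labels}(c). The covering number $\mathcal N_n$ must be taken for the set $\{f\in\mathcal F_n\cap N_j:\|f-f_0\|\ge\kappa_j\}=\mathcal F_n\cap N_j$. A second point to check is measurability of $\varphi_n$ and of the integrals, so that Fubini applies. We also need that the Hellinger-ball cover is used with the radius specified in \Cref{lem:tests}.
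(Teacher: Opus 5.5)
Your proposal is correct and follows the paper's proof essentially step for step. It uses the same reduction $1-\pi_j\le\Pi(N_j\mid D_n)$ via \Cref{ass:labels}(a), the same decomposition $\varphi_n+\ind_{A_n^c}+(1-\varphi_n)\ind_{A_n}\Pi(N_j\mid D_n)$ (with the test of \Cref{lem:tests}(b) on $\mathcal F_n\cap N_j$ at $r=\kappa_j$ and the evidence event of \Cref{lem:evidence}), and the same Fubini bound on the numerator, split over $\mathcal F_n$ and $\mathcal F_n^c$.
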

\begin{proof}
By \Cref{ass:labels}(a), $1-\pi_j\le\Pi(N_j\mid D)$. Every $f\in N_j$ satisfies $\|f-f_0\|\ge\kappa_j$. Let $\varphi_n$ be the test of
\Cref{lem:tests}(b) for $\mathcal F_n\cap N_j$ at $r=\kappa_j$. Since $n\kappa_j^2/\sigma^2=t_j$, its errors are
$P_0^n\varphi_n\le2\mathcal N_ne^{-c_{\rm T}c_bt_j}$ and $\sup_{f\in\mathcal F_n\cap N_j}P_f^n(1-\varphi_n)\le2e^{-c_{\rm T}c_bt_j}$. Let $\mathcal G$ be the
event of \Cref{lem:evidence}. Then
$\Pi(N_j\mid D)\le\varphi_n+\ind_{\mathcal G^c}+(1-\varphi_n)\ind_{\mathcal G}\Pi(N_j\mid D)$. On $\mathcal G$ the denominator is at least
$\Pi(B)e^{-(1+C)n\bar\varepsilon^2}$. By Fubini, the numerator satisfies
$\E_0[(1-\varphi_n)\int_{N_j}\Lambda_f\,d\Pi]\le\sup_{f\in\mathcal F_n\cap N_j}P_f^n(1-\varphi_n)+\Pi(\mathcal F_n^c)$. Take expectations.
\end{proof}
\emph{Matching bound.} If $\ell_\Pi^I(\bar\varepsilon):=-\log\sum_{k\in\mathcal K_j^I}(w_k/\omega_j^I)\Pi_k(B)<\infty$, then
$\Pi(B)\ge \omega_j^Ie^{-\ell_\Pi^I(\bar\varepsilon)}$. So $\ell_\Pi(\bar\varepsilon)\le\log(1/\omega_j^I)+\ell_\Pi^I(\bar\varepsilon)$, and $x_j$ is found once
$c_{\rm T}c_bt_j$ exceeds $\log(1/\omega_j^I)+\ell_\Pi^I(\bar\varepsilon)+(1+C)n\bar\varepsilon^2+\log\mathcal N_n$ by a margin.

\paragraph{Which partitions can satisfy \Cref{ass:c6}.}
If some $k\in\mathcal K_j$ has $\inf_{f\in\operatorname{supp}\Pi_k}\|f_0-f\|<\kappa_j$, its Bayes factor against $N_j$ grows
exponentially in $n$ (cited: \citealp[Theorem 2]{chatterjee2018bayesfactor}). In that theorem the two models are $\Pi_k$ and $\Pi$ restricted to $N_j$, and the divergence rate of $f$ is $\KL(p_{f_0}\|p_f)=\|f_0-f\|^2/(2\sigma^2)$; under its regularity conditions on both priors, which we do not verify here, $n^{-1}$ times the log Bayes factor converges almost surely to the difference of the smallest rates of the two priors, which is at least $(\kappa_j^2-\inf_{f\in\operatorname{supp}\Pi_k}\|f_0-f\|^2)/(2\sigma^2)>0$. Such a structure must be placed in
$\mathcal K_j^I$. This happens for additive-in-$x_j$ structures whenever $\E[f_0\mid X_V]$ depends on $x_j$, for example
under correlated $P_X$. For $f_0=\sin(x_1x_2)+0.8x_3$ with $X\sim N(0,I)$, every additive-in-$x_1$ structure and every
wrong-partner pair $(1,l)$, $l\ne2$, has projection $h_1^*$, and so is zero-gain.

\begin{proof}[Proof of \Cref{prop:detect}(b)]
Each $\Pi_k$ with $k\in\mathcal K_j$ gives zero mass to $N_j$, and each $\Pi_k$ with $k\notin\mathcal K_j$ is carried by
$N_j$. Hence
\[
\frac{\pi_j}{1-\pi_j}=\frac{\sum_{k\in\mathcal K_j^S}w_k\Lambda(k)}{\int_{N_j}\Lambda_f\,d\Pi}+\frac{\sum_{k\in\mathcal K_j^I}w_k\Lambda(k)}{\int_{N_j}\Lambda_f\,d\Pi}.
\]
The first term is at most $\tau_n$ on the event of \Cref{ass:c6}. By Fubini, $\E_0\Lambda(k)=1$ for every $k$, so the second
numerator has mean $\omega_j^I$ and is at most $c_{\rm M}\omega_j^I$ with probability at least $1-1/c_{\rm M}$. As in \Cref{lem:ceiling},
$\int_{N_j}\Lambda_f\,d\Pi\ge\Pi(B_j)e^{-n\bar k_j-s\sqrt{n\bar v_j}}$ with probability at least $1-1/s^2$. Finish with a
union bound.
\end{proof}

\begin{proof}[Proof of \Cref{cor:prior-weight}]
\emph{Invariances.} Under $\Pi_\omega$, every structure outside $\mathcal K_j^I$ has $(1-\omega)/(1-\omega_0)$ times its weight under
$\Pi_{\omega_0}$. Hence $\Pi_\omega(B_j)\ge\Pi_{\omega_0}(B_j)$ for $\omega\le \omega_0$. The ratio in \Cref{ass:c6} does not depend on $\omega$, because its
numerator and denominator scale by the same factor, so $\tau=\sup_n\tau_n$ and $\delta_{\rm sp}=\sup_n\delta_n$, with $\tau_n,\delta_n$ as in \Cref{ass:c6}, do not depend on $\omega$.

(i) Fix $\eta>0$; then $\Pi_{\omega_0}(B_j)>0$ for large $m_4$ by the support argument after \Cref{lem:ceiling}. Choose $c_{\rm M}$
and $s$ with $1/c_{\rm M}+1/s^2+\delta_{\rm sp}<\frac12$. Write $n\bar k_j=\frac t2(1+\eta/\kappa_j)^2$ and $s\sqrt{n\bar v_j}=sa\sqrt t$
with $a=\sqrt{\bar v_j\sigma^2/\kappa_j^2}$, and let $c_0=\log c_{\rm M}-\log\Pi_{\omega_0}(B_j)-\log(1-\tau)$. By
\Cref{prop:detect}(b), every $n$ whose $t$ satisfies $\frac t2(1+\eta/\kappa_j)^2+sa\sqrt t<\log(1/\omega)-c_0$ has
$P_0^n(\pi_j>\frac12)<\frac12$. So $t_{50}(\omega)\ge T(\omega)$, the positive root of the corresponding equation. Since
$T(\omega)/\log(1/\omega)\to2/(1+\eta/\kappa_j)^2$, and neither $t_{50}$ nor \Cref{ass:c6} depends on $\eta$, let $\eta\to0$.

(ii) Precisely, assume for $\Pi_{\omega_0}$: (U1) $\Pi(\mathcal F^c)=0$ for a set $\mathcal F$ with $\|f-f_0\|_\infty\le b$;
(U2) the covering number $\mathcal N$ of \Cref{lem:tests}(b) at $r=\kappa_j$ on $\mathcal F\cap N_j$ is finite; (U3)
$\ell_\Pi^I(\bar\varepsilon)<\infty$ for every $\bar\varepsilon>0$, with $B$ the maximal set of \Cref{lem:evidence}. These hold for
every $\Pi_\omega$, since all $\Pi_\omega$ share support and within-group proportions. Fix $C>0$ and $\bar\varepsilon>0$ with
$\vartheta:=c_{\rm T}c_b-(1+C)\bar\varepsilon^2\sigma^2/\kappa_j^2>0$. With $n\bar\varepsilon^2=t\bar\varepsilon^2\sigma^2/\kappa_j^2$, \Cref{lem:detection} and
the matching bound give
\[
\E_0[1-\pi_j]\le2\mathcal Ne^{-c_{\rm T}c_bt}+\frac{\kappa_j^2}{C^2\bar\varepsilon^2\sigma^2t}+2\exp\big(-\vartheta t+\log(1/\omega)+\ell_\Pi^I(\bar\varepsilon)\big).
\]
By Markov, $P_0^n(\pi_j\le\frac12)\le2\E_0[1-\pi_j]$. Each term is at most $\frac1{12}$ once
$t\ge\max\{t_1,t_2,(\log(1/\omega)+\ell_\Pi^I(\bar\varepsilon)+\log24)/\vartheta\}$, with $t_1,t_2$ independent of $\omega$. Consecutive values
of $t$ differ by $\kappa_j^2/\sigma^2$. Hence $\limsup t_{50}/\log(1/\omega)\le1/\vartheta$, and $\bar\varepsilon\downarrow0$ gives
$1/(c_{\rm T}c_b)$. With $c_{\rm T}=9/32$, $1/(c_{\rm T}c_b)=\frac{32}9\cdot\frac{4u}{1-e^{-u}}\to\frac{128}9$ as $u\to0$.
\end{proof}

\subsubsection{A uniform bound on the integrated likelihood}

\paragraph{Notation.}
\begin{itemize}
\item For $V\subseteq\{1,\dots,p\}$, $L(V)=\{f:f(x)=h(x_V)\ P_X\text{-a.s.}\}$, so $N_j=L(\{j\}^c)$.
\item $\operatorname{clip}_c(u)=\max(-c,\min(c,u))$, $Y_c=\operatorname{clip}_c(Y)$, and $\mathcal X_R=[-R,R]^p$.
\item $\|g\|_{\infty,R}=\sup_{x\in \mathcal X_R}|g(x)|$, and $\mathcal N(\delta,\mathcal S,\|\cdot\|_{\infty,R})$ is the covering
number of a set $\mathcal S$ of functions with centres in $\mathcal S$.
\item $\Pi^I=\sum_{k\in\mathcal K_j^I}(w_k/\omega_j^I)\Pi_k$ is the normalized informative part, which under $\Pi_\omega$ does not
depend on $\omega$.
\end{itemize}

\begin{lemma}[Uniform upper bound on the integrated likelihood]\label{lem:ULLN}
Assume \Cref{ass:regression}. Let $A$ be a measurable set of functions, with $(f,x)\mapsto f(x)$ jointly measurable, $\mu$ a measure with $\mu(A)\le1$, and $r\ge0$. Assume:
\begin{enumerate}
\item[(U$^\sharp$1)] (clipped population gap) for every $\delta_0>0$ there are $c,R$ with
$\inf_{f\in A}\E\big[\ind_{\mathcal X_R}(X)(Y_c-\operatorname{clip}_cf(X))^2\big]\ge\sigma^2+r^2-\delta_0$;
\item[(U$^\sharp$2)] (sieve) there is $\beta>r^2/\sigma^2$ such that for all $\delta'>0$, $c$ and $R$ there are measurable sets $\mathcal G_n$
(possibly depending on $\delta',c,R$) with $\mu(A\setminus \mathcal G_n)\le e^{-\beta n}$ for large $n$ and
$\log\mathcal N(\delta',\operatorname{clip}_c(A\cap \mathcal G_n),\|\cdot\|_{\infty,R})=o(n)$.
\end{enumerate}
Then for every $\delta>0$, $P_0^n\big(\int_A\Lambda_f\,d\mu>2e^{-n(r^2-\delta)/(2\sigma^2)}\big)\to0$.
\end{lemma}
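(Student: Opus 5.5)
The plan is to write the likelihood ratio explicitly and split the integral into a sieve part and a remainder. Under \Cref{ass:regression},
\[
\log\Lambda_f=-\frac1{2\sigma^2}\sum_{i=1}^n\big[(Y_i-f(X_i))^2-\varepsilon_i^2\big],\qquad \varepsilon_i=Y_i-f_0(X_i).
\]
Fix $\delta>0$ and split the error budget into pieces of size $\delta/4$. Choose $\delta_0$ and the corresponding $c,R$ from (U$^\sharp$1), and then $\delta'$ small relative to $c$. Write
\[
\int_A\Lambda_f\,d\mu=\int_{A\cap\mathcal G_n}\Lambda_f\,d\mu+\int_{A\setminus\mathcal G_n}\Lambda_f\,d\mu .
\]
It suffices to show that each term exceeds $e^{-n(r^2-\delta)/(2\sigma^2)}$ with $P_0^n$-probability tending to zero.

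\emph{Remainder term.} By Fubini, $\E_0\Lambda_f=1$, so
\[
\E_0\int_{A\setminus\mathcal G_n}\Lambda_f\,d\mu\le\mu(A\setminus\mathcal G_n)\le e^{-\beta n}.
\]
Markov's inequality then bounds the probability that this term exceeds $e^{-n(r^2-\delta)/(2\sigma^2)}$ by
\[
\exp\big(-\beta n+n(r^2-\delta)/(2\sigma^2)\big)\to0,
\]
because $\beta>r^2/\sigma^2>r^2/(2\sigma^2)$. Joint measurability of $(f,x)\mapsto f(x)$ is what justifies Fubini here.

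\emph{Sieve term.} I will show that, with probability tending to one, $\Lambda_f\le e^{-n(r^2-\delta)/(2\sigma^2)}$ uniformly over $A\cap\mathcal G_n$. Since $\mu(A)\le1$, integrating this bound gives the claim.

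Three steps deliver the uniform bound.

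\begin{enumerate}
\item \emph{Clipping lower bound.} Clipping is a $1$-Lipschitz contraction, so $|Y_c-\operatorname{clip}_cf(X)|\le|Y-f(X)|$. Dropping the rows outside the cube therefore gives, pointwise,
\[
(Y_i-f(X_i))^2\ge\ind_{\mathcal X_R}(X_i)\big(Y_{i,c}-\operatorname{clip}_cf(X_i)\big)^2 .
\]
This is the device that removes every sup-norm assumption: the summands on the right are bounded by $4c^2$.

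\item \emph{Uniform law of large numbers.} Take a $\delta'$-cover $g_1,\dots,g_{N_n}$ of $\operatorname{clip}_c(A\cap\mathcal G_n)$ in $\|\cdot\|_{\infty,R}$, with $\log N_n=o(n)$ by (U$^\sharp$2). For each $g_k$, Hoeffding's inequality applied to the bounded summands gives
\[
\frac1n\sum_i\ind_{\mathcal X_R}(X_i)(Y_{i,c}-g_k(X_i))^2\ge\E\big[\ind_{\mathcal X_R}(X)(Y_c-g_k(X))^2\big]-\frac\delta8
\]
except with probability $e^{-cn}$. A union bound over the cover costs only $e^{o(n)}$. The population expectation for the centre is at least $\sigma^2+r^2-\delta_0$ by (U$^\sharp$1), since the centres lie in the clipped set. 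Passing from a centre to any $f$ in its cover ball changes each summand by at most $4c\delta'$, both empirically and in the population. So the lower bound holds uniformly in $f$ after absorbing $O(c\delta')$ into the budget.

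\item \emph{Noise term.} By the law of large numbers, $n^{-1}\sum\varepsilon_i^2\le\sigma^2+\delta/8$ with probability tending to one.
\end{enumerate}

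Combining the three steps on the intersection of these events gives
\[
\sum_i\big[(Y_i-f(X_i))^2-\varepsilon_i^2\big]\ge n(r^2-\delta)\quad\text{for all }f\in A\cap\mathcal G_n,
\]
which is the required uniform bound.

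The main obstacle is the bookkeeping in the uniform law of large numbers: the choices must be made in the right order. First fix $\delta_0$, which gives $c,R$. Then fix $\delta'$ so that $4c\delta'$ is small. Only then take $\mathcal G_n$, which may depend on $(\delta',c,R)$. Hoeffding's constant depends on $c$ but not on $n$, so the $o(n)$ entropy is beaten. A second point to check is that the population bound in (U$^\sharp$1) is an infimum over $A$, whereas the cover centres are clipped images of points of $A\cap\mathcal G_n$. This is why I will take the centres in $\operatorname{clip}_c(A\cap\mathcal G_n)$, as (U$^\sharp$2) permits.
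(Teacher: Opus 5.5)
Your proposal is correct and follows the paper's proof essentially step for step: the split into a sieve part and a remainder, the clipping bound, a cover of $\operatorname{clip}_c(A\cap\mathcal G_n)$ with centres in the set, Hoeffding plus a union bound, the law of large numbers for the noise, and Fubini plus Markov outside the sieve. The differences are cosmetic. You absorb the cover error additively as $4c\delta'$ per summand, so $\delta'$ must be chosen after $c$, which (U$^\sharp$2) permits; the paper instead uses the triangle inequality for the empirical seminorm and can fix $\delta'$ before $c$. Your Markov step at the target threshold needs only $\beta>(r^2-\delta)/(2\sigma^2)$, slightly less than the paper's $\beta>r^2/\sigma^2$.
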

\begin{proof}
\emph{Step 1 (clipping).} Write $\|u\|_n^2=n^{-1}\sum_iu_i^2$. For Gaussian errors,
$n^{-1}\log\Lambda_f=-(\|Y-f\|_n^2-\|\varepsilon\|_n^2)/(2\sigma^2)$. Since $\operatorname{clip}_c$ is 1-Lipschitz,
$(Y_i-f(X_i))^2\ge(Y_{c,i}-\operatorname{clip}_cf(X_i))^2$. Dropping the indices with $X_i\notin \mathcal X_R$,
\[
\|Y-f\|_n^2\ge|Y_c-g|_R^2,\qquad g=\operatorname{clip}_cf,\qquad |u|_R^2:=n^{-1}\textstyle\sum_i\ind_{\mathcal X_R}(X_i)u_i^2 .
\]
Every function now involved is bounded by $c$, so unbounded $f_0$, prior draws and $P_X$ are allowed.

\emph{Step 2 (tolerances).} Given $\delta>0$, choose $\delta_0,\delta'>0$ with $\delta'<\sqrt{\sigma^2+r^2-2\delta_0}$ and
$(\sqrt{\sigma^2+r^2-2\delta_0}-\delta')^2-\sigma^2-\delta_0\ge r^2-\delta$. Take $c,R$ from (U$^\sharp$1), and $\mathcal G_n$ from
(U$^\sharp$2) for these $\delta',c,R$.

\emph{Step 3 (cover).} Let $g_1,\dots,g_{K_n}$ be a $\delta'$-cover of $\operatorname{clip}_c(A\cap \mathcal G_n)$ in $\|\cdot\|_{\infty,R}$,
with $g_k=\operatorname{clip}_cf_k$ and $f_k\in A\cap \mathcal G_n$. If $\|g-g_k\|_{\infty,R}\le\delta'$ then $|g-g_k|_R\le\delta'$. By the
triangle inequality for the seminorm $|\cdot|_R$, $|Y_c-g|_R\ge\sqrt{\hat Q_k}-\delta'$ with $\hat Q_k=|Y_c-g_k|_R^2$.

\emph{Step 4 (concentration).} $\hat Q_k$ is an average of i.i.d.\ terms in $[0,4c^2]$ with mean at least
$\sigma^2+r^2-\delta_0$, by (U$^\sharp$1). Hoeffding's inequality \citep{hoeffding1963} and a union bound give
$P_0^n(\min_k\hat Q_k<\sigma^2+r^2-2\delta_0)\le K_ne^{-n\delta_0^2/(8c^4)}\to0$. By the law of large numbers,
$P_0^n(\|\varepsilon\|_n^2>\sigma^2+\delta_0)\to0$.

\emph{Step 5 (inside the sieve).} Outside these two events, Steps 1--3 and the choice in Step 2 give
$\inf_{f\in A\cap \mathcal G_n}(\|Y-f\|_n^2-\|\varepsilon\|_n^2)\ge r^2-\delta$, hence
$\int_{A\cap \mathcal G_n}\Lambda_f\,d\mu\le e^{-n(r^2-\delta)/(2\sigma^2)}$. The integral is measurable by joint measurability of
$(f,D)\mapsto\Lambda_f(D)$, the same standing assumption that defines the posterior. The exceptional event is defined
through $\min_k\hat Q_k$ and $\|\varepsilon\|_n^2$.

\emph{Step 6 (outside the sieve).} By Fubini, $\E_0\int_{A\setminus \mathcal G_n}\Lambda_f\,d\mu=\mu(A\setminus \mathcal G_n)\le e^{-\beta n}$.
By Markov, $\int_{A\setminus \mathcal G_n}\Lambda_f\,d\mu\le e^{-\beta n/2}$ except on an event of probability at most $e^{-\beta n/2}$.
Since $\beta>r^2/\sigma^2$, $e^{-\beta n/2}\le e^{-n(r^2-\delta)/(2\sigma^2)}$.
\end{proof}

\begin{lemma}[The population gap is automatic for coordinate classes]\label{lem:U0}
Assume \Cref{ass:regression} and $\E f_0(X)^2<\infty$. If $A\subseteq L(V)$, then (U$^\sharp$1) holds with
$r^2=\E\Var(f_0(X)\mid X_V)$. In particular, $A\subseteq N_j$ satisfies (U$^\sharp$1) with $r=\kappa_j$. If
$A=\bigcup_kA_k$, a finite union with $A_k\subseteq L(V_k)$, (U$^\sharp$1) holds with
$r^2=\min_k\E\Var(f_0\mid X_{V_k})$.
\end{lemma}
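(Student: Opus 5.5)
The plan is to bound the clipped, truncated risk from below by the unclipped, untruncated risk, minus two error terms that do not depend on $f$. The unclipped risk of any function of $x_V$ is then bounded below by the projection identity. Write $g=\operatorname{clip}_cf$. If $f\in L(V)$, then $g\in L(V)$, and $|g|\le c$. I will work with the single quantity
\[
\E\big[\ind_{\mathcal X_R}(X)(Y_c-g(X))^2\big]
\]
and produce a lower bound that holds for all measurable $g\in L(V)$ with $|g|\le c$. Such a bound automatically bounds the infimum over $A$.

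\emph{Step 1 (remove the clipping of $Y$).} On $\{|Y|\le c\}$ we have $Y_c=Y$. On $\{|Y|>c\}$ we have $(Y-g)^2-(Y_c-g)^2\le(|Y|+c)^2\le4Y^2$. Hence, uniformly in $g$,
\[
\E[\ind_{\mathcal X_R}(Y_c-g)^2]\ge\E[\ind_{\mathcal X_R}(Y-g)^2]-4\E[Y^2\ind\{|Y|>c\}].
\]
Since $\E Y^2=\E f_0(X)^2+\sigma^2<\infty$, dominated convergence makes the last term smaller than $\delta_0/3$ once $c$ is large. I fix such a $c$.

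\emph{Step 2 (remove the truncation to the cube).} Because $|g|\le c$, we have $(Y-g)^2\le2Y^2+2c^2$. Therefore
\[
\E[\ind_{\mathcal X_R^c}(Y-g)^2]\le2\E[Y^2\ind\{X\notin\mathcal X_R\}]+2c^2P(X\notin\mathcal X_R).
\]
With $c$ already fixed, this bound is uniform in $g$ and tends to $0$ as $R\to\infty$, again by dominated convergence. I choose $R$ so that it is below $\delta_0/3$. This ordering, $c$ first and then $R$, is the one delicate point. The cube error involves $c^2$, so $R$ must be chosen after $c$. The statement of (U$^\sharp$1) allows this, since it only asks for some pair $(c,R)$.

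\emph{Step 3 (projection bound).} Under \Cref{ass:regression}, $\varepsilon$ is independent of $X$ with mean zero. So for any $g\in L(V)\cap L_2(P_X)$,
\[
\E(Y-g)^2=\sigma^2+\|f_0-g\|^2\ge\sigma^2+\|f_0-\E[f_0\mid X_V]\|^2=\sigma^2+\E\Var(f_0(X)\mid X_V).
\]
The inequality holds because $\E[f_0\mid X_V]$ is the $L_2(P_X)$ projection of $f_0$ onto $L(V)$. Combining Steps 1 to 3 gives
\[
\inf_{f\in A}\E[\ind_{\mathcal X_R}(Y_c-\operatorname{clip}_cf)^2]\ge\sigma^2+r^2-\tfrac23\delta_0\quad\text{with } r^2=\E\Var(f_0\mid X_V),
\]
which is (U$^\sharp$1).

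For $A\subseteq N_j=L(\{j\}^c)$, the formula for $\kappa_j^2$ in \Cref{sec:theory-detect} gives $r=\kappa_j$. For a finite union $A=\bigcup_kA_k$ with $A_k\subseteq L(V_k)$, note that the bounds in Steps 1 and 2 do not depend on $k$. Step 3 then yields the bound $\sigma^2+r_k^2-\tfrac23\delta_0$ on each $A_k$. The infimum over $A$ is the minimum over $k$ of these bounds, so it is at least $\sigma^2+\min_kr_k^2-\tfrac23\delta_0$.
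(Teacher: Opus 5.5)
Your proof is correct, but it reaches the bound by a different route from the paper.

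\textbf{The paper's route.} The paper works conditionally. It writes $\mathcal X_R=\mathcal X^V\times\mathcal X^{V^c}$, uses regular conditional distributions given $X_V$, and notes that $\operatorname{clip}_cf$ is a constant $u$ given $X_V=x$. This bounds the clipped, truncated risk below by the $f$-free quantity $\E[\ind_{\mathcal X^V}(X_V)\chi_{c,R}(X_V)]$, where $\chi_{c,R}(x)=\inf_u\E[\ind_{\mathcal X^{V^c}}(X_{V^c})(Y_c-u)^2\mid X_V=x]$. This infimum is computed in closed form as $e_2-b^2/a$. Letting $c=R\to\infty$, two applications of dominated convergence give the limit $\E\Var(Y\mid X_V)=\sigma^2+r^2$.

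\textbf{Your route.} You never condition. You first show that, for every measurable $g$ with $|g|\le c$, clipping $Y$ and restricting to $\mathcal X_R$ lower $\E(Y-g)^2$ by at most $\tfrac23\delta_0$, uniformly in $g$. Only afterwards do you use $A\subseteq L(V)$, through two facts: $L(V)$ is closed under clipping, and $\E[f_0\mid X_V]$ is the $L_2(P_X)$ projection onto $L(V)$.

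\textbf{What each buys.} Your argument is more elementary: it needs no disintegration and no explicit conditional minimization. It also pins down exactly where the coordinate structure enters. It makes the finite-union case immediate, because your two error terms do not depend on $V_k$; the paper instead takes the largest of the thresholds $c_0(k,\delta_0)$.

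The price is that your error terms rely on $|g|\le c$. This enters through the bound $(|Y|+c)^2\le4Y^2$ and the term $2c^2P(X\notin\mathcal X_R)$, and it forces you to fix $c$ before $R$. The paper's lower bound uses only that $\operatorname{clip}_cf$ is a function of $x_V$, not that it is bounded. It would therefore hold even for unclipped $f\in L(V)$, and it can be taken along the joint limit $c=R\to\infty$. Both orderings are admissible, since (U$^\sharp$1) asks only for some pair $(c,R)$.
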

\begin{proof}
Write $\mathcal X_R=\mathcal X^V\times \mathcal X^{V^c}$, with both factors cubes, and use regular conditional distributions given $X_V$. For
$f\in A$, $\operatorname{clip}_cf$ is a function of $x_V$, hence a constant $u$ given $X_V=x$. So
\[
\E[\ind_{\mathcal X_R}(Y_c-\operatorname{clip}_cf)^2]\ge\E[\ind_{\mathcal X^V}(X_V)\chi_{c,R}(X_V)],\qquad
\chi_{c,R}(x)=\inf_{u\in\mathbb R}\E[\ind_{\mathcal X^{V^c}}(X_{V^c})(Y_c-u)^2\mid X_V=x],
\]
and the right-hand side does not depend on $f$. The function inside the infimum is $au^2-2bu+e_2$, with
$a=P(X_{V^c}\in \mathcal X^{V^c}\mid x)$, $b=\E[\ind_{\mathcal X^{V^c}}Y_c\mid x]$ and $e_2=\E[\ind_{\mathcal X^{V^c}}Y_c^2\mid x]$. So
$\chi_{c,R}=e_2-b^2/a$, with $\chi_{c,R}=0$ if $a=0$. Let $c=R\to\infty$. Since $|Y_c|\le|Y|$ and $\E[Y^2\mid x]<\infty$
a.s., dominated convergence gives $a\to1$, $b\to\E[Y\mid x]$ and $e_2\to\E[Y^2\mid x]$, so
$\chi_{c,R}(x)\to\Var(Y\mid X_V=x)$. Since $0\le\chi_{c,R}\le\E[Y^2\mid X_V]$, which is integrable,
$\E[\ind_{\mathcal X^V}\chi_{c,R}]\to\E\Var(Y\mid X_V)=\sigma^2+\E\Var(f_0\mid X_V)$, using \Cref{ass:regression}(a). For a finite union, take the
largest of the finitely many thresholds $c_0(k,\delta_0)$.
\end{proof}

\begin{lemma}[Sufficient conditions for the sieve]\label{lem:sieve}
Each of the following implies (U$^\sharp$2), for every $\beta$.
\begin{enumerate}
\item[(a)] $\mu$ is carried by a measurable set $\mathcal F$ with $\log\mathcal N(\delta',\mathcal F,\|\cdot\|_{\infty,R})<\infty$ for every
$\delta'$ and $R$; for example, a parametric family $\{f_\theta:\theta\in\Theta\}$ with $\Theta$ compact and
$\theta\mapsto f_\theta$ continuous in $\|\cdot\|_{\infty,R}$ for every $R$.
\item[(b)] $\mu$ is a finite mixture $\sum_kw_k\,\mathrm{GP}(0,k_k)$, $\sum_kw_k\le1$, of centred Gaussian processes with
continuous sample paths. A degenerate component $k_k\equiv0$ is allowed. For every $R$, the unit ball
$\mathbb H_1^{(k)}$ of the RKHS of $k_k$ restricted to $\mathcal X_R$ satisfies
$\log\mathcal N(\epsilon,\mathbb H_1^{(k)},\|\cdot\|_{\infty,R})=o(\epsilon^{-2})$.
\item[(c)] $\mu$ is the law of $f_\theta$ with $\theta\sim\nu_\theta$, for a parametric family $\{f_\theta:\theta\in\mathbb R^{d_\theta}\}$, where $\nu_\theta$ has
sub-Gaussian tails, $\nu_\theta(|\theta|>M)\le C_0e^{-a M^2}$ for all $M>0$, and which is locally Lipschitz in $\theta$ with
polynomial growth: for every $R$ there are $L_R$ and $k_{\rm g}\ge0$ with
$\|f_\theta-f_{\theta'}\|_{\infty,R}\le L_R(1+|\theta|+|\theta'|)^{k_{\rm g}}|\theta-\theta'|$.
\item[(d)] $\mu=\sum_kw_k\mu_k$, a finite sum with $\sum_kw_k\le1$, and each $\mu_k$ satisfying (a), (b) or (c).
\end{enumerate}
Condition (c) covers fully connected networks of fixed architecture with Gaussian weights and biases and
activations $\phi$ satisfying $|\phi(u)-\phi(v)|\le|u-v|$ up to a constant and $|\phi(u)|\le a_0+a_1|u|$ (tanh, ReLU, sine,
GELU, softplus), with $k_{\rm g}$ at most the number of hidden layers $L$: by induction over layers, on $\mathcal X_R$ the $\ell$-th hidden layer is bounded by $C_R(1+|\theta|)^{\ell}$ and each partial derivative of $f_\theta(x)$ in $\theta$ by $C_R(1+|\theta|)^{L}$ (almost everywhere, for ReLU), and integrating along the segment from $\theta$ to $\theta'$ gives the bound. It also covers linear
models and random-feature expansions with Gaussian frequencies, and random weight scales in a bounded range (the
parameter is then a bounded scale mixture of Gaussians, still sub-Gaussian). With (d), it covers a spike-and-slab mixture over
active sets and over finitely many architectures. Gaussian-process components are covered by (b) only for finitely
many kernels, not for continuously distributed length-scales. Families whose draws are discontinuous in $\theta$, such as trees
with random split points, are not covered. For networks with unbounded activations, (M4) holds if $\E|X|^4<\infty$, since for each $\theta$,
$|f_\theta(x)|\le C(\theta)(1+|x|)$.
(Cited.) The entropy condition in (b) holds for squared-exponential kernels, with entropy
$\lesssim(\log1/\epsilon)^{d+1}$ for $d$ inputs \citep[Lemma 4.5]{vdvvz2009adaptive}. That lemma is stated on $[0,1]^d$;
an affine rescaling of $\mathcal X_R$ only changes the constant. It also holds for sums of such kernels, whose RKHS unit ball
lies in the sum of the unit balls \citep{aronszajn1950}, and for kernels with a finite-dimensional RKHS.
\end{lemma}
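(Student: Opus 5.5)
Conditions (a), (b) and (c) are each handled by exhibiting an explicit sieve $\mathcal G_n$ for the given $\delta',c,R$, and (d) follows by taking unions. Three reductions apply throughout.

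First, $\operatorname{clip}_c$ is $1$-Lipschitz in $\|\cdot\|_{\infty,R}$. A $\delta'/2$-cover of $A\cap\mathcal G_n$ therefore clips to a $\delta'/2$-cover of $\operatorname{clip}_c(A\cap\mathcal G_n)$. The centres required in (U$^\sharp$2) must lie in the set itself, and relocating them to points of the set costs at most a factor $2$ in the radius. Hence it suffices to bound $\log\mathcal N(\delta'/4,\mathcal G_n,\|\cdot\|_{\infty,R})$, with centres anywhere, by $o(n)$.

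Second, $\mu(A\setminus\mathcal G_n)\le\mu(\mathcal G_n^c)$. So $A$ plays no role beyond restriction, and I only need $\mu(\mathcal G_n^c)\le e^{-\beta n}$ for large $n$.

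Third, the sets may depend on $R$, which is allowed by (U$^\sharp$2).

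For (a), I take $\mathcal G_n=\mathcal F$. Then $\mu(\mathcal G_n^c)=0$, and the covering number is a finite constant, hence $o(n)$. For the parametric example, the image of the compact set $\Theta$ under a map that is continuous into $(\cdot,\|\cdot\|_{\infty,R})$ is compact, hence totally bounded.

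For (c), I take $\mathcal G_n=\{f_\theta:|\theta|\le M_n\}$ with $M_n^2=(\beta n+\log C_0)/a$. The sub-Gaussian tail then gives $\mu(\mathcal G_n^c)\le e^{-\beta n}$. By the local Lipschitz bound, a Euclidean grid on the ball $\{|\theta|\le M_n\}$ with mesh $\delta'/\{4L_R(1+2M_n)^{k_{\rm g}}\}$ yields a $\delta'/4$-cover in $\|\cdot\|_{\infty,R}$. The number of grid points is at most $(CM_n^{k_{\rm g}+1}/\delta')^{d_\theta}$, so the log-covering number is $O(\log n)=o(n)$. The network claim after the lemma is a separate induction over layers; I would only sketch it as stated.

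For (b), I expect the main obstacle, since the sieve must come from Borell's isoperimetric inequality \citep{borell1975}. I would treat one component at a time, with $f\sim\mathrm{GP}(0,k_k)$ viewed as a Gaussian element of $C(\mathcal X_R)$; the degenerate case is trivial. The sieve is
\[
\mathcal G_n=M_n\mathbb H_1^{(k)}+\epsilon\mathbb B_1,\qquad \epsilon=\delta'/8,
\]
where $\mathbb B_1$ is the sup-norm unit ball on $\mathcal X_R$. Sample paths are continuous and $0$ lies in the support, so the small-ball probability $P(\|f\|_{\infty,R}\le\epsilon)=\Phi(\alpha_\epsilon)$ is positive, which makes $\alpha_\epsilon$ finite. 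Borell's inequality then gives
\[
P(f\notin\mathcal G_n)\le1-\Phi(\alpha_\epsilon+M_n)\le e^{-(\alpha_\epsilon+M_n)^2/2}.
\]
Choosing $M_n=\sqrt{2\beta n}+|\alpha_\epsilon|$ makes this at most $e^{-\beta n}$.

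For the covering number, an $\epsilon$-cover of $M_n\mathbb H_1^{(k)}$, enlarged by $\epsilon$, is a $2\epsilon=\delta'/4$-cover of $\mathcal G_n$. Moreover,
\[
\log\mathcal N(\epsilon,M_n\mathbb H_1^{(k)})=\log\mathcal N(\epsilon/M_n,\mathbb H_1^{(k)}).
\]
By hypothesis this is $o\bigl((M_n/\epsilon)^2\bigr)$ as $\epsilon/M_n\to0$, and $(M_n/\epsilon)^2=O(n)$ because $\epsilon$ is fixed. So it is $o(n)$. The delicate points I would check are that Borell applies with the RKHS of the restricted process on $\mathcal X_R$, and that the small-ball constant is finite. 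Both follow from continuity of the sample paths on the compact cube.

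For (d), I would take $\mathcal G_n$ to be the union of the component sieves. A measurable set on which $\mu_k$ is not carried simply contributes its complement's mass. This gives
\[
\mu(\mathcal G_n^c)\le\sum_kw_k\,\mu_k\bigl(\mathcal G_n^{(k)c}\bigr)\le e^{-\beta n}.
\]
Covering numbers of a finite union add, so the log of a finite sum of $e^{o(n)}$ terms is $o(n)$. Since each of (a)--(c) was established for arbitrary $\beta$, so is (d).
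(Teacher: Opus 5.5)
Your proposal is correct and follows the paper's proof essentially step for step. You take $\mathcal G_n=\mathcal F$ in (a); in (c) you use the parameter ball $\{|\theta|\le M_n\}$ with $M_n\asymp\sqrt n$ and a Euclidean net, giving $O(\log n)$ entropy; in (b) you use the Borell sieve $M_n\mathbb H_1^{(k)}+\epsilon\mathbb B_1$ with $M_n\asymp\sqrt n$, so the $o(\epsilon^{-2})$ entropy hypothesis gives $o(n)$; and in (d) you take unions of the component sieves. The one detail the paper states that you leave out is the measurability of the Borell sieve, which (U$^\sharp$2) requires; it holds because the sieve is a compact set plus a closed ball, hence closed in $C(\mathcal X_R)$.
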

\begin{proof}
(a) Take $\mathcal G_n=\mathcal F$. Clipping is a contraction, so it does not increase covering numbers: a $\delta'/2$-cover with
arbitrary centres gives a $\delta'$-cover with centres in the set.

(b) Fix $\delta',R$ and a component $k$. A degenerate component is carried by $\{0\}$; take
$\mathcal G_n^{(k)}=\{f:f|_{\mathcal X_R}=0\}$. Otherwise write $\|\cdot\|=\|\cdot\|_{\infty,R}$, and view the restricted process $F$ as a
Gaussian random element of the separable Banach space $C(\mathcal X_R)$.
\begin{itemize}
\item Let $p_0=P(\|F\|\le\delta'/4)$. Then $p_0>0$, because a centred Gaussian measure on a separable Banach space
charges every ball around $0$ (cited: \citealp[Lemma 5.1]{vdvvz2008rkhs}).
\item By Borell's inequality (cited: \citealp{borell1975}; \citealp[Theorem 5.1]{vdvvz2008rkhs}),
$P(F\notin M\mathbb H_1^{(k)}+\frac{\delta'}4\mathbb B_1)\le1-\Phi(\Phi^{-1}(p_0)+M)$ for $M>0$, where $\mathbb B_1$ is the unit ball of $(C(\mathcal X_R),\|\cdot\|)$ and $\Phi$ the standard normal distribution function. With $M_n=\sqrt{2\beta n}-\Phi^{-1}(p_0)$
and $1-\Phi(x)\le e^{-x^2/2}$ for $x\ge0$, this is at most $e^{-\beta n}$.
\item Let $\mathcal G_n^{(k)}=\{f:f|_{\mathcal X_R}\in M_n\mathbb H_1^{(k)}+\frac{\delta'}4\mathbb B_1\}$ and $\mathcal G_n=\bigcup_k\mathcal G_n^{(k)}$. Then
$\mu(\mathcal G_n^c)\le e^{-\beta n}$. Each $\mathcal G_n^{(k)}$ is closed (a compact set plus a closed ball), so $\mathcal G_n$ is measurable.
\item A $\delta'/4$-cover of $M_n\mathbb H_1^{(k)}$ is a $\delta'/2$-cover of $\mathcal G_n^{(k)}$. Clipping preserves this, and
passing to centres in $\operatorname{clip}_c(A\cap \mathcal G_n)$ doubles the radius. So
$\log\mathcal N(\delta',\operatorname{clip}_c(A\cap \mathcal G_n),\|\cdot\|)\le\log\#\{k\}+\max_k\log\mathcal N(\delta'/(4M_n),\mathbb H_1^{(k)},\|\cdot\|)$.
\item Since $M_n\asymp\sqrt n$, the $o(\epsilon^{-2})$ hypothesis makes this $o(n)$.
\end{itemize}
(c) Let $M_n^2=(\beta n+\log^+C_0)/a$ and $\mathcal G_n=\{f_\theta:|\theta|\le M_n\}$, so $\mu(\mathcal G_n^c)\le e^{-\beta n}$. On the ball
$\{|\theta|\le M_n\}$ the map $\theta\mapsto f_\theta$ is Lipschitz into $\|\cdot\|_{\infty,R}$ with constant
$\bar L_n=L_R(1+2M_n)^{k_{\rm g}}$. A Euclidean $\delta'/(4\bar L_n)$-net of the ball, of size at most $(12M_n\bar L_n/\delta')^{d_\theta}$,
therefore gives a $\delta'/4$-cover of $\mathcal G_n$; clipping preserves it, and passing to centres in
$\operatorname{clip}_c(A\cap \mathcal G_n)$ doubles the radius. Hence
$\log\mathcal N(\delta',\operatorname{clip}_c(A\cap \mathcal G_n),\|\cdot\|_{\infty,R})\le d_\theta\log(12M_n\bar L_n/\delta')=O(\log n)=o(n)$.

(d) Let $\mathcal G_n=\bigcup_k\mathcal G_n^{(k)}$ with $\mathcal G_n^{(k)}$ the sets constructed for $\mu_k$. Then
$\mu(A\setminus \mathcal G_n)\le\sum_kw_k\mu_k(\mathcal G_n^{(k)c})\le e^{-\beta n}$, and covering numbers of a finite union add, so the
logarithm increases by at most $\log\#\{k\}$.
\end{proof}

\paragraph{Full hypotheses of \Cref{thm:csharp}.}
Work in the structured setup with the parametrization $\Pi_\omega$, and assume (M4).
\begin{itemize}
\item Part (i) assumes $\psi<\kappa_j$, and that $\mu=\Pi_{\omega_0}|_{N_j}$ satisfies (U$^\sharp$2) with $A=N_j$ and $r=\kappa_j$;
(U$^\sharp$1) then holds automatically, by \Cref{lem:U0}.
\item Part (ii) assumes \Cref{ass:c6} under $\Pi_{\omega_0}$ with $\tau=\sup_n\tau_n<1$ and $\delta_{\rm sp}=\sup_n\delta_n<\frac12$, and that $h_j^*$ lies in
the $L_2(P_X)$-support of $\Pi_{\omega_0}|_{N_j}$.
\item For $\psi>0$, part (ii) also assumes that $\Pi^I$ is carried by a set $A^I$ satisfying (U$^\sharp$1)--(U$^\sharp$2) with
$r=\psi$. If these hold only with some $r\le\psi$, the same proof gives $\liminf\ge2\kappa_j^2/(\kappa_j^2-r^2)$.
\end{itemize}
By \Cref{lem:U0}, the clipped-gap hypothesis holds when $A^I=\bigcup_kA_k$ with $A_k\subseteq L(V_k)$, finitely many $k$,
and the \emph{saturation} condition $\psi^2=\min_k\E\Var(f_0\mid X_{V_k})$ holds. It need not hold for restricted informative
classes, for example additive-in-$x_j$ structures under correlated $P_X$. The hypothesis sets of \Cref{thm:csharp}(i)
and \Cref{cor:prior-weight}(ii) are not nested: the theorem replaces (U1)--(U3) by (U$^\sharp$2) on the $N_j$ part, (M4)
and the support condition.

\begin{proof}[Proof of \Cref{thm:csharp}]
From the proof of \Cref{prop:detect}(b),
$\pi_j/(1-\pi_j)=\sum_{k\in\mathcal K_j}w_k\Lambda(k)/\int_{N_j}\Lambda_f\,d\Pi_\omega$, and
$\sum_{k\in\mathcal K_j^I}w_k\Lambda(k)=\omega\,\Lambda^I$ with $\Lambda^I=\int\Lambda_f\,d\Pi^I$. Also
$\Pi_\omega|_{N_j}=\frac{1-\omega}{1-\omega_0}\Pi_{\omega_0}|_{N_j}$, which lies between $\Pi_{\omega_0}|_{N_j}$ and $(1-\omega_0)^{-1}\Pi_{\omega_0}|_{N_j}$
for $\omega\le \omega_0$. Throughout, $t=n\kappa_j^2/\sigma^2$.

(i)
\begin{itemize}
\item \emph{Prior mass at the gap.} Fix $\eta>0$. Some $f^*\in\operatorname{supp}\Pi^I$ has $\|f_0-f^*\|<\psi+\eta/2$, and
$\Pi^I(\|f-f^*\|<\eta/2)>0$. By (M4) and continuity of measure, $B^I=\{f:\|f_0-f\|\le\psi+\eta,\ \E(f_0-f)^4\le m_4\}$ has
$\Pi^I(B^I)>0$ for large $m_4$.
\item \emph{Numerator.} By the argument of \Cref{lem:ceiling} with $s=2$, with probability at least $\frac34$,
$\log \Lambda^I\ge\log\Pi^I(B^I)-n(\psi+\eta)^2/(2\sigma^2)-2\sqrt{nv^I}$, where $v^I=(\psi+\eta)^2/\sigma^2+m_4/(4\sigma^4)$.
\item \emph{Denominator.} Fix $\delta>0$. By \Cref{lem:U0} and \Cref{lem:ULLN},
$\int_{N_j}\Lambda_f\,d\Pi_{\omega_0}\le2e^{-n(\kappa_j^2-\delta)/(2\sigma^2)}$ with probability at least $1-\zeta_n$, where
$\zeta_n\to0$ does not depend on $\omega$.
\item \emph{Combine.} Choose $n_0$ with $\zeta_n\le\frac18$ for $n\ge n_0$. For $n\ge n_0$, with probability at least
$\frac58$,
$\log\frac{\pi_j}{1-\pi_j}\ge\frac t2\lambda_{\eta,\delta}-2a\sqrt t-\log\frac1\omega-c_1$, where
$\lambda_{\eta,\delta}=1-((\psi+\eta)^2+\delta)/\kappa_j^2>0$ for small $\eta,\delta$, $a=\sqrt{v^I\sigma^2/\kappa_j^2}$ and
$c_1=\log\frac2{1-\omega_0}-\log\Pi^I(B^I)$.
\item \emph{Threshold.} Let $T(\omega)$ be the largest root of $\frac T2\lambda_{\eta,\delta}-2a\sqrt T=\log(1/\omega)+c_1$. Every
$n\ge n_0$ with $t>T(\omega)$ has $P_0^n(\pi_j>\frac12)\ge\frac58$. So
$t_{50}(\omega)\le\max\{n_0\kappa_j^2/\sigma^2,T(\omega)\}+\kappa_j^2/\sigma^2$, and $T(\omega)/\log(1/\omega)\to2/\lambda_{\eta,\delta}$. Let
$\eta,\delta\downarrow0$.
\end{itemize}

(ii), case $\psi>0$.
\begin{itemize}
\item By \Cref{lem:ULLN} with $A=A^I$, $\mu=\Pi^I$ and $r=\psi$, $\Lambda^I\le2e^{-n(\psi^2-\delta)/(2\sigma^2)}$ with probability
at least $1-\zeta'_n$, $\zeta'_n\to0$.
\item As in \Cref{lem:ceiling}, $\int_{N_j}\Lambda_f\,d\Pi_\omega\ge\Pi_{\omega_0}(B_j)e^{-n\bar k_j-s\sqrt{n\bar v_j}}$ with probability at
least $1-1/s^2$.
\item On these events and the event of \Cref{ass:c6},
$\frac{\pi_j}{1-\pi_j}\le\tau+\omega\frac2{\Pi_{\omega_0}(B_j)}\exp(\frac t2\lambda'_{\eta,\delta}+sa'\sqrt t)$. Here
$\lambda'_{\eta,\delta}=(1+\eta/\kappa_j)^2-(\psi^2-\delta)/\kappa_j^2$ and $a'=\sqrt{\bar v_j\sigma^2/\kappa_j^2}$. This is below 1
when $\frac t2\lambda'_{\eta,\delta}+sa'\sqrt t<\log(1/\omega)-c_2$, with $c_2=\log\frac2{\Pi_{\omega_0}(B_j)}-\log(1-\tau)$.
\item Choose $s$ and $n_0'$ with $1/s^2+\zeta'_n+\delta_{\rm sp}<\frac12$ for $n\ge n_0'$, with $\delta_{\rm sp}$ as above (not the tolerance $\delta$). The finitely many $n<n_0'$ have $P_0^n(\pi_j>\frac12)<\frac12$ for small $\omega$, by
\Cref{prop:detect}(b) at fixed $n$.
\item Hence $t_{50}(\omega)\ge T'(\omega)$ with $T'(\omega)/\log(1/\omega)\to2/\lambda'_{\eta,\delta}$. Let $\eta,\delta\downarrow0$. If $\lambda'_{\eta,\delta}\le0$, which can happen only when $\psi\ge\kappa_j$, the condition of the third step holds for all large $t$ once $\omega$ is small, so $t_{50}(\omega)=\infty$ for small $\omega$, consistent with reading $2/\lambda_j=\infty$.
\end{itemize}

(ii), case $\psi=0$: this is \Cref{cor:prior-weight}(i).
\end{proof}

\begin{remark}[The $L_2$ gap is not intrinsic when $\psi>0$]\label{rem:counterexample}
The clipped-gap hypothesis cannot be dropped.
\begin{itemize}
\item \emph{Construction.} For $\ell\ge1$, let $E_\ell$ be disjoint sets with $P_X(E_\ell)=e^{-\ell^4}$, let
$f_\ell=f_0+e^{\ell^4/2}\ind_{E_\ell}$, and let $\Pi^I=\sum_\ell c\,e^{-\ell^2}\delta_{f_\ell}$, with $c$ the normalizing constant.
\item \emph{The $L_2$ gap.} $\|f_\ell-f_0\|=1$ and $\|f_\ell-f_{\ell'}\|^2=2$ for $\ell\ne\ell'$, so $\psi=1$.
\item \emph{The sieve holds.} (U$^\sharp$2) holds for every $\beta$, with $\mathcal G_n=\{f_\ell:\ell\le\sqrt{\beta'n}\}$ for some $\beta'>\beta$.
\item \emph{The evidence does not see the gap.} With probability at least $1-n\,e^{-\ell^4}$, no sample point falls in $E_\ell$, and
then $\Lambda_{f_\ell}=1$. Taking $\ell\asymp(\log n)^{1/4}$ gives $\Lambda^I\ge e^{-o(n)}$.
\end{itemize}
So, under the hypotheses of \Cref{thm:csharp}(i) on the $N_j$ part, the true slope is 2, not $2/\lambda_j$, and (U$^\sharp$1) holds only with $r=0$. In general, the governing quantity is the
exponential rate of the informative evidence, which (U$^\sharp$1) at $r=\psi$ identifies with $\psi^2/(2\sigma^2)$.
\end{remark}

\begin{corollary}[An exactly enumerable toy prior]\label{cor:toy}
Let $p=6$ and $X\sim N(0,I_6)$. Inclusion is i.i.d.\ Bernoulli($0.3$). Given $S$, the structure is either an additive
squared-exponential GP over $S$, or that GP plus one product-kernel term on a pair in $S$. Kernel hyperparameters are fixed (unit length-scale and fixed variances), so the prior is a finite mixture of Gaussian processes and \Cref{lem:sieve} applies. Let
$f_0=\sin(x_1x_2)+0.8x_3$, $j=1$, and let $\mathcal K_1^I$ consist of the structures containing the pair term $(1,2)$.
Then $\limsup_{\omega\to0}t_{50}(\omega)/\log(1/\omega)\le2$, without \Cref{ass:c6}. If \Cref{ass:c6} holds under $\Pi_{\omega_0}$ with $\sup_n\tau_n<1$ and $\sup_n\delta_n<\frac12$
uniformly in $n$, the limit is 2.
\end{corollary}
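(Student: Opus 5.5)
The plan is to obtain both halves of \Cref{cor:toy} directly from \Cref{thm:csharp} with $\psi=0$, so that $\lambda_1=1$ and $2/\lambda_1=2$. Part~(i) of the theorem then gives the upper bound $\limsup_{\omega\to0}t_{50}(\omega)/\log(1/\omega)\le2$ without \Cref{ass:c6}. The $\psi=0$ case of part~(ii) (equivalently \Cref{cor:prior-weight}(i)) gives the matching $\liminf\ge2$ once \Cref{ass:c6} is added. The work is therefore to verify, for this toy prior, the hypotheses listed under ``Full hypotheses of \Cref{thm:csharp}''.

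\emph{Structured setup and basic conditions.}
\begin{itemize}
\item \Cref{ass:regression} holds by the setting ($X\sim N(0,I_6)$, Gaussian noise with $\sigma$ known).
\item $\E f_0(X)^4<\infty$, because $\sin$ is bounded and $X_3$ is Gaussian.
\item (M4) holds because every kernel $k$ here is bounded, so Fubini gives $\E_\Pi\E_Xf(X)^4=3\,\E_Xk(X,X)^2<\infty$, and hence $\E_Xf(X)^4<\infty$ for $\Pi_{\omega_0}$-a.e.\ $f$.
\item For the structured setup I will argue that a GP draw whose kernel involves $x_1$ (additive or pair term) is a.s.\ not in $N_1$: its conditional variance given $X_{-1}$ is a.s.\ nonzero. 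Draws from structures without $x_1$ lie in $N_1$. This gives \Cref{ass:labels}(a),(b) and the split into $\mathcal K_1$ and its complement.
\item The conditional effect is positive: $\kappa_1^2=\E\Var(\sin(X_1X_2)\mid X_2)>0$.
\end{itemize}

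\emph{$\psi=0$.} I must show $f_0\in\operatorname{supp}\Pi^I$ in $L_2(P_X)$. Take an informative structure whose active set contains $\{1,2,3\}$. Its kernel is a sum of one-dimensional SE kernels plus the SE product kernel on $(x_1,x_2)$. By \citet{aronszajn1950}, its RKHS contains the sum of the component RKHSs. Next, cite that the $L_2(P_X)$-support of a centred GP is the $L_2$-closure of its RKHS \citep{vdvvz2008rkhs}. Universality of SE kernels on compacts, together with the Gaussian tails of $P_X$ (truncation to a large cube), makes the two-dimensional SE RKHS dense in $L_2$ of $(X_1,X_2)$ and the one-dimensional one dense in $L_2(X_3)$. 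So $\sin(x_1x_2)$ and $0.8x_3$ are both $L_2$-limits, and $f_0$ lies in the support. This gives $\psi=0<\kappa_1$.

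\emph{Sieve and lower-half support.}
\begin{itemize}
\item $\Pi_{\omega_0}|_{N_1}$ is a finite mixture of centred GPs whose kernels are finite sums of SE kernels with fixed hyperparameters. \Cref{lem:sieve}(b),(d), with the cited entropy bound for SE kernels and sums, gives (U$^\sharp$2) with $A=N_1$ and $r=\kappa_1$. (U$^\sharp$1) then comes from \Cref{lem:U0}, so \Cref{thm:csharp}(i) applies.
\item For the lower half, compute $h_1^*=\E[f_0\mid X_{-1}]=\E[\sin(X_1x_2)]+0.8x_3=0.8x_3$, since $X_1\mapsto\sin(X_1x_2)$ is odd and $X_1$ is symmetric and independent of $X_{-1}$. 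This lies in the support of the additive GP structures with $3\in S$, $1\notin S$, by the same RKHS-closure argument. Hence $h_1^*$ is in the support of $\Pi_{\omega_0}|_{N_1}$.
\item With \Cref{ass:c6} as stated ($\sup_n\tau_n<1$, $\sup_n\delta_n<\tfrac12$), \Cref{thm:csharp}(ii) with $\psi=0$ gives $\liminf\ge2$. Combined with the upper bound, the limit is $2$.
\end{itemize}

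I expect the main obstacle to be the support statement $f_0\in\operatorname{supp}\Pi^I$ on the unbounded domain. It needs the RKHS-closure characterization of GP supports in $L_2(P_X)$ rather than in $C(\mathcal X_R)$, and the density of SE RKHSs in $L_2$ under Gaussian $P_X$. I would first try truncating to $[-R,R]^2$, using universality there, and controlling the tails by the boundedness of $\sin$ and the Gaussian moments. A secondary point is that a GP draw with an $x_1$-component a.s.\ depends on $x_1$; this follows since the conditional variance of the draw is a nondegenerate Gaussian quadratic form.
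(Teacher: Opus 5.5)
Your proposal is correct and matches the paper's proof step for step: apply \Cref{thm:csharp} with $\psi=0$ (so $\lambda_1=1$), check the structured setup through a non-degenerate Gaussian functional, (M4) by Fubini, the sieve by \Cref{lem:sieve}(b), and $\psi=0$ together with the support of $h_1^*=0.8x_3$ through the RKHS-closure characterization of Gaussian-process supports and the sum-kernel RKHS.

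The only divergence is the density step, which the paper closes by citing that Gaussian RKHSs are dense in $L_2(\mu)$ for every finite Borel measure \citep[Theorem 4.63]{steinwart2008svm}. Your truncation sketch would need tail control of the RKHS approximant, whose size off the cube is not bounded by universality there, rather than tail control of $\sin$. Citing that result, or approximating a compactly supported function by band-limited functions (which lie in the SE RKHS and converge in $L_2(\mu)$ because the Gaussian density is bounded), is the clean way to finish.
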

\begin{proof}
Apply \Cref{thm:csharp} with $\lambda_j=1$.
\begin{itemize}
\item \emph{Structured setup.} For a structure with active set $S$ and $j\in S$, $f-\E[f\mid X_{-j}]$ is a centred Gaussian element of $L_2(P_X)$ with non-zero covariance, because the kernel term in $x_j$ is non-degenerate. Some linear functional of it is therefore a non-degenerate normal variable, so $\Pi_k(N_j)=0$: each structure uses exactly its variables, and \Cref{ass:labels}(a)--(b) hold.
\item \emph{Sieve.} $\Pi_{\omega_0}|_{N_1}$ is a finite mixture of centred GPs whose kernels are sums of one-dimensional
squared-exponential kernels plus at most one product of two of them. A product of two one-dimensional SE kernels is a
two-dimensional SE kernel, so \Cref{lem:sieve}(b) applies.
\item \emph{(M4).} By Fubini, $\E_\Pi\E_Xf(X)^4=3\E_Xk(X,X)^2<\infty$, with $k$ the kernel of the structure, since the kernels are bounded; and $\E f_0^4<\infty$.
\item \emph{$\psi=0$ and the support of $h_1^*$} (cited). The $L_2(P_X)$-support of a centred GP with
$\E\|f\|^2<\infty$ is the closure of its RKHS in $L_2(P_X)$ \citep[Lemma 5.1 and Section 2]{vdvvz2008rkhs}. The RKHS
of a sum kernel is the sum of the RKHSs \citep{aronszajn1950}. Gaussian RKHSs are dense in $L_2(\mu)$ for every finite
Borel measure $\mu$ \citep[Theorem 4.63]{steinwart2008svm}.
\item Hence $\sin(x_1x_2)$ and $0.8x_3$ are $L_2$-limits of RKHS elements of the $(1,2)$-product and $x_3$
components, so $f_0\in\operatorname{supp}\Pi^I$. Similarly, $h_1^*=0.8x_3$ lies in the support of the structure $\{3\}$.
\end{itemize}
\end{proof}

\paragraph{Numerical illustration (not part of any proof).}
The toy prior of \Cref{cor:toy} has 304 structures, enumerated exactly.
\begin{itemize}
\item \emph{Design.} The grid is $n\in\{25,\dots,400\}$ and $\sigma\in\{0.35,\dots,2.5\}$, with 60 replicates per cell and
$\kappa_1^2=0.272$. $t_{50}$ is obtained from a logistic fit of detection on $\log t$. This varies $t$ through $\sigma$ as
well as $n$.
\item \emph{Results.} As $\log(1/\omega)$ increases over $3.9,5.6,7.2,8.8,10.4,12.0,13.6$, $t_{50}$ takes the values $29.2$,
$35.1$, $40.1$, $44.2$, $48.8$, $51.2$, $54.9$. The ratio $t_{50}/\log(1/\omega)$ falls from 7.4 to 4.0 and stays above 2.
\item \emph{Fit.} A fit with the slope fixed at 2, $t_{50}=2\log(1/\omega)+a\sqrt{\log(1/\omega)}+b$, gives RMSE 0.82, against 0.99
for a free straight line. The $\sqrt{\cdot}$ form is a heuristic suggested by the Chebyshev term; the theorem gives no
rate. A fit with free slope is not identified on this range.
\item \emph{\Cref{ass:c6}.} The fraction of data sets with ratio in \Cref{ass:c6} below $\frac12$ is 0.78 for every $\omega$; the invariance in $\omega$ is
guaranteed by the proof and so checks only the code. This pooled figure cannot establish \Cref{ass:c6} for each $n$.
\end{itemize}
For the trained networks on design C of \Cref{sec:experiments}, $x_3,x_4$ (in $\sin(x_3x_4)$) and $x_5$ (in
$\ind\{x_5>0\}x_6$) have similar $\kappa^2$ ($0.275$ and $0.249$) and zero marginal covariance with $Y$. Yet at
$n=800$ and SNR 1, the ablation network without the interaction-only family (\Cref{sec:exp-e8}) detects $x_5$ in 98\% of data sets and $x_3,x_4$ in none. Information alone
(\Cref{prop:detect}(a)) cannot explain this contrast. A difference in informative weight can, because threshold
interactions are covered by the tree family and odd products are not covered by any family of that meta-prior. The controlled prior-weight study of \Cref{sec:exp-e8} tests this mechanism directly.

\subsection{Full statements and proofs for Section~\ref{sec:theory-consistency}}\label{app:proofs-consistency}

\begin{proof}[Proof of \Cref{thm:consistency}]
On the intersection of the \Cref{ass:D1} and \Cref{ass:uniform} events, which has probability tending to one, $\MPM_q=\MPM_\pi=S_0$ once
$\eta_n<\frac12$.
\end{proof}

\paragraph{Setting for \Cref{prop:notransfer}.}
The regression model of \Cref{app:proofs-tools}, with a joint prior $\Pi$ on $(f,\gamma)$, $D\mid(f,\gamma)\sim P_f^n$, and
$\sigma,P_X$ fixed. This idealizes the meta-prior, in which $(P_X,\sigma)$ are also random. The size $(n,p)$ is fixed,
and statements are made about $\Delta_{n,p}$; by \Cref{lem:decomp} they transfer to $\Delta$ with the factor $w(n,p)$.

\begin{assumption}[Sieve and tests]\label{ass:D3}
There are sets $\mathcal F_n$ with $\|f-f_0\|_\infty\le b$ on $\mathcal F_n$, and radii $r_n\to0$ with
$\bar\beta_n:=2\mathcal N_n(r_n)e^{-c_{\rm T}c_bnr_n^2/\sigma^2}\to0$, in the notation of \Cref{lem:tests}.
\end{assumption}

\begin{lemma}[Rates for \Cref{prop:notransfer}(1)]\label{lem:rates}
\begin{enumerate}
\item[(a)] (Nonparametric.) Suppose $-\log\Pi(\|f-f_0\|<r)\ge c\,r^{-a}$ for small $r$, $\Pi(\mathcal F_n^c)\le e^{-n^{c'}}$, and
$\log\mathcal N_n(r)\le C'r^{-a'}$ uniformly in $n$. Then for $r_n=n^{-\upsilon}$ with $0<\upsilon<1/(2+a')$, \Cref{ass:D3} holds and
$\mu_n\le e^{-cn^{a\upsilon}}+e^{-n^{c'}}+2e^{-c_{\rm T}c_bn^{1-2\upsilon}/\sigma^2}$, which is smaller than every power of $n$. The same conclusion holds if the small-ball condition is weakened to $-\log\Pi(\|f-f_0\|<r_n)/\log n\to\infty$, which covers the polylogarithmic small-ball exponents of squared-exponential Gaussian processes.
\item[(b)] (Parametric.) Let $f_0=f_{\theta_0}$ in a family with $\Theta\subset\mathbb R^{d_\theta}$ compact, $|f_\theta|\le b/2$,
$\|f_\theta-f_{\theta'}\|\le L|\theta-\theta'|$, and $\|f_\theta-f_{\theta_0}\|\ge\min(c_0|\theta-\theta_0|,c_1)$. Let the prior
density be bounded by $\bar p_\Theta$, and take $\mathcal F_n=\{f_\theta\}$. Then \Cref{ass:D3} holds for $r_n=n^{-\upsilon}$ with any $\upsilon<\frac12$,
and $\mu_n=O(n^{-\upsilon d_\theta})$. Hence $\Delta_{n,p}=O(n^{-d_\theta/2+\epsilon})$ for every $\epsilon>0$ does not imply \Cref{ass:uniform}.
\end{enumerate}
\end{lemma}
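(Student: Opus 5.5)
For part (a) I would check \Cref{ass:D3} and then bound the three terms of $\mu_n$ one at a time. Take $r_n=n^{-\upsilon}$. The entropy hypothesis gives $\log\mathcal N_n(r_n)\le C'n^{\upsilon a'}$, so
\[
\log\bar\beta_n\le\log2+C'n^{\upsilon a'}-c_{\rm T}c_b\,n^{1-2\upsilon}/\sigma^2 .
\]
This tends to $-\infty$ exactly when $\upsilon a'<1-2\upsilon$, that is, when $\upsilon<1/(2+a')$. That is the stated range.

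For $\mu_n$:
\begin{itemize}
\item The small-ball hypothesis gives $\Pi(\|f-f_0\|<r_n)\le e^{-cn^{a\upsilon}}$ directly.
\item The sieve term is at most $e^{-n^{c'}}$ by assumption.
\item The test term is $2e^{-c_{\rm T}c_bn^{1-2\upsilon}/\sigma^2}$, and $1-2\upsilon>0$.
\end{itemize}
Each of the three is $e^{-n^{\text{positive}}}$, so their sum is smaller than every power of $n$. For the weakened version, write $\Pi(\|f-f_0\|<r_n)=n^{-(-\log\Pi(\cdot)/\log n)}$. The exponent tends to infinity, so this term is also $o(n^{-k})$ for every $k$. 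The other two terms are unchanged, because the $r_n$ and the entropy argument used for \Cref{ass:D3} are the same.

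For part (b) I would take $\mathcal F_n=\{f_\theta:\theta\in\Theta\}$ for every $n$, so that $\Pi(\mathcal F_n^c)=0$. Since $|f_\theta|\le b/2$ and $|f_0|\le b/2$, the sup-norm bound $\|f-f_0\|_\infty\le b$ holds.

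The main step is the covering number of \Cref{lem:tests}(b). For Gaussian regression, $H^2\le\KL$, so
\[
H(p_{f_\theta},p_{f_{\theta'}})\le\frac{\|f_\theta-f_{\theta'}\|}{\sqrt2\,\sigma}\le\frac{L|\theta-\theta'|}{\sqrt2\,\sigma}.
\]
Hence a Euclidean net of $\Theta$ with mesh $\epsilon_r\asymp\sqrt{c_b}\,r/(8L)$ yields a Hellinger cover of radius $\sqrt{c_b}\,r/(8\sigma)$ with arbitrary centres. Halving the radius beforehand lets me move each centre into the target set $\{\|f-f_0\|\ge r\}$, as \Cref{lem:tests} requires. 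The resulting bound is $\mathcal N_n(r)\le C(\operatorname{diam}\Theta/r)^{d_\theta}$, so $\log\mathcal N_n(r_n)=O(d_\theta\upsilon\log n)$. This is dominated by $c_{\rm T}c_bn^{1-2\upsilon}/\sigma^2$ for every $\upsilon<\frac12$, which gives \Cref{ass:D3}.

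For the small-ball mass, once $r_n<c_1$ the identifiability condition gives
\[
\{\|f_\theta-f_0\|<r_n\}\subseteq\{|\theta-\theta_0|<r_n/c_0\},
\]
so $\Pi(\|f-f_0\|<r_n)\le\bar p_\Theta\,v_{d_\theta}(r_n/c_0)^{d_\theta}=O(n^{-\upsilon d_\theta})$. The test term decays faster than any power, so $\mu_n=O(n^{-\upsilon d_\theta})$.

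To finish, I would plug this into \Cref{prop:notransfer}(1) with a fixed $\varpi_n=\frac14$. That gives $\Delta_{n,p}\le\mu_n\log4=O(n^{-\upsilon d_\theta})$, while $\MPM_{q^{(n)}}\ne\MPM_\pi$ with probability tending to one. This requires \Cref{ass:D1} and $\pi_j\in(0,1)$, which are hypotheses of that proposition and are taken as given here. Taking $\upsilon=\frac12-\epsilon/d_\theta$ gives $\Delta_{n,p}=O(n^{-d_\theta/2+\epsilon})$. On the events where \Cref{ass:D1} gives $\MPM_\pi=S_0$, this is compatible with failure of \Cref{ass:uniform}.

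The step I expect to be delicate is the cover: keeping the Lipschitz-to-Hellinger constants consistent with the radius $\sqrt{c_b}\,r/(4\sigma)$ of \Cref{lem:tests}, and handling the centres-in-set requirement. Only the logarithmic growth of the covering number matters, though, so slack in the constants is harmless.
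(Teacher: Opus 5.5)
Your proposal is correct and follows the paper's proof essentially step for step. In (a) it makes the same comparison of $C'n^{a'\upsilon}$ against $c_{\rm T}c_bn^{1-2\upsilon}/\sigma^2$ and substitutes into the three terms of $\mu_n$; in (b) it uses the same identifiability-based small-ball bound, a Lipschitz Euclidean net turned into a Hellinger cover, and the external-to-internal centre conversion at twice the radius (you bound $H$ via $H^2\le\KL$, giving $\|\cdot\|/(\sqrt2\sigma)$ where the paper gets $\|\cdot\|/(2\sigma)$, which is immaterial), and your explicit final step through \Cref{prop:notransfer}(1) with fixed $\varpi_n$ and $\upsilon=\frac12-\epsilon/d_\theta$ only spells out what the paper leaves implicit.
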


\begin{proof}[Proof of \Cref{prop:notransfer} and \Cref{lem:rates}]
Let $\varphi_n$ be the tests of \Cref{lem:tests}(b) at radius $r_n$ on $\mathcal F_n$, and $A_n=\{\varphi_n=0\}$.

\emph{(1), mass of $A_n$.} $P_f^n(A_n)=P_f^n(1-\varphi_n)$. Split $\Pi$ into $\{\|f-f_0\|<r_n\}$, where this is at most 1;
$\mathcal F_n^c$, where it is at most 1; and $\mathcal F_n\cap\{\|f-f_0\|\ge r_n\}$, where \Cref{lem:tests}(b) bounds it by
$2e^{-c_{\rm T}c_bnr_n^2/\sigma^2}$. So $P_{D,n}(A_n)\le\mu_n$, where $P_{D,n}$ is the prior predictive law of $D$ at size $(n,p)$. Also $P_0^n(A_n)\ge1-\bar\beta_n$.

\emph{(1), construction.} Fix $\varpi_n\in(0,\frac12)$. Let $q^{(n)}=\pi$ off $A_n$, and
$q^{(n)}_j=\max(\varpi_n,\min(1-\varpi_n,1-\pi_j))$ for every $j$ on $A_n$; at all other sizes let $q=\pi$. Off $A_n$ the
divergence is 0. On $A_n$, $\operatorname{kl}(\pi_j\|q_j)\le\max(\log\frac1{q_j},\log\frac1{1-q_j})\le\log(1/\varpi_n)$. So
$\Delta_{n,p}\le\mu_n\log(1/\varpi_n)$, and the bound on $\Delta$ follows from \Cref{lem:decomp}. On the \Cref{ass:D1} event no $\pi_j$
equals $\frac12$, and the clip moves every coordinate to the other side of $\frac12$. So
$P_0^n(\MPM_{q^{(n)}}\ne\MPM_\pi)\ge1-\bar\beta_n-P_0^n(\text{\Cref{ass:D1} fails})\to1$.

\emph{\Cref{lem:rates}(a).} $\log\mathcal N_n(r_n)\le C'n^{a'\upsilon}=o(n^{1-2\upsilon})$ when $\upsilon<1/(2+a')$, so $\bar\beta_n\to0$. The
three terms of $\mu_n$ follow by substitution. Under the weakened condition the first term decays faster than every power of $n$, and the other two are unchanged.

\emph{\Cref{lem:rates}(b).} $\|f_\theta-f_0\|<r<c_1$ forces $|\theta-\theta_0|<r/c_0$, so
$\Pi(\|f-f_0\|<r)\le\bar p_\Theta v_{d_\theta}(r/c_0)^{d_\theta}$, with $v_{d_\theta}$ the volume of the unit ball of $\mathbb R^{d_\theta}$. A Euclidean $\delta$-net of $\Theta$ is an $L\delta$-net in $\|\cdot\|$. The
Hellinger distance is at most $\|\cdot\|/(2\sigma)$, since $1-e^{-v}\le v$ in the formula of \Cref{lem:tests}(a). So
$\log\mathcal N_n(r)\le d_\theta\log(C''/r)$ for a constant $C''$ depending only on $\Theta$, $L$ and $c_b$, and an external net converts to an internal one at twice the radius. Finally,
$d_\theta\log(C''n^\upsilon)=o(n^{1-2\upsilon})$ for every $\upsilon<\frac12$.

\emph{(2).} On $E_n$ some $j$ has $q_j$ and $\pi_j$ on opposite sides of $\frac12$, with $|\pi_j-\frac12|\ge\frac12-\eta_n$.
So $|q_j-\pi_j|\ge\frac12-\eta_n$, and Pinsker gives $\operatorname{kl}(\pi_j\|q_j)\ge2(\frac12-\eta_n)^2$. Hence
$\Delta_{n,p}\ge\frac{2(\frac12-\eta_n)^2}p\,P_{D,n}(E_n)$. Gaussian likelihoods are mutually absolutely continuous, so
$dP_{D,n}/dP_0^n=\int\Lambda_f\,d\Pi$ and $P_{D,n}(E_n)=\E_0[\ind_{E_n}\int\Lambda_f\,d\Pi]$. Let
$\mathcal G_n=\{\int\Lambda_f\,d\Pi\ge\Pi(B_n)e^{-(1+C)n\bar\varepsilon_n^2}\}$. By \Cref{lem:evidence},
$P_0^n(\mathcal G_n^c)\le1/(C^2n\bar\varepsilon_n^2)$, so $P_{D,n}(E_n)\ge\Pi(B_n)e^{-(1+C)n\bar\varepsilon_n^2}(P_0^n(E_n)-P_0^n(\mathcal G_n^c))$.
\end{proof}

\paragraph{Interpretation and open problems.}
Parts (1) and (2) bracket the prior-predictive mass of $P_0$-typical data sets: it is at least
$\Pi(B_n)e^{-(1+C)n\bar\varepsilon_n^2}$, and it can be as small as $\mu_n$. We have not shown that the two sides match in the
nonparametric case. For a parametric family with prior density bounded below at $\theta_0$, both sides are
heuristically of BIC order $n^{-d_\theta/2}$ up to $n^\epsilon$; we do not prove this. \Cref{ass:uniform} must therefore come from the
architecture.

\emph{Conjecture (linear spike-and-slab, fixed $p$).} Under a $g$-prior, $\pi_j$ is an explicit function of
$(n^{-1}X^\top X,n^{-1}X^\top y,n^{-1}y^\top y,n)$, which row attention can compute. For an active $j$ the log-odds grow
like $n\kappa_j^2/(2\sigma^2)$. For a null $j$ they are $\frac12\chi^2_1-\frac12\log n+O(1)$, so the network must resolve
statistics at scale $\log n/n$ and represent the $-\frac12\log n$ penalty. That penalty requires $n$ as an input, which
SelectPFN's selection readout receives through its global token. We conjecture that such a construction exists with
width independent of $n$ (of order $2^p$) and depth polylogarithmic in $n$, building on in-context least squares
\citep{bai2023transformers}. Whether training finds such weights is open.

\emph{\Cref{ass:D1} for the mixture meta-prior} is also open. A route through \citet{ghosal2008nonparametric}, with the model index
taken as (family, structure), appears feasible for linear and Gaussian-process families. With several nonparametric
families approximating $f_0$ at similar rates it is harder, because small-ball upper bounds are not known for
neural-network priors.

\section{Additional experimental results}\label{app:experiments}

\paragraph{Gaussian-process subset prior.}\label{app:gp-subset} The exact-posterior study (\Cref{sec:exp-e1}) uses 2{,}000 tasks from the following prior, under which the posterior can be enumerated over all $2^p$ subsets and the hyperparameter grid. Each task has $p$ uniform on $\{6,\dots,12\}$ and $n$ uniform on $\{50,\dots,300\}$. The inclusion indicators are $\gamma_j\mid w\overset{\text{iid}}{\sim}\Bern(w)$ with $w\sim\mathrm{Beta}(1,2)$. The covariates are, with probability $\frac12$ each, independent $N(0,1)$ or Gaussian $\mathrm{AR}(1)$ with correlation $\rho\sim U(0,0.9)$, standardized. Given $\gamma$, $f\sim\mathrm{GP}\big(0,\sigma_f^2\exp(-\|x_S-x'_S\|^2/(2\ell^2))\big)$ on the active coordinates ($f$ is constant if $S=\varnothing$), and $y=f(x)+\varepsilon$ with $\varepsilon\sim N(0,\sigma_n^2)$. The hyperparameters are uniform on the 27-point grid $(\ell,\sigma_f^2,\sigma_n^2)\in\{0.5,1,2\}\times\{0.5,1,2\}\times\{0.05,0.2,0.5\}$.

\paragraph{Reproducibility.} Every number in \Cref{sec:experiments} and this appendix is regenerated from the stored per-data-set results:
\begin{itemize}
\item \texttt{manuscript/make\_tables.py} regenerates the tables and inline numbers;
\item \texttt{manuscript/make\_figures.py} regenerates the figures;
\item \texttt{experiments/report\_1c.py} produces the summary report.
\end{itemize}
Data sets are generated deterministically: each one is seeded by a hash of its design, $n$, $p$, $\rho$, SNR and replicate index. Paired comparisons therefore use identical data for every method.

\paragraph{Implementation choices that affect the comparisons.}
\begin{itemize}
\item \emph{TabPFN.} TabPFN is ranked by permutation importance on a 30\% split of the training data, and only on 10 replicates per setting, because KernelSHAP at $p=500$ is infeasible for thousands of data sets. TabPFN+SHAP (permutation SHAP with $2p+1$ evaluations per explained row, 20 background rows and 30 explained rows) is run only on the $p=20$ slice.
\item \emph{DART.} DART is \texttt{BART::wbart} with \texttt{sparse=TRUE}. Its PIP is the fraction of posterior draws in which the variable is used; variables are ranked by this PIP, with ties broken by the posterior splitting probability.
\item \emph{GP-ARD.} GP-ARD ranks variables by inverse length-scale.
\item \emph{HSIC-Lasso.} HSIC-Lasso uses 30{,}000 subsampled kernel pairs and ranks variables only.
\item \emph{LassoNet.} LassoNet ranks variables by the penalty level at which each one leaves the regularization path. Variables that never leave the path before it stops are ranked by the magnitude of their skip-connection weights. An earlier version left them tied at $+\infty$; those results were discarded.
\item \emph{Sparse BNN.} The sparse Bayesian network$^\dagger$ is fitted by MAP with an annealed spike variance and pruned at a weight inclusion probability of $\tfrac12$. The prior inclusion probability is chosen from $\{0.5,10^{-1},10^{-2},10^{-4},10^{-6},10^{-8}\}$ on a 20\% validation split.
\item \emph{Group-MCP network.} The group-MCP network$^\dagger$ is a simplified version of GCRNet \citep{luo2025gcsnn} (proximal gradient over a 10-value penalty path, no ridge tuning); it selects its penalty on a 20\% validation split.
\item \emph{BART.} BART is \texttt{BART::wbart} with \texttt{sparse=FALSE}, which fixes the splitting probabilities at $1/p$. Variables are ranked by their average share of splits \citep{bleich2014bart}, and the selected set is $\{j:\mathrm{PIP}_j>\tfrac12\}$. An earlier version ranked by the fixed splitting probabilities, which tie across variables; those results were discarded.
\end{itemize}

\begin{table}[h]
\centering\scriptsize
\caption{Main simulation study: all paired AUPRC comparisons (Holm-adjusted over this table).}\label{tab:e3-paired-full}
\begin{adjustbox}{max width=\linewidth}
\begin{tabular}{lrccc}
\toprule
Comparison & matched & $\Delta$AUPRC [95\% CI] & Wilcoxon $p$ & Holm $p$ \\
\midrule
SelectPFN $-$ DART & 7200 & 0.254 [0.250, 0.259] & $<10^{-300}$ & $<10^{-300}$ \\
SelectPFN $-$ BART & 7200 & 0.216 [0.211, 0.220] & $<10^{-300}$ & $<10^{-300}$ \\
SelectPFN $-$ Lasso & 7200 & 0.242 [0.237, 0.246] & $<10^{-300}$ & $<10^{-300}$ \\
SelectPFN $-$ Elastic net & 7200 & 0.263 [0.258, 0.268] & $<10^{-300}$ & $<10^{-300}$ \\
SelectPFN $-$ SCAD & 7200 & 0.240 [0.236, 0.245] & $<10^{-300}$ & $<10^{-300}$ \\
SelectPFN $-$ MCP & 7200 & 0.244 [0.239, 0.249] & $<10^{-300}$ & $<10^{-300}$ \\
SelectPFN $-$ DC-SIS & 7200 & 0.254 [0.250, 0.258] & $<10^{-300}$ & $<10^{-300}$ \\
SelectPFN $-$ HSIC-Lasso & 7200 & 0.233 [0.230, 0.237] & $<10^{-300}$ & $<10^{-300}$ \\
SelectPFN $-$ GP-ARD & 7200 & 0.250 [0.245, 0.256] & $<10^{-300}$ & $<10^{-300}$ \\
SelectPFN $-$ Group-MCP NN$^\dagger$ & 7200 & 0.267 [0.263, 0.272] & $<10^{-300}$ & $<10^{-300}$ \\
SelectPFN $-$ LassoNet & 7200 & 0.211 [0.207, 0.216] & $<10^{-300}$ & $<10^{-300}$ \\
SelectPFN $-$ Sparse BNN$^\dagger$ & 7200 & 0.395 [0.389, 0.401] & $<10^{-300}$ & $<10^{-300}$ \\
SelectPFN $-$ TabPFN + perm.\ imp. & 1440 & 0.147 [0.140, 0.155] & $1.2\times10^{-183}$ & $2.3\times10^{-183}$ \\
SelectPFN-L $-$ DART & 7200 & 0.271 [0.266, 0.277] & $<10^{-300}$ & $<10^{-300}$ \\
SelectPFN-L $-$ BART & 7200 & 0.233 [0.228, 0.237] & $<10^{-300}$ & $<10^{-300}$ \\
SelectPFN-L $-$ Lasso & 7200 & 0.259 [0.254, 0.263] & $<10^{-300}$ & $<10^{-300}$ \\
SelectPFN-L $-$ Elastic net & 7200 & 0.280 [0.276, 0.285] & $<10^{-300}$ & $<10^{-300}$ \\
SelectPFN-L $-$ SCAD & 7200 & 0.257 [0.252, 0.262] & $<10^{-300}$ & $<10^{-300}$ \\
SelectPFN-L $-$ MCP & 7200 & 0.261 [0.256, 0.266] & $<10^{-300}$ & $<10^{-300}$ \\
SelectPFN-L $-$ DC-SIS & 7200 & 0.271 [0.267, 0.275] & $<10^{-300}$ & $<10^{-300}$ \\
SelectPFN-L $-$ HSIC-Lasso & 7200 & 0.250 [0.246, 0.254] & $<10^{-300}$ & $<10^{-300}$ \\
SelectPFN-L $-$ GP-ARD & 7200 & 0.267 [0.262, 0.272] & $<10^{-300}$ & $<10^{-300}$ \\
SelectPFN-L $-$ Group-MCP NN$^\dagger$ & 7200 & 0.284 [0.280, 0.289] & $<10^{-300}$ & $<10^{-300}$ \\
SelectPFN-L $-$ LassoNet & 7200 & 0.228 [0.224, 0.232] & $<10^{-300}$ & $<10^{-300}$ \\
SelectPFN-L $-$ Sparse BNN$^\dagger$ & 7200 & 0.412 [0.406, 0.418] & $<10^{-300}$ & $<10^{-300}$ \\
SelectPFN-L $-$ TabPFN + perm.\ imp. & 1440 & 0.164 [0.156, 0.172] & $1.8\times10^{-190}$ & $5.5\times10^{-190}$ \\
SelectPFN-L $-$ SelectPFN & 7200 & 0.017 [0.016, 0.018] & $6.1\times10^{-133}$ & $6.1\times10^{-133}$ \\
\bottomrule
\end{tabular}
\end{adjustbox}

\end{table}

\begin{table}[h]
\centering\scriptsize\setlength{\tabcolsep}{2.5pt}
\caption{Main simulation study: mean AUPRC by design [95\% bootstrap interval]. Bold: best method in each design.}\label{tab:e3-design}
\begin{adjustbox}{max width=\linewidth}
\begin{tabular}{lccccccc}
\toprule
Design & SelectPFN & SelectPFN-L & DART & Lasso & HSIC-Lasso & Group-MCP NN$^\dagger$ & TabPFN + perm.\ imp. \\
\midrule
A & \textbf{0.89} {\scriptsize[0.88, 0.90]} & \textbf{0.89} {\scriptsize[0.88, 0.90]} & 0.78 {\scriptsize[0.77, 0.79]} & 0.74 {\scriptsize[0.73, 0.74]} & 0.69 {\scriptsize[0.68, 0.70]} & 0.69 {\scriptsize[0.68, 0.70]} & 0.78 {\scriptsize[0.76, 0.81]} \\
B & 0.84 {\scriptsize[0.83, 0.85]} & \textbf{0.85} {\scriptsize[0.84, 0.86]} & 0.55 {\scriptsize[0.53, 0.56]} & 0.47 {\scriptsize[0.46, 0.48]} & 0.65 {\scriptsize[0.64, 0.66]} & 0.43 {\scriptsize[0.42, 0.44]} & 0.67 {\scriptsize[0.64, 0.69]} \\
C & 0.61 {\scriptsize[0.60, 0.62]} & \textbf{0.65} {\scriptsize[0.64, 0.67]} & 0.22 {\scriptsize[0.21, 0.24]} & 0.23 {\scriptsize[0.22, 0.24]} & 0.30 {\scriptsize[0.29, 0.31]} & 0.26 {\scriptsize[0.25, 0.26]} & 0.41 {\scriptsize[0.38, 0.44]} \\
D & 0.68 {\scriptsize[0.67, 0.69]} & \textbf{0.69} {\scriptsize[0.68, 0.70]} & 0.44 {\scriptsize[0.43, 0.45]} & 0.61 {\scriptsize[0.60, 0.62]} & 0.45 {\scriptsize[0.44, 0.46]} & 0.57 {\scriptsize[0.56, 0.58]} & 0.56 {\scriptsize[0.54, 0.58]} \\
\bottomrule
\end{tabular}
\end{adjustbox}

\end{table}

\begin{table}[h]
\centering\footnotesize
\caption{Calibration: maximum reliability deviation by stratum. A stratum is \emph{observable} if it is defined by $n$, $p$ or the dependence type, and \emph{latent} otherwise. \texttt{null} denotes tasks with $|S|=0$.}\label{tab:e2-strata}
\begin{adjustbox}{max width=\linewidth}
\begin{tabular}{llrcc}
\toprule
Stratum & type & pairs & SelectPFN & SelectPFN-L \\
\midrule
dependence=ar1 & obs. & 269,745 & 0.004 & 0.009 \\
dependence=block & obs. & 264,496 & 0.006 & 0.008 \\
dependence=factor & obs. & 256,654 & 0.014 & 0.012 \\
dependence=indep & obs. & 269,412 & 0.005 & 0.007 \\
n=100-299 & obs. & 390,579 & 0.007 & 0.004 \\
n=<100 & obs. & 372,394 & 0.011 & 0.009 \\
n=$\ge$300 & obs. & 297,334 & 0.006 & 0.011 \\
p=101-500 & obs. & 851,714 & 0.002 & 0.007 \\
p=21-100 & obs. & 174,132 & 0.011 & 0.009 \\
p=5-20 & obs. & 34,461 & 0.025 & 0.015 \\
SNR=1-5 & latent & 355,221 & 0.021 & 0.021 \\
SNR=<1 & latent & 324,855 & 0.021 & 0.017 \\
SNR=$\ge$5 & latent & 303,142 & 0.046 & 0.047 \\
SNR=null & latent & 77,089 & 0.183 & 0.174 \\
family=additive & latent & 115,683 & 0.012 & 0.010 \\
family=gp & latent & 101,376 & 0.041 & 0.043 \\
family=interact & latent & 112,441 & 0.026 & 0.022 \\
family=linear & latent & 143,692 & 0.013 & 0.008 \\
family=mlp & latent & 293,708 & 0.032 & 0.028 \\
family=null & latent & 77,089 & 0.183 & 0.174 \\
family=pairwise & latent & 119,322 & 0.013 & 0.014 \\
family=tree & latent & 96,996 & 0.009 & 0.010 \\
$|S|=0$ & latent & 77,089 & 0.183 & 0.174 \\
$|S|=1-3$ & latent & 620,794 & 0.060 & 0.057 \\
$|S|=4-10$ & latent & 187,803 & 0.022 & 0.021 \\
$|S|\ge 10$ & latent & 174,621 & 0.101 & 0.096 \\
\bottomrule
\end{tabular}
\end{adjustbox}

\end{table}

\begin{table}[h]
\centering\scriptsize
\caption{Support recovery: exact recovery, false-negative rate (FNR) and false-positive rate (FPR) of the MPM, by design and $n$.}\label{tab:e6-full}
\begin{adjustbox}{max width=\linewidth}
\begin{tabular}{lrcccccc}
\toprule
Design & $n$ & SelectPFN exact & SelectPFN FNR & SelectPFN FPR & SelectPFN-L exact & SelectPFN-L FNR & SelectPFN-L FPR \\
\midrule
A & 100 & 0.21 & 0.35 & 0.004 & 0.23 & 0.35 & 0.005 \\
A & 300 & 0.62 & 0.07 & 0.003 & 0.65 & 0.07 & 0.002 \\
A & 800 & 0.94 & 0.00 & 0.001 & 0.96 & 0.00 & 0.000 \\
B & 100 & 0.07 & 0.47 & 0.004 & 0.11 & 0.44 & 0.004 \\
B & 300 & 0.51 & 0.11 & 0.002 & 0.53 & 0.10 & 0.002 \\
B & 800 & 0.85 & 0.02 & 0.001 & 0.85 & 0.02 & 0.001 \\
C & 100 & 0.00 & 0.73 & 0.004 & 0.01 & 0.68 & 0.005 \\
C & 300 & 0.07 & 0.38 & 0.004 & 0.15 & 0.32 & 0.004 \\
C & 800 & 0.24 & 0.22 & 0.002 & 0.41 & 0.16 & 0.001 \\
D & 100 & 0.00 & 0.67 & 0.005 & 0.00 & 0.66 & 0.005 \\
D & 300 & 0.02 & 0.35 & 0.006 & 0.04 & 0.34 & 0.005 \\
D & 800 & 0.23 & 0.16 & 0.003 & 0.36 & 0.13 & 0.003 \\
L & 100 & 0.44 & 0.21 & 0.006 & 0.47 & 0.20 & 0.005 \\
L & 300 & 0.82 & 0.01 & 0.002 & 0.87 & 0.01 & 0.002 \\
L & 800 & 0.92 & 0.00 & 0.001 & 0.97 & 0.00 & 0.000 \\
L10 & 100 & 0.01 & 0.51 & 0.013 & 0.02 & 0.49 & 0.012 \\
L10 & 300 & 0.28 & 0.16 & 0.008 & 0.33 & 0.16 & 0.007 \\
L10 & 800 & 0.53 & 0.05 & 0.004 & 0.59 & 0.05 & 0.002 \\
\bottomrule
\end{tabular}
\end{adjustbox}

\end{table}

\begin{table}[h]
\centering\scriptsize
\caption{Support recovery: test RMSE relative to the noise floor, $\mathrm{RMSE}/\sigma$ (geometric mean). BMA: SelectPFN model average. Oracle: SelectPFN's prediction pass given the true $S_0$. OLS oracle: least squares on the true $S_0$. The SelectPFN-L regression in design B appears in the oracle column as well, so it lies in the prediction pass.}\label{tab:e6-rmse}
\begin{adjustbox}{max width=\linewidth}
\begin{tabular}{lrccccc}
\toprule
Design & $n$ & SelectPFN BMA & SelectPFN oracle & SelectPFN-L BMA & SelectPFN-L oracle & OLS oracle \\
\midrule
A & 100 & 1.280 & 1.180 & 1.268 & 1.173 & 1.326 \\
A & 300 & 1.104 & 1.093 & 1.091 & 1.079 & 1.298 \\
A & 800 & 1.056 & 1.056 & 1.043 & 1.043 & 1.290 \\
B & 100 & 1.364 & 1.253 & 1.366 & 1.276 & 1.658 \\
B & 300 & 1.143 & 1.131 & 1.261 & 1.252 & 1.622 \\
B & 800 & 1.091 & 1.089 & 1.268 & 1.266 & 1.600 \\
C & 100 & 1.589 & 1.358 & 1.536 & 1.298 & 1.853 \\
C & 300 & 1.307 & 1.213 & 1.303 & 1.216 & 1.804 \\
C & 800 & 1.219 & 1.157 & 1.229 & 1.187 & 1.785 \\
D & 100 & 1.424 & 1.279 & 1.414 & 1.274 & 1.337 \\
D & 300 & 1.224 & 1.178 & 1.212 & 1.170 & 1.294 \\
D & 800 & 1.141 & 1.127 & 1.127 & 1.117 & 1.281 \\
L & 100 & 1.130 & 1.059 & 1.137 & 1.068 & 1.031 \\
L & 300 & 1.033 & 1.029 & 1.051 & 1.048 & 1.010 \\
L & 800 & 1.020 & 1.020 & 1.038 & 1.038 & 1.004 \\
L10 & 100 & 1.266 & 1.090 & 1.248 & 1.090 & 1.061 \\
L10 & 300 & 1.068 & 1.041 & 1.074 & 1.049 & 1.019 \\
L10 & 800 & 1.031 & 1.026 & 1.044 & 1.039 & 1.007 \\
\bottomrule
\end{tabular}
\end{adjustbox}

\end{table}

\begin{table}[h]
\centering\scriptsize
\caption{Low-dimensional slice ($p=20$): paired AUPRC differences on matched data sets (Holm-adjusted over this table).}\label{tab:lowdim-paired}
\begin{adjustbox}{max width=\linewidth}
\begin{tabular}{lrccc}
\toprule
Comparison & matched & $\Delta$AUPRC [95\% CI] & Wilcoxon $p$ & Holm $p$ \\
\midrule
SelectPFN $-$ LinearPFN & 2400 & 0.050 [0.046, 0.054] & $2.0\times10^{-123}$ & $6.0\times10^{-123}$ \\
SelectPFN $-$ TabPFN + SHAP & 960 & 0.090 [0.083, 0.096] & $3.5\times10^{-106}$ & $3.5\times10^{-106}$ \\
SelectPFN $-$ DART & 2400 & 0.173 [0.166, 0.181] & $2.0\times10^{-283}$ & $1.0\times10^{-282}$ \\
SelectPFN $-$ Lasso & 2400 & 0.197 [0.191, 0.204] & $<10^{-300}$ & $<10^{-300}$ \\
SelectPFN-L $-$ LinearPFN & 2400 & 0.057 [0.053, 0.061] & $9.8\times10^{-163}$ & $3.9\times10^{-162}$ \\
SelectPFN-L $-$ TabPFN + SHAP & 960 & 0.098 [0.091, 0.105] & $4.9\times10^{-108}$ & $9.7\times10^{-108}$ \\
SelectPFN-L $-$ DART & 2400 & 0.181 [0.173, 0.188] & $3.9\times10^{-291}$ & $2.3\times10^{-290}$ \\
SelectPFN-L $-$ Lasso & 2400 & 0.204 [0.198, 0.211] & $<10^{-300}$ & $<10^{-300}$ \\
\bottomrule
\end{tabular}
\end{adjustbox}

\end{table}

\paragraph{Numerical verification of the prior-weight theory (not part of any proof).}
\Cref{fig:t50} uses a structured GP prior on $p=6$ variables (304 structures) with truth $f_0=\sin(x_1x_2)+0.8x_3$, for which posteriors can be computed exactly. Here $\omega$ is the prior weight of the informative structures for $x_1$, the pair term $(1,2)$, and $t_{50}$ is the information index at which $x_1$ is detected half the time.
\begin{itemize}
\item The threshold $t_{50}$ grows linearly in $\log(1/\omega)$, as \Cref{cor:prior-weight} predicts.
\item The ratio $t_{50}/\log(1/\omega)$ falls from 7.4 to 4.0 across the range, consistent with the limiting slope 2 of \Cref{cor:toy} being approached from above.
\item The fitted curve $2L+a\sqrt{L}+b$, with $L=\log(1/\omega)$, is a heuristic. The theorem proves only $t_{50}=(2+o(1))\log(1/\omega)$.
\end{itemize}

\begin{figure}[h]
\centering
\includegraphics[width=0.48\textwidth]{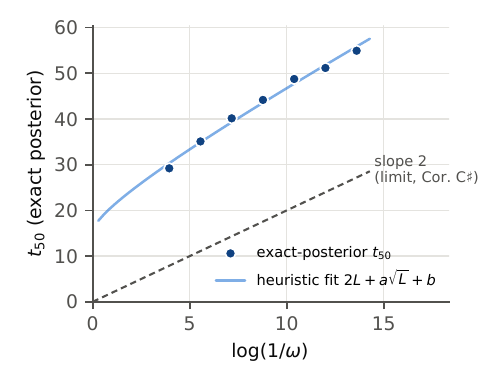}
\caption{Exact-posterior toy problem: detection threshold $t_{50}$ against $\log(1/\omega)$, with the line of slope 2 (the limit in \Cref{cor:toy}) and a heuristic fit. This is a numerical verification and is not used in any proof.}\label{fig:t50}
\end{figure}

\end{document}